\newtheorem{assumption}{Assumption}
\definecolor{norm}{HTML}{d9ead3}
\definecolor{local}{HTML}{cfe2f3}
\definecolor{plan}{HTML}{fff2cc}
\DeclarePairedDelimiter\floor{\lfloor}{\rfloor}
\begin{document}

\title{On Robust Cross Domain Alignment}

\author{\name Anish Chakrabarty \email \\
       \addr Statistics and Mathematics Unit\\
        Indian Statistical Institute, Kolkata
       \AND
       \name Arkaprabha Basu \email \\
       \addr Electronics and Communication Sciences Unit \\
        Indian Statistical Institute, Kolkata
        \AND
       \name Swagatam Das \email swagatam.das@isical.ac.in\\
       \addr Electronics and Communication Sciences Unit \\
        Indian Statistical Institute, Kolkata}

\editor{My editor}

\maketitle

\begin{abstract}
The Gromov-Wasserstein (GW) distance is an effective measure of alignment between distributions supported on distinct ambient spaces. Calculating essentially the mutual departure from isometry, it has found vast usage in domain translation and network analysis. It has long been shown to be vulnerable to contamination in the underlying measures. All efforts to introduce robustness in GW have been inspired by similar techniques in optimal transport (OT), which predominantly advocate partial mass transport or unbalancing. In contrast, the cross-domain alignment problem being fundamentally different from OT, demands specific solutions to tackle diverse applications and contamination regimes. Deriving from robust statistics, we discuss three contextually novel techniques to robustify GW and its variants. For each method, we explore metric properties and robustness guarantees along with their co-dependencies and individual relations with the GW distance. For a comprehensive view, we empirically validate their superior resilience to contamination under real machine learning tasks against state-of-the-art methods.

\end{abstract}

\begin{keywords}
  Robustness, Gromov-Wasserstein distance, Optimal transport
\end{keywords}

\section{Introduction}
Aligning unalike objects (images, networks, point clouds, etc.) based on their geometry remains the crux of machine learning challenges such as style transfer, graph correspondence, and shape matching. The first hint of a statistical measure of discrepancy between two such distinct distributions came in the form of Gromov-Wasserstein distance \citep{memoli2011gromov}, quickly finding continual application in data alignment \citep{demetci2022scot}, clustering \citep{chowdhury2021generalized, gong2022gromov}, and dimensionality reduction \citep{clark2024generalized}. Emerging as a $L^{p}$-relaxation of the Gromov-Hausdorff distance, it calculates the minimal distortion between replicates from distributions $\mu$ and $\nu$, themselves defined on spaces $\mathcal{X}$ and $\mathcal{Y}$ respectively. In other words,
\begin{equation*}
    \inf_{\pi} \norm{d_{X}(x,x') - d_{Y}(y,y')}_{L^{p}(\pi \otimes \pi)},
\end{equation*}
where $\pi$ denotes a coupling between distributions $(\mu,\nu)$ and the spaces are endowed with the respective metrics $d_{X}$ and $d_{Y}$\footnote{The metric space $(\mathcal{X},d_{X})$ coupled with the measure $\mu$ defines a \textit{metric measure} (mm) space.}, $p\geq 1$. Resembling the Kantorovich formulation in OT, it immediately inspires a Monge-like upper bound to the distance (namely, Gromov-Monge (GM)), given by
\begin{equation*}
    \inf_{\phi} \norm{d_{X}(x,x') - d_{Y}(\phi(x),\phi(x'))}_{L^{p}(\mu \otimes \mu)},
\end{equation*}
where the infimum is rather over measure preserving maps $\phi : \textrm{supp}(\mu) \rightarrow \textrm{supp}(\nu)$. In both cases, the underlying cost, measuring the extent of departure from strong isometry, differentiates the problem from mass transportation. It is rather the susceptibility to contamination that unites alignment and OT. The value of GW can be arbitrarily perturbed only by implanting an arbitrarily `outlying' observation. However, the defense against such outliers in the context of alignment turns out to be much more nuanced compared to OT (see, Section \ref{rob_GW_OT}). Its diverse applications, coupled with the objective of aligning geometries, demand unique solutions in different contexts. The very formulation of GW also hints towards several avenues to search for a robust formulation. Existing approaches to robustify GW and its progenies all draw insight from similar techniques in OT. While relaxing the optimization following \textit{partial} \citep{chapel2020partial} or \textit{unbalanced} \citep{sejourne2021unbalanced} OT fosters capable solutions, it is perhaps unfounded to expect them to serve every context. For example, relieving the set of feasible couplings from meeting the marginal constraints also takes away metric properties. Moreover, despite showing that image-to-image (I2I) translation architectures such as CycleGAN (Cycle-consistent Generative Adversarial Networks) are indeed special cases of GM-like distances \citep{zhang2022cycle}, current literature does not provide a pathway to accurate generation under contaminated source data. This work analyzes three principal means of robustifying the cross-domain alignment problem. This way, besides meeting diverse requirements of tasks related to alignment, we address the larger landscape of contamination. We refer the reader to Section \ref{rob_GW_OT} for a detailed discussion outline.

\textbf{Contributions: } The key takeaways of our study are as follows.  
\begin{itemize}
    \item The first method introduces penalization to large distortions while calculating GW in the spirit of Tukey and Huber. In context, it gives rise to relaxed GW distances that preserve topologies and usual metric properties (Proposition \ref{met}). We show that GW, under Tukey's penalization, becomes robust to Huber contamination (Proposition \ref{Prop_up}) and promotes resilience to underlying distributions (Corollary \ref{corrniet}). Provably, it extends the Robust OT (ROBOT) to distributions supported on distinct mm spaces. We provide algorithms to calculate the Tukey and Huber GW distances, which in applications such as shape matching under contamination exhibit superior performance compared to existing techniques. We also suggest data-dependent parameter tuning schemes that produce precise levels of robustness. 
    \item Offering a finer control over extreme pairwise distances from either space, the second method rather deploys relaxed metrics that preserve topology. The resultant \textit{locally} robust distance, surrogate to GW, becomes a lower bound to the first formulation (lemma \ref{ineqtukloc}). We show that solving the same boils down to calculating an OT between truncated observations from $\mu$ and $\nu$ (Proposition \ref{propIGW}). We also show that the notion can be generalized to define robust distances over probabilistic mm spaces. This eventually leads to a framework that offers denoising capability to Image translation models assuming a shared latent space.
    \item The third approach regularizes the optimization based on `clean' proxy distributions to achieve robust measure-preserving maps. We show its connection to robust OT formulations (lemma \ref{dual_3}) and the sample complexity such optimizations demand under contamination. The resultant optimization generalizes the notion of partial alignment, as plans corresponding to the latter can be shown to be an amenable candidate of ours. Based on the same, we propose RRGM, a novel image-to-image translation architecture that exhibits superior denoising capacity while generating hand-written digit images under contamination.
\end{itemize}

\section{Related Work}
Recovering unperturbed transport plans under contamination poses a significant challenge in cross-domain alignment. In most treatments based on GW (and Sturm's GW), the \textit{unbalanced} relaxation to the class of underlying couplings is used to ensure robustness \citep{sejourne2021unbalanced, de2022entropy}. As a result, the `denoised' solutions in both spaces become merely positive Radon measures. In case the marginal constraints are imposed using the TV norm (instead of Csisz\'{a}r or $\phi$-divergence), the idea boils down to transporting only a fraction of the mass under the distributions \citep{chapel2020partial, bai2024partialgromovwassersteinmetric}. UCOOT's \citep{tran2023unbalanced} robust formulation to deter Huber contamination utilizes a similar relaxation additionally on the feature spaces of the domains. While such a mass-trimming approach penalizes outliers, the resultant distance suffers significant deviations from its balanced counterpart \citep{nguyen2023unbalanced}. Moreover, the alignment problem, fundamentally different from mass transportation, raises more unanswered questions. For example, in most image-to-image translation problems, only one domain runs the risk of contamination. Unbalancing turns out ill-posed to handle such a semi-constrained robustification \citep{le2021robust}. Also, the landscape of contamination models stretches way beyond that of Huber's, which the current unbalanced techniques are solely equipped to deal with. The most recent technique offering robustness in GW alignment \citep{kong2024outlier} reinforces unbalancing, based on \textit{inlying} surrogate distributions over graphs. This essentially being an upper bound to UGW, carries all the aforementioned issues. As such, a detailed exploration of robust alignment between distinct domains subjective of diverse underlying tasks remains overdue.     

\textbf{Notations: }Given a Polish space $\mathcal{X}$, we denote by $\mathcal{P}(\mathcal{X})$ and $\mathcal{M}(\mathcal{X})$ the set of Borel probability measures and signed Radon measures defined on it respectively. For $p \in [1, \infty)$, measures $\rho \in \mathcal{P}(\mathcal{X})$ with finite $p$-th absolute moment, $M_{p}(\rho) \coloneqq \int \norm{x}^{p} \rho(dx) < \infty$ form the space $\mathcal{P}_{p}(\mathcal{X})$. The Total Variation (TV) norm of $\rho \in \mathcal{M}(\mathcal{X})$ is denoted as $\norm{\rho}_{\textrm{TV}} \coloneqq  \frac{1}{2}|\rho| (\mathcal{X)}$. The space of measurable functions $f: \mathcal{X} \rightarrow \mathbb{R}$ satisfying $\norm{f}_{L^{p}(\rho)} \coloneqq (\int |f|^{p} d\rho)^{1/p} < \infty$ is denoted by $L^{p}(\rho)$. The pushforward of $\rho \in \mathcal{P}(\mathcal{X})$ by a measurable map $f$ is defined as $f_{\#}\mu = \mu(f^{-1})$. We define the uniform norm as $\norm{f}_{\infty} \coloneqq \sup_{x \in \mathcal{X}} |f(x)|$. The notation $\circledcirc$ denotes the tensor-matrix multiplication, whereas $\odot$ and $\oslash$ signify element-wise product and division in matrices respectively. The notation used for the Frobenius norm is $\norm{\cdot}_{\textrm{F}}$. Given $a,b \in \mathbb{R}$, we write $a \vee b = \max\{a,b\}$ and $a \wedge b = \min\{a,b\}$. The uniform $\varepsilon$-covering number of a class of functions $\mathcal{F}$, based on $n$ points $\{x_{i}\}^{n}_{i=1}$, with respect to (w.r.t.) the metric $d(f,f') \coloneqq \max_{i \in [n]}|f(x_{i}) - f'(x_{i})|$ is denoted as $N_{\infty}(\varepsilon, \mathcal{F},n)$. We also write inequalities, suppressing absolute constants, as $\lesssim$ and $\gtrsim$. In case $a \lesssim b$, we equivalently write $a = O(b)$. Given that the previous relation holds for a logarithmic function of $b$, we write $a = \Tilde{O}(b)$. If there exists an (strong) isometry between the spaces $\mathcal{X}$ and $\mathcal{Y}$, we write $\mathcal{X} \cong \mathcal{Y}$.  

\section{Preliminaries}
Before introducing our robust formulations, we review the basics of transportation and alignment between metric measure spaces. 

\textbf{Optimal Transport and Entropic Regularization: } Given a Polish space $\mathcal{X}$ endowed with a metric $d(\cdot, \cdot)$, the OT problem between $\mu, \nu \in \mathcal{P}(\mathcal{X})$ is defined as 
\begin{equation} \label{OT}
    \textrm{OT}_{c}(\mu, \nu) \coloneqq \inf_{\pi \in \Pi(\mu, \nu)} \int_{\mathcal{X} \times \mathcal{X}} c(x,y) d\pi(x,y),
\end{equation}
where $c : \mathcal{X} \times \mathcal{X} \rightarrow \mathbb{R_{+}}$ is the lower semi-continuous transportation cost and $\Pi(\mu, \nu) = \{\pi \in \mathcal{P}(\mathcal{X} \times \mathcal{X}) : \pi(\cdot \times \mathcal{X}) = \mu, \pi(\mathcal{X} \times \cdot) = \nu\}$ is the set of couplings between $\mu$ and $\nu$. (\ref{OT}) is the Kantorovich formulation and can be shown to possess a minimizer. Given $c(x,y) = d(x,y)^{p}$, $p \geq 1$ it defines the $p$-Wasserstein metric $W_{p}(\mu, \nu) \coloneqq [\textrm{OT}_{c}(\mu, \nu)]^{1/p}$ on the space $\mathcal{P}_{p}(\mathcal{X})$ and metrizes weak convergence \citep{villani2009optimal}. OT is a typical linear program and it is an entropic regularization that makes it strictly convex. Given a parameter $\varepsilon > 0$, reinforcing the marginal constraint under the Kullback-Leibler (KL) divergence yields the primal Entropic OT (EOT) problem:
\begin{equation} \label{EOT}
    \textrm{EOT}^{\varepsilon}_{c}(\mu, \nu) \coloneqq \textrm{OT}_{c}(\mu, \nu) + \varepsilon d_{\textrm{KL}}(\pi | \mu \otimes \nu).
\end{equation}
Unlike its unregularized counterpart, the convergence rate corresponding to the empirical EOT cost (towards the population limit) becomes devoid of $\textrm{dim}(\mathcal{X})$\citep{mena2019statistical}. Entropic regularization also enables computing $\delta$-approximate estimates of the transport cost in $\Tilde{O}(n^{2}/\delta)$ time \citep{blanchet2024towards}. Despite computational and theoretical prowess, observe that the EOT cost (also OT) and corresponding potentials can be arbitrarily perturbed if either $\mu$ or $\nu$ (or both) is perturbed the slightest in TV.

\textbf{The Gromov-Wasserstein distance: } As mentioned before, we call the triplet $(\mathcal{X}, d, \mu)$ a metric measure space, where $\mu$ has full support, i.e. $\textrm{supp}(\mu) = \mathcal{X}$. While it is technically convenient to define GW as an extension of OT between two distinct mm spaces, we differentiate them based on their origins in mass transportation and object alignment. Given two Polish mm spaces $(\mathcal{X}, d_{X}, \mu)$ and $(\mathcal{Y}, d_{Y}, \nu)$, the GW distance in all its generality is defined as 
\begin{equation} \label{GW}
     d_{\textrm{GW}}(\mu, \nu) \coloneqq \Big(\inf_{\pi \in \Pi(\mu, \nu)} \int_{\mathcal{X} \times \mathcal{Y}} \int_{\mathcal{X} \times \mathcal{Y}} [\Lambda(d_{X}(x,x'),d_{Y}(y,y'))]^{p} d \pi \otimes \pi (x,y,x',y')\Big)^{\frac{1}{p}},
\end{equation}
where $\Lambda: \mathbb{R}_{+} \times \mathbb{R}_{+} \rightarrow \mathbb{R}_{+}$ is a pseudometric measuring the extent of distortion, $1 \leq p < \infty$. We also sparingly write $d_{\textrm{GW}}(X,Y)$. Observe that (\ref{GW}) is essentially the $L^{p}$-relaxation of the Gromov-Hausdorff distance (\citet{memoli2011gromov}, Section 4.1), a similar operation to what leads to the Kantorovich-Rubinstein formulation in OT ($W_{p}$). Now, considering $\Lambda = \Lambda_{q}(a,b) \coloneqq \frac{1}{2}\abs{a^{q}-b^{q}}^{1/q}$, $q < \infty$ one can recover the $(p,q)$-GW distance \citep{arya2024gromov}, which induces a metric over the class of strongly isomorphic\footnote{$(\mathcal{X}, d_{X}, \mu)$ and $(\mathcal{Y}, d_{Y}, \nu)$ are said to be \textit{strongly isomorphic} if there exists a measure preserving isometry $\phi : \mathcal{X} \rightarrow \mathcal{Y}$ (i.e. $\phi_{\#}\mu = \nu$ and $d_{Y}(\phi(x),\phi(x')) = d_{X}(x,x')$) which is also a bijection.} mm spaces with finite $p$-diameter, i.e. $\int_{\mathcal{X} \times \mathcal{X}} [d_{X}(x,x')]^{p} \mu_{X}(dx) \mu_{X}(dx') < \infty$. Different choices of $d_{X}, d_{Y}$ also lead to interesting variants of the GW distance, e.g. considering $d_{X} = \langle \cdot, \cdot \rangle$ (with $p=2, q=1$) and $\norm{\cdot -\cdot}$ (with $p=4, q=2$) makes the corresponding distances invariant to orthogonal transformations and translations respectively. \citet{bauer2024z} proposes $\mathcal{Z}$-GW distances by further generalizing $d_{X}: \mathcal{X} \times \mathcal{X} \rightarrow \mathcal{Z}$ (also $d_{Y}$) as network kernels, given any complete and separable metric space $\mathcal{Z}$. Despite enjoying structural maneuverability, unlike OT, GW distances pose a quadratic assignment problem (QAP) and are in general NP-hard to compute. While it is still feasible to determine the exact value of $(4,2)$-GW between spheres \citep{arya2024gromov}, given samples from arbitrary distributions one must resort to entropic regularization to ensure computational tractability \citep{scetbon2022linear, rioux2023entropic}. Following the setup in (\ref{GW}), the Entropic GW (EGW) distance is defined as
\begin{equation} \label{EGW}
    \textrm{EGW}^{\varepsilon}(\mu, \nu) \coloneqq d_{\textrm{GW}}(\mu, \nu) + \varepsilon d_{\textrm{KL}}(\pi | \mu \otimes \nu).
\end{equation}
This becomes particularly useful in case both the mm spaces are Euclidean with $(\mu, \nu) \in \mathcal{P}_{4}(\mathcal{X}) \times \mathcal{P}_{4}(\mathcal{Y})$, as it ties the underlying $(4,2)$-EGW\footnote{Essentially the square root of the $(4,2)$-EGW distance. A more convenient way of realizing it is to assume $\Lambda_{q}(a,b) \coloneqq \frac{1}{2}\abs{a^{q}-b^{q}}$ instead, under which the parameters become $p=2,q=2$ \citep{rioux2023entropic}.} optimization to EOT with an altered cost. However, the issue regarding uncontrolled perturbation under contamination still persists. 

\section{Robustifying Gromov-Wasserstein} \label{rob_GW_OT}
Formulating a mechanism that forestalls the effects of contamination in GW is more elusive compared to OT. Firstly, there is the context of the underlying optimization itself. In OT, the treatment ensuring robustness differs based on the task at hand. For example, cases that prioritize a divergence (e.g. generative models requiring a robust loss) usually call for a robust surrogate to $W_{p}$ only. As a result, relaxations such as unbalancing or mass truncation (equivalently, addition) are often appropriate \citep{nietert2022outlier, nietert2023robust}. The goal in such cases lies mainly to recover $W_{p}(\mu, \nu)$ based on a robust proxy $W^{\epsilon}_{p}(\hat{\mu}_{n}, \hat{\nu}_{n})$, i.e. $|{W_{p}(\mu, \nu) - W^{\epsilon}_{p}(\hat{\mu}_{n}, \hat{\nu}_{n})}| \rightarrow 0$ in probability, where $\epsilon > 0$ denotes the radius of robustness. This can also be achieved by defining a margin on the extent of allowable perturbation while choosing the surrogate \citep{raghvendra2024robpar}. While such formulations preserve sample complexity, the resultant transport plans ($\pi^{*}_{\epsilon}$) do not carry robust marginals that are also necessarily probability distributions. This becomes crucial when one is also interested in finding a robust measure preserving map ($T_{\epsilon} : \textrm{supp}(\mu) \rightarrow \textrm{supp}(\nu)$) between the two distributions in the sense of Monge. A surrogate loss ignoring the marginal constraints is bound to result in a map whose deviation from the oracle ($T^{*}$) has a non-vanishing lower bound (i.e. there exists $\tau_{\epsilon}>0$ such that $\norm{T_{\epsilon} - T^{*}} \gtrsim \tau_{\epsilon}$). In this regard, \citet{balaji2020robust, le2021robust} (ROT) maintains a balanced transport by optimizing over proxy distributions instead. While statistical properties of the resulting plans remain unexamined, KL-enforced regularization makes them tractable with comparable efficacy ($\Tilde{O}(n^{2}/\delta)$, where $\delta >0$ is the error margin and $n$ is the sample size.). 

Due to its role in alignment (e.g. shape matching) and the involvement of two distinct mm spaces, one needs to be more cautious in approaching the GW problem using similar techniques. Observe that, $d_{\textrm{GW}}(\mu, \nu)$ calculates the optimal $p$-distortion of a coupling between $\mu$ and $\nu$ (i.e. ${||\Lambda(d_{X}(x,x'),d_{Y}(y,y'))||}_{L^{p}( \pi \otimes \pi)}$). As such, it may become \textit{extremely fragile} (\citet{blumberg2014robust}, Proposition 4.3) and sustain uncontrolled fluctuation if a single observation from either space is perturbed heavily. Unbalancing readily limits mass allocation to such outliers, resulting in a robust surrogate to $d_{\textrm{GW}}$. However, unlike OT, it also risks sacrificing geometric information contained solely in pairwise distances. This creates significant misalignment between the resultant plan and the isometric benchmark. The later technique of penalizing the distributions ($\mu$ and $\nu$) themselves based on robust proxies also needs additional consideration. For example, when only one of them is contaminated (semi-constrained), the focus must lie on robustifying the pairwise distances, which is not the same as making the law robust. The problem is further confounded if near-isometric robust Monge maps \citep{dumont2024existence}(bidirectional in case of Reverse Gromov-Monge \citep{hur2024reversible}) are sought.


\begin{figure}[ht]
  \begin{minipage}[c]{0.6\textwidth}
    \includegraphics[width=\textwidth]{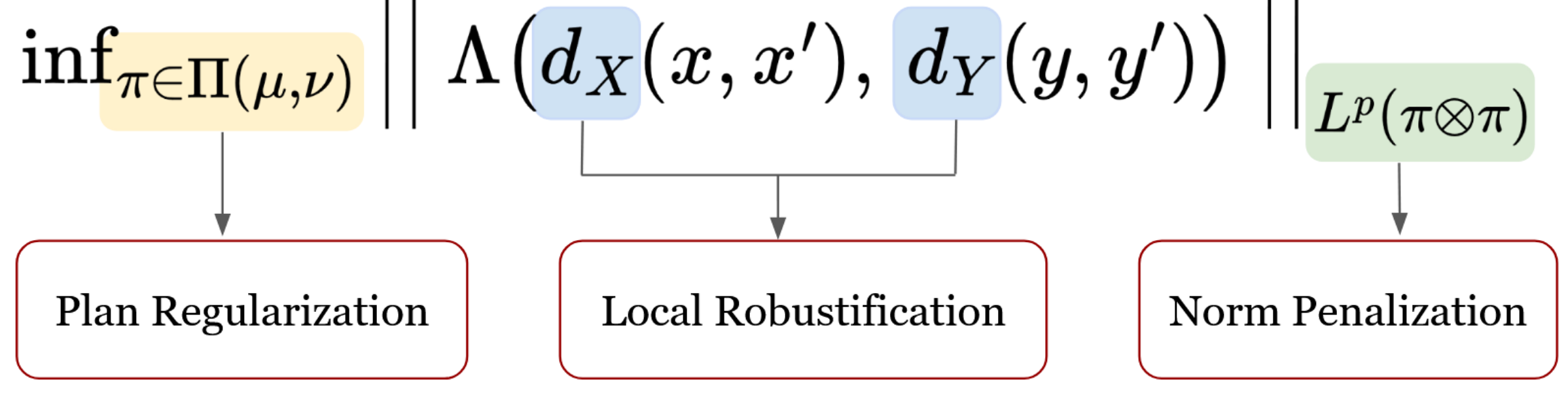}
  \end{minipage}\hfill
  \begin{minipage}[c]{0.39\textwidth}
    \caption{Three disjoint approaches leading to outlier-robustness of different degrees in Gromov-Wasserstein formulations. The forthcoming discussion follows the course: Section \ref{Part1} (\textcolor{norm}{$\blacksquare$}), Section \ref{Part2} (\textcolor{local}{$\blacksquare$}), and Section \ref{Part3} (\textcolor{plan}{$\blacksquare$}).} 
    \label{Fig:GW_rob}
  \end{minipage}
\end{figure}

On top of these existing ambiguities, the participating mm spaces might suffer different types of contamination with varying intensity. As such, the spectrum of contamination models (Wasserstein, Huber's $\epsilon$-contamination, etc.) needs to be kept in mind. Based on the varied demands of cross-domain alignment problems, we identify \textit{three} solutions to the robustification problem in GW. The \textit{primary} and most immediate way is to arrest the extreme distortions, given pairwise observations $(x,y), (x',y')$. In the process, we introduce Tukey's and Huber's relaxation into the GW metric (Section \ref{Part1}). The \textit{second} solution stems from robust surrogates to the metrics $d_{X}, d_{Y}$ that limit fluctuations at their nascency while calculating pairwise distances (Section \ref{Part2}). Based on the nature of the mm spaces, this approach may propose either structural or optimization-based robustness. In search of robust translation maps, the \textit{third} method advocates relaxing the optimization itself by regularizing the set of plans $\Pi(\mu,\nu)$ (Section \ref{Part3}). Intriguing relations to robust OT formulations and duality emerge from this approach.

\subsection{Norm Penalization: Towards Huber's Gromov-Wasserstein} \label{Part1}
The most recognized contamination model in robust statistics \citep{huber1964robust} assumes the existence of an arbitrary distribution $\mu_{c} \in \mathcal{P}(\mathcal{X})$ from which outliers originate. Under the same, it is equivalent to tossing a coin with $\epsilon \in (0,1)$ probability in favor of $\mu_{c}$ during each independent draw from $\mu$. However, given a clear separation between the two distributions, it implies vastly outlying observations in the sample (see Figure \ref{fig:cat_heart_data}(c)). Now, let us recall the definition (\ref{GW}),
\begin{equation*}
    d_{\textrm{GW}}(\mu, \nu) = \inf_{\pi \in \Pi(\mu, \nu)} \frac{1}{2} \norm{\Lambda_{X,Y}}_{L^{p}(\pi \otimes \pi)},
\end{equation*}
where $\Lambda_{X,Y} = 2\Lambda_{1}(d_{X},d_{Y})$ in particular. In a sample problem, the $l_{p}$ norm calculating the departure form isometry is sensitive to unusually large (or small) observations. We employ relaxed $l_{p}$ norms to curb the effects of such outliers in distortion. Tukey's relaxation \citep{clarkson2019dimensionality} is the most intuitive in this regard. 
\begin{definition}[Tukey loss function]
    Given a threshold $\tau \geq 0$, the $p$-Tukey loss function for $p \in [1, \infty)$ is defined as 
    \begin{equation*}
        \mathcal{T}_{p}(x) \coloneqq \begin{cases}
                \abs{x}^{p} & \textrm{if }\abs{x} \leq \tau\\
                {\tau}^{p} & \textrm{otherwise}.
            \end{cases}
    \end{equation*}
\end{definition}
Observe that, it is polynomially bounded above\footnote{An increasing function $f: \mathbb{R}_{+} \rightarrow \mathbb{R}_{+}$ is said to be polynomially bounded above with degree $p$ if $\frac{f(b)}{f(a)} \lesssim \big(\frac{b}{a}\big)^{p}$.} with degree $p$. It also induces the corresponding `norm', $\norm{f}_{\mathcal{T}_{p}(\mu)} = \big(\int_{\mathcal{X}} \mathcal{T}_{p}(f(x)) d\mu(x)\big)^{1/p}$, given $f \in L^{p}(\mu)$. Though dependant on the parameter $\tau$, it enables one to define
\begin{equation} \label{TGW}
    \norm{\Lambda_{X,Y}}_{\mathcal{T}_{p}(\pi \otimes \pi)} \coloneqq \Big(\int_{\mathcal{X} \times \mathcal{Y}} \int_{\mathcal{X} \times \mathcal{Y}} \mathcal{T}_{p}\abs{d_{X}(x,x') - d_{Y}(y,y')} d \pi \otimes \pi (x,y,x',y')\Big)^{\frac{1}{p}}.
\end{equation}
We call the quantity $d_{\textrm{TGW}}(\mu, \nu) \coloneqq \inf_{\pi \in \Pi(\mu, \nu)} \frac{1}{2}\norm{\Lambda_{X,Y}}_{\mathcal{T}_{p}(\pi \otimes \pi)}$, \textit{Tukey's GW} (TGW, or specifically $(p,\tau)$-TGW). Clearly, it is a lower bound to the corresponding $d_{\textrm{GW}}(\mu, \nu)$ and $d_{\textrm{TGW}} \nearrow d_{\textrm{GW}}$ as $\tau \nearrow \infty$. It also carries some major properties of the original GW distance (\citet{memoli2011gromov}, Theorem 5.1). The non-negativity and symmetry are obvious and given $\mathcal{X} \cong \mathcal{Y}$, it becomes $0$. Conversely, given a threshold $\tau > 0$, $d_{\textrm{TGW}}(\mu, \nu)=0$ implies that the mm spaces $\mathcal{X}$ and $\mathcal{Y}$ are isometric, for $p \in [1,\infty)$. Here, we only define the loss for $p < \infty$ since given $\tau > 1$, the essential norm at $p = \infty$ spoils the thresholding and eventually the robustness. Our goal also lies in avoiding large deviations between TGW from its perturbed GW benchmark, caused solely due to large thresholds. As such, it is not in our interest to check the continuity of $\norm{\cdot}_{\mathcal{T}_{p}(\pi)}$ w.r.t. the weak convergence of $\pi$. Hence, a feasible coupling $\pi \in \Pi(\mu, \nu)$ that realizes the infimum only exists for $p \in [1,\infty)$ (\citet{memoli2011gromov}, Corollary 10.1\footnote{The only modification required in the first part of the proof of Corollary 10.1 is the following: Define $f: [0,M] \rightarrow \mathbb{R}_{+}$ as $t \mapsto t^{p}$ if $t \leq \tau$, and $\tau^{p}$ if $\tau < t \leq M$, which also becomes Lipschitz with constant $pM^{p-1}$. Observe that, $\tau \geq M = \textrm{diam}(\mathcal{X}) \vee \textrm{diam}(\mathcal{Y})$ implies the exact function as in \citet{memoli2011gromov}.}). Also, the triangle inequality exists owing to the same property for $\norm{\cdot}_{\mathcal{T}_{p}}$. 

\begin{proposition}[Metric properties] \label{met}
    Let $(\mathcal{X}, d_{X}, \mu_{X})$, $(\mathcal{Y}, d_{Y}, \mu_{Y})$ and $(\mathcal{Z}, d_{Z}, \mu_{Z})$ denote arbitrary Polish mm spaces. Then,
    \begin{enumerate}[(i)]
        \item (Triangle inequality)
        \begin{equation*}
            d_{\textrm{TGW}}(X,Y) \leq d_{\textrm{TGW}}(X,Z) + d_{\textrm{TGW}}(Z,Y). 
        \end{equation*}
        \item Given $\gamma \in \Pi(\mu_{X}, \mu_{Y})$, for all $0 \leq p \leq p' < \infty$
        \begin{equation*}
            \norm{\Lambda_{X,Y}}_{\mathcal{T}_{p}(\gamma \otimes \gamma)} \leq \norm{\Lambda_{X,Y}}_{\mathcal{T}_{p'}(\gamma \otimes \gamma)}.
        \end{equation*}
        \item For $\tau' > \tau \geq 0$, we have $(p,\tau)$-TGW $\leq (p,\tau')$-TGW.
    \end{enumerate}
\end{proposition}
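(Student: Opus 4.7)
My overall strategy is to reformulate the Tukey ``norm'' as an ordinary $L^p$-norm applied to a clipped function. Since $\mathcal{T}_p(x) = \min(|x|, \tau)^p$, one has
\begin{equation*}
\norm{f}_{\mathcal{T}_p(\mu)} = \Big(\int \min(|f|, \tau)^p\, d\mu\Big)^{1/p} = \norm{\min(|f|, \tau)}_{L^p(\mu)}.
\end{equation*}
This makes $\norm{\cdot}_{\mathcal{T}_p(\mu)}$ behave enough like a genuine $L^p$-norm to mirror the classical Gromov-Wasserstein triangle inequality argument of \citet{memoli2011gromov} for part (i), while parts (ii) and (iii) reduce to short consequences.

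For (i), I first establish a Minkowski-type inequality for $\norm{\cdot}_{\mathcal{T}_p(\mu)}$. The key pointwise fact is subadditivity of the clipped absolute value, $\min(|a+b|, \tau) \leq \min(|a|, \tau) + \min(|b|, \tau)$ for all $a, b \in \mathbb{R}$, which I would verify by cases on whether $|a|, |b|$ lie below or above $\tau$. Combined with ordinary $L^p$ Minkowski applied to the clipped functions, this yields $\norm{f+g}_{\mathcal{T}_p(\mu)} \leq \norm{f}_{\mathcal{T}_p(\mu)} + \norm{g}_{\mathcal{T}_p(\mu)}$. Next, fix $\epsilon > 0$ and pick $\pi_{XZ} \in \Pi(\mu_X, \mu_Z)$ and $\pi_{ZY} \in \Pi(\mu_Z, \mu_Y)$ that are $\epsilon$-optimal for $d_{\textrm{TGW}}(X,Z)$ and $d_{\textrm{TGW}}(Z,Y)$ respectively. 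Via the gluing lemma, assemble these into a three-way measure $\pi_{XZY}$ on $\mathcal{X} \times \mathcal{Z} \times \mathcal{Y}$ with the prescribed $(X,Z)$- and $(Z,Y)$-marginals, and let $\pi_{XY}$ be its $(X,Y)$-marginal, so $\pi_{XY} \in \Pi(\mu_X, \mu_Y)$. Applying the pointwise triangle inequality
\begin{equation*}
|d_X(x,x') - d_Y(y,y')| \leq |d_X(x,x') - d_Z(z,z')| + |d_Z(z,z') - d_Y(y,y')|
\end{equation*}
inside $\norm{\cdot}_{\mathcal{T}_p(\pi_{XZY}\otimes\pi_{XZY})}$ via the Minkowski inequality just established, and noting that each of $\Lambda_{X,Z}, \Lambda_{Z,Y}, \Lambda_{X,Y}$ depends only on the relevant pair of coordinates so its norm under $\pi_{XZY}\otimes\pi_{XZY}$ equals its norm under the appropriate two-variable marginal, one obtains
\begin{equation*}
\norm{\Lambda_{X,Y}}_{\mathcal{T}_p(\pi_{XY}\otimes\pi_{XY})} \leq \norm{\Lambda_{X,Z}}_{\mathcal{T}_p(\pi_{XZ}\otimes\pi_{XZ})} + \norm{\Lambda_{Z,Y}}_{\mathcal{T}_p(\pi_{ZY}\otimes\pi_{ZY})}.
\end{equation*}
Halving, lower-bounding the left side by the infimum defining $d_{\textrm{TGW}}(X,Y)$, and sending $\epsilon \to 0$ closes the argument.

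Parts (ii) and (iii) follow almost directly from the reformulation. For (ii), $\min(|\Lambda_{X,Y}|, \tau)$ is a fixed nonnegative function independent of $p$, so since $\gamma \otimes \gamma$ is a probability measure, the standard monotonicity of $L^p$ norms in $p$ (via Jensen's inequality) yields $\norm{\Lambda_{X,Y}}_{\mathcal{T}_p(\gamma\otimes\gamma)} \leq \norm{\Lambda_{X,Y}}_{\mathcal{T}_{p'}(\gamma\otimes\gamma)}$ for $p \leq p'$. For (iii), writing $\mathcal{T}_p^\tau$ for the Tukey loss with threshold $\tau$, the pointwise bound $\mathcal{T}_p^\tau(x) \leq \mathcal{T}_p^{\tau'}(x)$ for $\tau \leq \tau'$ integrates against any $\pi\otimes\pi$, and the infimum over $\pi \in \Pi(\mu, \nu)$ preserves the ordering. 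The main obstacle I anticipate lies entirely in step (i): subadditivity of $\norm{\cdot}_{\mathcal{T}_p(\mu)}$ is not transparent from the capped $|x|^p$ definition and must be unlocked via the clipped-norm reformulation above; once this piece and the marginalization bookkeeping for the three-way measure are in hand, the rest of the proof follows the classical M\'emoli blueprint without further complications.
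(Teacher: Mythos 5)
Your proposal is correct and follows essentially the same route as the paper: the clipped-norm identity $\norm{f}_{\mathcal{T}_p(\mu)}=\norm{\min(\abs{f},\tau)}_{L^p(\mu)}$ is the same subadditivity fact the paper invokes (there phrased as subadditivity of $\mathcal{T}_p^{1/p}$ and run through a H\"older-based Minkowski argument rather than classical $L^p$ Minkowski on the clipped functions), and the gluing-lemma construction, marginalization bookkeeping, and $\varepsilon\to 0$ limit for (i), as well as the $L^p$-monotonicity argument for (ii) and pointwise monotonicity in $\tau$ for (iii), match the paper's proof.
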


\begin{proof}
    \textit{(i)} First, let us prove the triangle inequality for the `norm' $\norm{\cdot}_{\mathcal{T}_{p}}$ (not a norm following the formal definition). Assume $f, g \in L^{p}(\mu)$, where $\mu \in \mathcal{P}(\mathcal{X})$. Now,
    \begin{align}
        \norm{f+g}^{p}_{\mathcal{T}_{p}(\mu)} &= \int_{\mathcal{X}} \mathcal{T}_{p}(f(x) + g(x)) d\mu(x) \nonumber \\
        &= \int_{\mathcal{X}} \Big(\mathcal{T}_{p}(f + g)^{\frac{1}{p}}\Big) \Big(\mathcal{T}_{p}(f + g)^{\frac{p-1}{p}}\Big) d\mu \nonumber \\ &\leq \int_{\mathcal{X}} \Big(\mathcal{T}^{\frac{1}{p}}_{p}(f) + \mathcal{T}^{\frac{1}{p}}_{p}(g)\Big) \Big(\mathcal{T}_{p}(f + g)^{\frac{p-1}{p}}\Big) d\mu \label{subadd} \\ &= \int_{\mathcal{X}} \mathcal{T}^{\frac{1}{p}}_{p}(f)\Big(\mathcal{T}_{p}(f + g)^{\frac{p-1}{p}}\Big) d\mu + \int_{\mathcal{X}} \mathcal{T}^{\frac{1}{p}}_{p}(g)\Big(\mathcal{T}_{p}(f + g)^{\frac{p-1}{p}}\Big) d\mu \nonumber \\ &\leq  \Big[\Big(\int_{\mathcal{X}} \mathcal{T}_{p}(f) d\mu\Big)^{\frac{1}{p}} + \Big(\int_{\mathcal{X}} \mathcal{T}_{p}(g) d\mu\Big)^{\frac{1}{p}}\Big]\Big(\int_{\mathcal{X}} \mathcal{T}_{p}(f + g) d\mu\Big)^{\frac{p-1}{p}} \label{Holder} \\ &= \Big(\norm{f}_{\mathcal{T}_{p}(\mu)} + \norm{g}_{\mathcal{T}_{p}(\mu)}\Big) \norm{f+g}^{p-1}_{\mathcal{T}_{p}(\mu)}, \nonumber
    \end{align}
    where inequality (\ref{subadd}) is due to the subadditivity of $\mathcal{T}^{\frac{1}{p}}_{p}$ (\citet{musco2021active}, Lemma C.12) and H\"{o}lder inequality implies (\ref{Holder}). In the process, we assume that the norm itself is not $0$.
    
    Given arbitrary $\varepsilon > 0$, one can obtain feasible optimal couplings $\pi_{XZ} \in \Pi(\mu_{X}, \mu_{Z})$ and $\pi_{ZY} \in \Pi(\mu_{Z}, \mu_{Y})$ that satisfy the following
    \begin{equation} \label{opt}
        \frac{1}{2}\norm{\Lambda_{X,Z}}_{\mathcal{T}_{p}(\pi_{XZ} \otimes \pi_{XZ})} + \frac{1}{2}\norm{\Lambda_{Z,Y}}_{\mathcal{T}_{p}(\pi_{ZY} \otimes \pi_{ZY})} = d_{\textrm{TGW}}(Z, Y) + d_{\textrm{TGW}}(X, Z) + 2\varepsilon.
    \end{equation}
    The Gluing lemma ensures the existence of $\pi \in \mathcal{P}(\mathcal{X} \times \mathcal{Y} \times \mathcal{Z})$ with marginals $\pi_{XZ}$ on $\mathcal{X} \times \mathcal{Z}$ and $\pi_{ZY}$ on $\mathcal{Z} \times \mathcal{Y}$. Also, let $\pi_{XY}$ be the marginal of $\pi$ on $\mathcal{X} \times \mathcal{Y}$. Observe that, given $x,x' \in \mathcal{X}$, $y,y' \in \mathcal{Y}$ and $z,z' \in \mathcal{Z}$, the triangle inequality of $\Lambda_{1}$ implies
    \begin{equation*}
        \Lambda_{1}(d_{X}(x,x'),d_{Y}(y,y')) \leq \Lambda_{1}(d_{X}(x,x'),d_{Z}(z,z')) + \Lambda_{1}(d_{Z}(z,z'),d_{Y}(y,y'))
    \end{equation*}
    $(\pi \otimes \pi)$-a.e. Now,
    \begin{align}
        d_{\textrm{TGW}}(X,Y) &\leq \frac{1}{2}\norm{\Lambda_{X,Y}}_{\mathcal{T}_{p}(\pi_{XY} \otimes \pi_{XY})} = \frac{1}{2}\norm{\Lambda_{X,Y}}_{\mathcal{T}_{p}(\pi \otimes \pi)} \nonumber \\ &\leq \frac{1}{2}\norm{\Lambda_{X,Z} + \Lambda_{Z,Y}}_{\mathcal{T}_{p}(\pi \otimes \pi)} \nonumber \\ &\leq \frac{1}{2}\norm{\Lambda_{X,Z}}_{\mathcal{T}_{p}(\pi \otimes \pi)} + \frac{1}{2}\norm{\Lambda_{Z,Y}}_{\mathcal{T}_{p}(\pi \otimes \pi)} \label{tpineq}\\ &= \frac{1}{2}\norm{\Lambda_{X,Z}}_{\mathcal{T}_{p}(\pi_{XZ} \otimes \pi_{XZ})} + \frac{1}{2}\norm{\Lambda_{Z,Y}}_{\mathcal{T}_{p}(\pi_{ZY} \otimes \pi_{ZY})} \nonumber \\ &= d_{\textrm{TGW}}(Z, Y) + d_{\textrm{TGW}}(X, Z) + 2\varepsilon \nonumber,
    \end{align}
    where (\ref{tpineq}) is due to the triangle inequality of the Tukey norm. Since the choice of $\varepsilon > 0$ is arbitrary, this completes the proof.
    
    \textit{(ii)} The proof of this part follows from the monotonicity of $L^{p}$ norms. Observe that
    \begin{align*}
        \norm{\Lambda_{X,Y}}_{\mathcal{T}_{p}(\gamma \otimes \gamma)} = \norm{\mathcal{T}^{\frac{1}{p}}_{p}(\Lambda_{X,Y})}_{L^{p}(\gamma \otimes \gamma)} \leq \norm{\mathcal{T}^{\frac{1}{p'}}_{p'}(\Lambda_{X,Y})}_{L^{p'}(\gamma \otimes \gamma)} = \norm{\Lambda_{X,Y}}_{\mathcal{T}_{p'}(\gamma \otimes \gamma)}.
    \end{align*}
\end{proof}
The properties altogether make $d_{\textrm{TGW}}$ a pseudometric over the collection of isomorphism classes of mm spaces. It also limits the corruption due to Huber's $\epsilon$-contamination. For simplicity, let us assume only one of the distributions is contaminated, say $\mu$. As such, one now has observations from $\mu' = (1-\epsilon)\mu + \epsilon \mu_{c}$ instead, where $\mu_{c} \in \mathcal{P}_{p}(\mathcal{X})$. Under this setup, the following result gives the extent to which the population level loss can propagate.

\begin{proposition} \label{Prop_up}
    Given that the two distributions $\mu, \nu$ belong to the same mm space (i.e. they are namely $(\mathcal{X}, d_{X}, \mu)$ and $(\mathcal{X}, d_{X}, \nu)$), if $\mu$ suffers Huber's $\epsilon$-contamination, we have
    \begin{equation*}
        d_{\textrm{TGW}}(\mu',\nu) \leq \tau \epsilon^{\frac{1}{p}} + W_{\mathcal{T}_{p}}(\mu, \nu),
    \end{equation*}
    where $W_{\mathcal{T}_{p}} \coloneqq \inf_{\Pi} \norm{d_{X}}_{\mathcal{T}_{p}}$ is the $\textrm{OT}_{d_{X}}$ distance under the $p$-Tukey norm. 
\end{proposition}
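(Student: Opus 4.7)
The plan is to invoke the triangle inequality for $d_{\textrm{TGW}}$ established in Proposition \ref{met}(i), splitting the left-hand side as
\begin{equation*}
d_{\textrm{TGW}}(\mu',\nu) \leq d_{\textrm{TGW}}(\mu',\mu) + d_{\textrm{TGW}}(\mu,\nu),
\end{equation*}
and then bounding each summand separately. The first summand is the contamination-specific piece, which we expect to yield $\tau\epsilon^{1/p}$ through a judiciously chosen coupling; the second is the clean baseline discrepancy, which we expect to be dominated by the Tukey--Wasserstein cost $W_{\mathcal{T}_p}(\mu,\nu)$.

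For the baseline bound $d_{\textrm{TGW}}(\mu,\nu) \leq W_{\mathcal{T}_p}(\mu,\nu)$, fix any $\pi \in \Pi(\mu,\nu)$. The reverse metric-triangle inequality in $(\mathcal{X},d_X)$ gives pointwise $|d_X(x,x') - d_X(y,y')| \leq d_X(x,y) + d_X(x',y')$, i.e. $\Lambda_{X,X} \leq d_X(x,y) + d_X(x',y')$. Since $\mathcal{T}_p$ is monotonically nondecreasing on $\mathbb{R}_+$, monotonicity of $\|\cdot\|_{\mathcal{T}_p}$ transfers this pointwise bound to
\begin{equation*}
\|\Lambda_{X,X}\|_{\mathcal{T}_p(\pi \otimes \pi)} \leq \|d_X(x,y) + d_X(x',y')\|_{\mathcal{T}_p(\pi \otimes \pi)}.
\end{equation*}
Next, the Tukey-norm triangle inequality (already established in the proof of Proposition \ref{met}(i)) splits the right-hand side into two terms, each reducing by marginalization of $\pi \otimes \pi$ to $\|d_X\|_{\mathcal{T}_p(\pi)}$. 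The resulting factor of $2$ exactly cancels the $\tfrac{1}{2}$ in the definition of TGW, so taking the infimum over $\pi \in \Pi(\mu,\nu)$ produces $d_{\textrm{TGW}}(\mu,\nu) \leq W_{\mathcal{T}_p}(\mu,\nu)$.

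For the perturbation bound $d_{\textrm{TGW}}(\mu',\mu) \leq \tau\epsilon^{1/p}$, we construct the explicit coupling
\begin{equation*}
\pi^{\ast} \coloneqq (1-\epsilon)(\mathrm{id},\mathrm{id})_{\#}\mu + \epsilon (\mu_c \otimes \mu) \in \Pi(\mu',\mu),
\end{equation*}
whose first marginal is $(1-\epsilon)\mu + \epsilon\mu_c = \mu'$ and second marginal is $\mu$. Expanding $\pi^{\ast} \otimes \pi^{\ast}$ into four regimes---both draws clean (mass $(1-\epsilon)^2$), exactly one contaminated (mass $2(1-\epsilon)\epsilon$), or both contaminated (mass $\epsilon^2$)---we note that on the clean-clean regime $x=y$ and $x'=y'$, so $\Lambda_{X,X}$ vanishes identically. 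On the remaining regimes, the truncation in $\mathcal{T}_p$ caps the integrand at $\tau^p$, yielding
\begin{equation*}
\|\Lambda_{X,X}\|_{\mathcal{T}_p(\pi^{\ast} \otimes \pi^{\ast})}^{p} \leq \tau^p \bigl(1-(1-\epsilon)^2\bigr) \leq 2\tau^p\epsilon.
\end{equation*}
Taking the $p$-th root and halving delivers $d_{\textrm{TGW}}(\mu',\mu) \leq 2^{1/p-1}\tau\epsilon^{1/p} \leq \tau\epsilon^{1/p}$, where the last inequality uses $p \geq 1$. Combining this with the baseline bound through the triangle inequality finishes the argument.

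The main subtlety lies in the baseline step: one must carefully invoke the Tukey-norm triangle inequality on the function $(x,y,x',y') \mapsto d_X(x,y) + d_X(x',y')$ and verify that the two resulting integrals reduce, via marginalization of $\pi \otimes \pi$, to $\|d_X\|_{\mathcal{T}_p(\pi)}$ each, generating precisely the factor of $2$ required to cancel the $\tfrac{1}{2}$ prefactor in TGW. The contamination step is comparatively clean, but conceptually it relies crucially on the truncation intrinsic to $\mathcal{T}_p$ to absorb the arbitrarily extreme mass of $\mu_c$ into the uniform bound $\tau^p$, which is the very feature of Tukey's loss that affords robustness.
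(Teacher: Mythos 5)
Your proof is correct, but it is organized differently from the paper's. The paper never invokes the TGW triangle inequality of Proposition \ref{met}(i); instead it first shows $d_{\textrm{TGW}}(\mu',\nu)\leq W_{\mathcal{T}_p}(\mu',\nu)$ for the \emph{contaminated} pair (the same pointwise bound $|d_X(x,x')-d_X(y,y')|\leq d_X(x,y)+d_X(x',y')$ plus the Tukey-norm triangle inequality that you use for the clean pair), and then works entirely at the level of the Tukey-OT cost: triangle inequality of $W_{\mathcal{T}_p}$, the mass-cancellation property $\textrm{OT}_{d_X}(\alpha,\beta)\leq \textrm{OT}_{d_X}(\alpha-\alpha\wedge\beta,\beta-\alpha\wedge\beta)$ to reduce $W_{\mathcal{T}_p}(\mu',\mu)$ to $W_{\mathcal{T}_p}(\epsilon\mu,\epsilon\mu_c)$, homogeneity to extract $\epsilon^{1/p}$, and finally the cap $\mathcal{T}_p\leq\tau^p$. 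You split instead at the TGW level via Proposition \ref{met}(i) and dispose of $d_{\textrm{TGW}}(\mu',\mu)$ with the explicit coupling $(1-\epsilon)(\mathrm{id},\mathrm{id})_{\#}\mu+\epsilon(\mu_c\otimes\mu)$, which is more self-contained: you avoid having to justify the triangle inequality for $W_{\mathcal{T}_p}$ and the OT mass-cancellation step, and you even get the marginally sharper constant $2^{1/p-1}\tau\epsilon^{1/p}$ for the contamination term. What your route does not deliver is the intermediate inequality $\norm{2\Lambda_1}_{\mathcal{T}_p(\pi\otimes\pi)}\leq 2\norm{d_X}_{\mathcal{T}_p(\pi)}$ applied to the contaminated pair, i.e.\ $d_{\textrm{TGW}}(\mu',\nu)\leq W_{\mathcal{T}_p}(\mu',\nu)$, which the paper records as (\ref{tri_norm}) and reuses later (in the resilience discussion and the ROBOT comparison); your clean-pair version of it recovers only $d_{\textrm{TGW}}(\mu,\nu)\leq W_{\mathcal{T}_p}(\mu,\nu)$.
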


The result also implies that TGW can pose as a provably robust estimate to GW, i.e. $d_{\textrm{TGW}}(\mu',\nu) - d_{\textrm{GW}}(\mu,\nu) \leq \tau\epsilon^{1/p}$, if the distributions are supported on the same space. This observation is instrumental in robust shape-matching and generation problems. Proposition \ref{Prop_up} also has interesting consequences under specific assumptions on the contamination model. For example, if there exists $k \geq 1$ such that $W_{p}(\mu, \mu_{c}) = k W_{p}(\mu, \nu)$ \citep{balaji2020robust}, we have
\begin{equation*}
    d_{\textrm{TGW}}(\mu',\nu) \leq (1 + k\epsilon^{\frac{1}{p}})W_{p}(\mu, \nu),
\end{equation*}
due to the trivial upper bound on $W_{\mathcal{T}_{p}}$ by the $p$-Wasserstein distance.

\begin{remark}[Resilience under TGW]
    The inequality (\ref{tri_norm}) also enables one to discuss the `resilience' \footnote{$\mu \in \mathcal{P}(\mathcal{X})$ is said to be $(\rho, \varepsilon)$-resilient w.r.t. the divergence $d$ if $\forall \tilde{\mu} \in \mathcal{P}(\mathcal{X})$ such that $\tilde{\mu} \leq \frac{1}{1-\varepsilon}\mu$, we have $d(\mu, \tilde{\mu}) \leq \rho$, where $0 \leq \varepsilon < 1$ and $\rho \geq 0$.} of a distribution $\mu$ under $d_{\textrm{TGW}}$. It boils down to checking the maximum change in $W_{\mathcal{T}_{p}}$ if a $\varepsilon$-fraction of mass under $\mu$ is deleted and renormalized to form $\tilde{\mu} \leq \frac{1}{1-\varepsilon}\mu$. \citet{nietert2023robust} show that $\abs{\mathbb{E}_{\tilde{\mu}}[d_{X}(Y, x_{0})^{p}] - \mathbb{E}_{\mu}[d_{X}(Z, x_{0})^{p}]} \leq \rho$ implies resilience under $W_{p}$, given $x_{0} \in \mathcal{X}$ (Lemma 11). In the process, it is sufficient to assume that $\mu$ has finite $p$-moment. Observe that, 
    \begin{align*}
        &\abs{\mathbb{E}_{\tilde{\mu}}[\mathcal{T}_{p}(d_{X}(Y, x_{0}))] - \mathbb{E}_{\mu}[\mathcal{T}_{p}(d_{X}(Z, x_{0}))]} \leq \mathbb{E}_{Y \sim \tilde{\mu}, Z \sim \mu}\abs{\mathcal{T}_{p}(d_{X}(Y, x_{0})) - \mathcal{T}_{p}(d_{X}(Z, x_{0}))} \\ &\leq \mathbb{E}_{Y \sim \tilde{\mu}, Z \sim \mu}\abs{d_{X}(Y, x_{0})^{p} - d_{X}(Z, x_{0})^{p}} \\ &\leq \sqrt{\textrm{Var}_{\tilde{\mu}}[d_{X}(Y, x_{0})^{p}]} + \sqrt{\textrm{Var}_{\mu}[d_{X}(Z, x_{0})^{p}]} + \abs{\mathbb{E}_{\tilde{\mu}}[d_{X}(Y, x_{0})^{p}] - \mathbb{E}_{\mu}[d_{X}(Z, x_{0})^{p}]},
    \end{align*}
    where the first inequality is due to Jensen's inequality. As such, given that the variances are finite, mean resilience also implies the same under $\mathcal{T}_{p}$. However, in case $\tilde{\mu}$ results in a vastly distinct variance, the associated resilience bound on $W_{\mathcal{T}_{p}}$ becomes weak. 
    \begin{corollary}[\citet{nietert2023robust}] \label{corrniet}
        Given $Z \sim \mu$ and $x_{0} \in \mathcal{X}$, let $\mathbb{E}_{\mu}[d_{X}(Z, x_{0})^{p}] \leq \sigma^{p}$ for some $\sigma \geq 0$. If $\mathcal{T}_{p}(d_{X}(Z, x_{0}))$ is $(\rho, \varepsilon)$-resilient in mean, then $\mu$ is $\Big(2 \Big((\rho^{\frac{1}{p}} + \varepsilon^{\frac{1}{p}}(\sigma \wedge \tau)) \wedge \varepsilon^{\frac{1}{p}}\tau\Big), \varepsilon\Big)$-resilient w.r.t. $W_{\mathcal{T}_{p}}$.
    \end{corollary}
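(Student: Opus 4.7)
The plan is to mirror the Wasserstein-resilience argument of Nietert et al.\ (Lemma 11) in the $\mathcal{T}_p$-cost setting, combining the norm triangle inequality from Proposition \ref{met} with the hard pointwise ceiling $\mathcal{T}_p \le \tau^p$ to produce two competing bounds on $W_{\mathcal{T}_p}(\mu, \tilde{\mu})$ whose minimum delivers the claim.

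For arbitrary $\tilde{\mu} \le \mu/(1-\varepsilon)$, I would start by decomposing $\mu = (1-\varepsilon)\tilde{\mu} + \varepsilon \bar{\mu}$, where $\bar{\mu} \in \mathcal{P}(\mathcal{X})$ is the renormalized deleted mass and, crucially, satisfies $\bar{\mu} \le \mu/\varepsilon$. Then construct the coupling
\[
\pi \;=\; (1-\varepsilon)\, (\mathrm{id}, \mathrm{id})_{\#} \tilde{\mu} \;+\; \varepsilon\, (\bar{\mu} \otimes \tilde{\mu}) \;\in\; \Pi(\mu, \tilde{\mu}),
\]
whose diagonal block pays zero cost, yielding
\[
W_{\mathcal{T}_p}(\mu, \tilde{\mu}) \;\le\; \varepsilon^{1/p}\, \|d_X(x, y)\|_{\mathcal{T}_p(\bar{\mu} \otimes \tilde{\mu})}.
\]

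Next, apply the triangle inequality for $\|\cdot\|_{\mathcal{T}_p}$ through the anchor $x_0$,
\[
\|d_X(x, y)\|_{\mathcal{T}_p(\bar{\mu} \otimes \tilde{\mu})} \;\le\; \|d_X(x, x_0)\|_{\mathcal{T}_p(\bar{\mu})} \,+\, \|d_X(x_0, y)\|_{\mathcal{T}_p(\tilde{\mu})},
\]
and bound each factor in two complementary ways. The pointwise ceiling $\mathcal{T}_p \le \tau^p$ gives $\le \tau$ on both factors, producing the trivial branch $2\varepsilon^{1/p}\tau$. For the refined branch, $\bar{\mu} \le \mu/\varepsilon$ combined with $\mathbb{E}_\mu[\mathcal{T}_p(d_X(\cdot, x_0))] \le (\sigma \wedge \tau)^p$ (since $\mathcal{T}_p \le (\cdot)^p$ and $\mathcal{T}_p \le \tau^p$) yields $\|d_X(x, x_0)\|_{\mathcal{T}_p(\bar{\mu})} \le (\sigma \wedge \tau)/\varepsilon^{1/p}$, while the assumed mean resilience of $\mathcal{T}_p(d_X(Z, x_0))$ supplies $\mathbb{E}_{\tilde{\mu}}[\mathcal{T}_p(d_X(\cdot, x_0))] \le (\sigma \wedge \tau)^p + \rho$, hence $\|d_X(x_0, y)\|_{\mathcal{T}_p(\tilde{\mu})} \le (\sigma \wedge \tau) + \rho^{1/p}$ via subadditivity of $t \mapsto t^{1/p}$.

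Assembling the refined estimates collapses, after absorbing constants, to the form $2(\rho^{1/p} + \varepsilon^{1/p}(\sigma \wedge \tau))$; taking the minimum with the trivial branch $2\varepsilon^{1/p}\tau$ closes the proof. The main delicacy will be the final bookkeeping: the $\varepsilon^{1/p}$ prefactor from the coupling and the $\varepsilon^{-1/p}$ emerging from the $\bar{\mu}$-moment bound must cancel on the correct factor, so that the $\varepsilon^{1/p}$ scaling remains tethered to $(\sigma \wedge \tau)$ in the final form. This requires pairing the pointwise and moment-based estimates judiciously, and may demand a slightly more careful coupling than the natural one described above to ensure that stray $(\sigma \wedge \tau)$ terms without the $\varepsilon^{1/p}$ prefactor do not survive.
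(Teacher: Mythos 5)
Your plan follows the paper's opening moves exactly (decomposition $\mu = (1-\varepsilon)\tilde\mu + \varepsilon\bar\mu$, $\varepsilon^{1/p}$ prefactor from the coupling, triangle inequality through $\delta_{x_0}$, pointwise $\mathcal{T}_p \le \tau^p$ for the $2\varepsilon^{1/p}\tau$ branch). The gap is in the refined branch, and it is a genuine one that a more careful coupling alone will not close. By bounding $\|d_X(\cdot,x_0)\|_{\mathcal{T}_p(\bar\mu)}$ using only the crude domination $\bar\mu \le \mu/\varepsilon$, you pick up a factor $\varepsilon^{-1/p}(\sigma\wedge\tau)$, which then annihilates the $\varepsilon^{1/p}$ coupling prefactor and leaves a stray $(\sigma\wedge\tau)$ with no small parameter attached. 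The resulting bound $(\sigma\wedge\tau) + \varepsilon^{1/p}(\sigma\wedge\tau) + \varepsilon^{1/p}\rho^{1/p}$ is strictly weaker than the claimed $2(\rho^{1/p} + \varepsilon^{1/p}(\sigma\wedge\tau))$ and does not vanish as $\varepsilon \to 0$; you correctly flag the symptom but misdiagnose it as a coupling issue.

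The fix, and what the paper actually does, is to \emph{not} treat the two legs of the triangle inequality asymmetrically. Both $\bar\mu$ and $\tilde\mu$ are dominated by $\frac{1}{1-\varepsilon'}\mu$ with $\varepsilon' = \varepsilon \vee (1-\varepsilon)$, so the paper bounds both $\|d_X(\cdot,x_0)\|_{\mathcal{T}_p(\bar\mu)}$ and $\|d_X(\cdot,x_0)\|_{\mathcal{T}_p(\tilde\mu)}$ by a single supremum $\sup_{\alpha \le \frac{1}{1-\varepsilon'}\mu}\|d_X(\cdot,x_0)\|_{\mathcal{T}_p(\alpha)}$, and then uses the resilience hypothesis (suitably scaled to the level $\varepsilon'$, which produces the factor $1 \vee \frac{1-\varepsilon}{\varepsilon}$ in front of $\rho$) together with the moment bound $\mathbb{E}_\mu[\mathcal{T}_p] \le \sigma^p\wedge\tau^p$ to control that supremum by $\big[(1\vee\tfrac{1-\varepsilon}{\varepsilon})\rho + \sigma^p\wedge\tau^p\big]^{1/p}$. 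After subadditivity of $t\mapsto t^{1/p}$, the $\varepsilon^{1/p}$ prefactor cancels the rescaling factor on $\rho^{1/p}$ but stays attached to $(\sigma\wedge\tau)$, yielding exactly $2(\rho^{1/p} + \varepsilon^{1/p}(\sigma\wedge\tau))$. In short: replace the moment-only bound on $\bar\mu$ by a resilience-based bound, symmetric with the one you already use for $\tilde\mu$.
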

    Observe that, the bound is non-trivial only when $\sigma \leq \tau$. While a user-defined $\tau$ ensures resilience for distributions having thicker tails, the result hints towards distributions ($\mu$) that imply sharper resilience bounds. One immediate example is the class of sub-Gaussian distributions. 

    \begin{proof}
    For any $\tilde{\mu} \leq \frac{1}{1-\varepsilon}\mu$, by choosing an appropriate $\beta \in \mathcal{P}(\mathcal{X})$ we can write $\mu = (1-\varepsilon)\tilde{\mu} + \varepsilon \beta$. Now,
    \begin{align}
        W_{\mathcal{T}_{p}}(\mu, \tilde{\mu}) &\leq \varepsilon^{\frac{1}{p}} (W_{\mathcal{T}_{p}}(\beta, \delta_{x_{0}}) + W_{\mathcal{T}_{p}}(\delta_{x_{0}}, \tilde{\mu})) \nonumber \\ & = \varepsilon^{\frac{1}{p}} \Big[\norm{d_{X}(Y, x_{0})}_{\mathcal{T}_{p}(\beta)} + \norm{d_{X}(Y, x_{0})}_{\mathcal{T}_{p}(\tilde{\mu})} \Big] \nonumber \\ &\leq 2\varepsilon^{\frac{1}{p}} \sup_{\alpha \in \mathcal{P}(\mathcal{X}), \alpha \leq \frac{1}{1-\varepsilon'}\mu} \norm{d_{X}(Y, x_{0})}_{\mathcal{T}_{p}(\alpha)} \label{last1},
    \end{align}
    where $\varepsilon' \coloneqq \varepsilon \vee (1-\varepsilon)$. Given that the expectation is finite, the definition of $\mathcal{T}_{p}$ implies $\mathbb{E}_{\alpha}[\mathcal{T}_{p}(d_{X}(Y, x_{0}))] \leq \tau^{p}$. Moreover,
    \begin{align}
        \mathbb{E}_{\alpha}[\mathcal{T}_{p}(d_{X}(Y, x_{0}))] &\leq \abs{\mathbb{E}_{\alpha}[\mathcal{T}_{p}(d_{X}(Y, x_{0}))] - \mathbb{E}_{\mu}[\mathcal{T}_{p}(d_{X}(Z, x_{0}))]} + \mathbb{E}_{\mu}[\mathcal{T}_{p}(d_{X}(Z, x_{0}))] \nonumber \\ &\leq \Big(1 \vee \frac{1-\varepsilon}{\varepsilon}\Big)\rho + (\sigma^{p} \wedge \tau^{p}) \label{last},
    \end{align}
    where the first inequality is due to triangle inequality. As such, combining inequality (\ref{last}) with the bound (\ref{last1}) yields,
    \begin{equation*}
        W_{\mathcal{T}_{p}}(\mu, \tilde{\mu}) \leq 2 \Big((\rho^{\frac{1}{p}} + \varepsilon^{\frac{1}{p}}(\sigma \wedge \tau)) \wedge \varepsilon^{\frac{1}{p}}\tau\Big).
    \end{equation*}
    \end{proof}
\end{remark}
\begin{remark}[Lower bound to ROBOT] \label{remROB}
    TGW has a surprising relation to existing robust efforts in OT, following from Proposition \ref{Prop_up}. Given a transportation cost $c : \mathcal{X} \times \mathcal{X} \rightarrow \mathbb{R_{+}}$, \citet{mukherjee2021outlier} (formulation 2) define the $\lambda$-ROBOT distance between $\mu, \nu \in \mathcal{P}(\mathcal{X})$ as $\textrm{OT}_{c_{\lambda}}(\mu, \nu)$, where $c_{\lambda}(x,y) \coloneqq l_{2\lambda}(c(x,y)) = \min\{c(x,y), 2\lambda\}$. Specifically for $c=d_{X}$, we write $W_{1,2\lambda}(\mu,\nu)$. It metrizes the underlying class of distributions and was shown earlier to lead to faster cost computation in tasks such as image retrieval \citep{pele2009fast}. On the other hand, if $p=1$, the Tukey loss boils down to the exact functional $l_{\lambda}(x) = \min\{x, \tau\}$, $x > 0$. As such, for any coupling $\pi$, by assuming $\tau = 2\lambda$ we have 
   \begin{equation*}
       \norm{\min\{2\Lambda_{1}, \tau\}}_{L^{1}(\pi \otimes \pi)} \leq \norm{\min\{d_{X}(x,y) + d_{X}(x',y'), \tau\}}_{L^{1}(\pi \otimes \pi)} \leq 2\norm{\min\{d_{X}(x,y), \tau\}}_{L^{1}(\pi)}.
   \end{equation*}
    Taking infimum over $\Pi(\mu, \nu)$ we conclude $(1,2\lambda)$-$d_{\textrm{TGW}} \leq W_{1,2\lambda}$. 
\end{remark}
\begin{remark}[Concentration]
    In reality, often outlying observations find their way in the pool of samples, which is unlike drawing i.i.d. replicates from a contaminated distribution $\mu' = (1-\epsilon)\mu + \epsilon \mu_{c}$. Rather, in a set of samples $\{x_{i}\}_{i=1}^{m}$, we are left with $|\mathcal{I}|$ i.i.d. observations from $\mu$ and the rest, $|\mathcal{O}| \coloneqq n - |\mathcal{I}|$ drawn independently from adversaries. If the outliers also follow $\mu_{c}$ identically and $|\mathcal{O}| = m\epsilon$, it becomes equivalent to Huber's contamination regime in an empirical setup. \citet{lecue2020mom} call this the $\mathcal{O} \cup \mathcal{I}$ framework. This is crucial since it allows one to comment on the concentration of the empirical $d_{\textrm{TGW}}$. In such a setup, given samples $\{(x_{i},y_{j})\}^{m,n}$, as a corollary to Proposition \ref{Prop_up}, we get for $p=2$
    \begin{equation}
        d_{\textrm{TGW}}(\hat{\mu}_{m}, \hat{\nu}_{n}) - d_{\textrm{GW}}(\hat{\mu}^{\mathcal{I}}_{m}, \hat{\nu}^{\mathcal{I}}_{n}) \leq \tau \left(\sqrt\frac{{\abs{\mathcal{O}^{X}}}}{m} + \sqrt\frac{{\abs{\mathcal{O}^{Y}}}}{n}\right),
    \end{equation}
    where $\hat{\mu}_{m}, \hat{\nu}_{n}$ are the usual empirical distributions based on $\{(x_{i},y_{j})\}^{m,n}$, and $\hat{\mu}^{\mathcal{I}}_{m} \coloneqq \abs{\mathcal{I}^{X}}^{-1}\sum_{i \in \mathcal{I}^{X}} \delta_{x_{i}}$ is the same based on inliers. The same goes for $\hat{\nu}^{\mathcal{I}}_{n}$ in the other space. Moreover, $\mathcal{O}^{X}$ and $\mathcal{O}^{Y}$ denote the set of outliers. As such, given $\abs{\mathcal{O}^{X}} \vee \abs{\mathcal{O}^{Y}} = o(m \wedge n)$ one may equivalently calculate TGW based on only the inliers. Moreover, due to \citet{zhang2024gromov}, Theorem 3
    \begin{equation*}
        \abs{\mathbb{E}[d^{2}_{\textrm{GW}}(\hat{\mu}^{\mathcal{I}}_{m}, \hat{\nu}^{\mathcal{I}}_{n})] - d^{2}_{\textrm{GW}}(\mu,\nu)} \lesssim \frac{M^{4}}{\sqrt{\abs{\mathcal{I}^{X}} \wedge \abs{\mathcal{I}^{Y}}}} + (1+M^{4})\bigvee_{\mathcal{I}^{X}, \mathcal{I}^{Y}}\abs{\mathcal{I}}^{-\frac{2}{(d \wedge d')\vee 4}}(\log \abs{\mathcal{I}})^{\mathbb{1}_{\{d \wedge d' =4\}}}.
    \end{equation*}
\end{remark}
The theoretical richness of TGW gives us a solid foundation to search for better approximations using data-dependent thresholding. It is also quiet intuitive that a misspecified $\tau > 0$ may lead to heavier penalization than required, generating a large deviation from GW. In practice, even minute fine-tuning errors lead to a significant loss in tail information. A smoother thresholding may achieve a nearer robust approximation without sacrificing favorable properties. This leads us to the Huber `norm'.
\begin{definition}[Huber loss function] The Huber loss with threshold $\tau > 0$ is defined as 
    \begin{equation*}
        \mathcal{H}(x) \coloneqq \begin{cases}
                x^{2}/2\tau & \textrm{if } \abs{x} \leq \tau \\
                \abs{x} - \tau/2 & \textrm{otherwise},
            \end{cases}
    \end{equation*}
    which induces the corresponding norm, $\norm{f}_{\mathcal{H}(\mu)} = \big(\int_{\mathcal{X}} \mathcal{H}(f(x)) d\mu(x)\big)^{1/2}$.
\end{definition}
Observe that, $\mathcal{H}(\cdot)$ is continuously differentiable and given $\tau \simeq 0$, closely approximates the $l_{1}$ loss. The robust penalization also becomes data-dependant, making it essential in robust M-estimation and regression \citep{loh2017robust}. Following our previous formulation, for $(\mu, \nu) \in \mathcal{P}_{2}(\mathcal{X}) \times \mathcal{P}_{2}(\mathcal{Y})$, we define $d_{\textrm{HGW}}(\mu, \nu) \coloneqq \inf_{\pi \in \Pi(\mu, \nu)} \frac{1}{2}\norm{\Lambda_{X,Y}}_{\mathcal{H}(\pi \otimes \pi)}$, namely the \textit{Huber's GW} (HGW). Addressing the robustness of GW for $p=2$ in particular, $\norm{\Lambda_{X,Y}}_{\mathcal{H}}$ does not admit a monotonic property. However, based on the fact that $\mathcal{H}^{\frac{1}{2}}$ is subadditive \citep{clarkson2014sketching}, one can recover a Minkowski-like inequality as in TGW. Moreover, HGW poses as a robust estimate of the corresponding GW value as it follows a property similar to Proposition \ref{Prop_up}. The result involves defining the Huber version of the modified Wasserstein `distance' $W_{\mathcal{H}} \coloneqq \inf_{\Pi} \norm{d_{X}}_{\mathcal{H}}$. Consequently, it promotes resilience to a distribution under it upon mass truncation. 

To empirically demonstrate HGW's robustness, we devote the rest of the section to building an algorithm to solve a sample HGW. Given $m$ and $n \in \mathbb{N}_{+}$ i.i.d. samples from $\mu$ and $\nu$ respectively, let us denote the two pairwise distance matrics (based on $d_{X}$ and $d_{Y}$) as $C^{X} \in \mathbb{R}^{m \times m}_{+}$ and $C^{Y} \in \mathbb{R}^{n \times n}_{+}$. Then, $d^{2}_{\textrm{HGW}}$ boils down to solving the familiar non-convex optimization \citep{peyre2016gromov}
\begin{equation} \label{Alg}
    \min_{\pi \in \Pi(\hat{\mu}_{m}, \hat{\nu}_{n})} \sum_{i,i',j,j'}\mathcal{H}(C^{X}_{ii'} - C^{Y}_{jj'})\pi_{ij} \pi_{i'j'} = \min_{\pi \in \Pi(\hat{\mu}_{m}, \hat{\nu}_{n})} \langle \mathcal{H}(C^{X} - C^{Y}) \circledcirc \pi, \pi\rangle,
\end{equation}
where $\hat{\mu}_{m}, \hat{\nu}_{n}$ are empirical distributions or rather simplexes and $\Pi \in \mathbb{R}^{m \times n}_{+}$. To adapt to the Sinkhorn scaling framework \citep{cuturi2013sinkhorn}, we additionally impose an entropic regularization $d_{\textrm{KL}}(\pi)$ to (\ref{Alg}), which at the $k$-th iteration calculates $d_{\textrm{KL}}(\pi | \pi^{k}) = \langle \pi, \log \pi - \log \pi^{k}\rangle$. The resultant Huber's EGW formulation follows the Algorithm \ref{alg:HEGW}. It is immediately beneficial for computing a robust loss, compared to Unbalanced GW (UGW) or Partial GW (PGW), since it results in marginal distributions and computationally scales with the EGW ($O(m^{2}n^{2})$) exactly. 

\begin{algorithm}
\caption{Huber's Entropic Gromov-Wasserstein}\label{alg:HEGW}
\begin{algorithmic}
\State \textbf{Input:} Initialised distributions $p,q$, regularization parameter $\varepsilon$, number of inner and outer iterations $N_{2}, N_{1}$.
\State \textbf{Output:} HEGW
\let \oldnoalign \noalign
\let \noalign \relax
\midrule
\let \noalign \oldnoalign
\State Compute pairwise distance matrices $C^{X}, C^{Y}$
\State Initialise $\pi^{(0)} = pq^{T}$, $a^{(0)} = 1_{m}$, and $b^{(0)} = 1_{n}$
\For{$i \in \{0, 1, \cdots , N_{1}-1\}$}
            \State $\mathscr{C}(\pi^{(i)}) \leftarrow \mathcal{H}(C^{X} - C^{Y}) \circledcirc \pi^{(i)}$ \Comment{Compute cost matrix}
            \State $K^{(i)} \leftarrow \exp{\big(-\frac{\mathscr{C}(\pi^{(i)})}{\varepsilon}\big)} \odot \pi^{(i)}$  \Comment{Compute kernel}
            \For{$j \in \{0, 1, \cdots , N_{2}-1\}$} 
            \State $\big\{a^{(j+1)}, b^{(j+1)}\big\} \leftarrow \big\{p \oslash (K^{(i)}b^{(j)}), q \oslash ({K^{(i)}}^{T}a^{(j+1)})\big\}$  \Comment{Sinkhorn scaling}
            \EndFor
            \State $\pi^{(i+1)} \leftarrow \textrm{diag}(a^{(N_{2})})K^{(i)}\textrm{diag}(b^{(N_{2})})$
        \EndFor
        \State Return $\big\langle \mathscr{C}(\pi^{(N_{1})}), \pi^{(N_{1})}\big\rangle$
\end{algorithmic}
\end{algorithm}

While we present a simple working algorithm\footnote{This allows seamless integration into existing libraries: \url{https://pythonot.github.io/}}, HEGW also adapts to lower-complexity approximations. In Algorithm \ref{alg:HEGW}, computing the cost $\mathcal{C}(\pi)$ alone incurs the high complexity $O(m^{2}n^{2})$. However, we can write $\mathcal{H}(a-b) = f_{1}(a) + f_{2}(b) - h_{1}(a)h_{2}(b)$, where given $\abs{a-b} \leq \tau$, $f_{1}(a) = a^{2}/2\tau, f_{2}(b) = b^{2}/2\tau, h_{1}(a) = a/\tau, h_{2}(b) = b$ and if $a-b > \tau$, we have $f_{1}(a) = a, f_{2}(b) = - b, h_{1}(a) = \tau, h_{2}(b) = 1/2$. As such, the cost computation can be eased down to $O(m^{2}n + mn^{2})$ (\citet{peyre2016gromov}, Proposition 1). This is the best complexity achievable if $\mu, \nu$ are not sliced first or no additional constraint on the matrices satisfying lower ranks is imposed. However, if along with the robust penalization, we identify a set of indices $\mathcal{S} = \{(i,j)\}$ with $\abs{\mathcal{S}} = s$, such that
\begin{equation*}
    \Tilde{\mathcal{H}}(C^{X}_{ii'} - C^{Y}_{jj'}) = \begin{cases}
        \mathcal{H}(C^{X}_{ii'} - C^{Y}_{jj'}) & \textrm{if } (i',j') \in \mathcal{S} \\
        0 & otherwise,
    \end{cases}
\end{equation*}
where $(i,j) \in \mathcal{S}$, the modified HEGW problem can be solved with accompanying complexity $O(mn + s^{2})$ \citep{li2023efficient}. While this results in a truncated plan ($\pi$), it effectively thwarts outliers in graphs.

\subsection*{Experiment: Shape Matching with Outliers} 
We deploy HGW for robust 2D shape matching based on point cloud data \citep{mroueh2020unbalanced}. Observe that the underlying mm spaces are essentially $(\mathbb{R}^{2}, \norm{\:\cdot\:}_{2})$, endowed with measures $\hat{\mu}_{m}$ and $\hat{\nu}_{n}$ respectively. Given two shapes (e.g. cat and heart), we identify one as the target and the other as the source. The contamination regime we follow is the following: for $\alpha \in (0,1)$, we randomly sample $m\alpha$ observations from the source point cloud and replace them with replicates from an adversary $\mu_{c}$ (e.g. standard bi-variate Cauchy). 
\begin{figure}[H]
    \centering
    \includegraphics[width=0.9\linewidth]{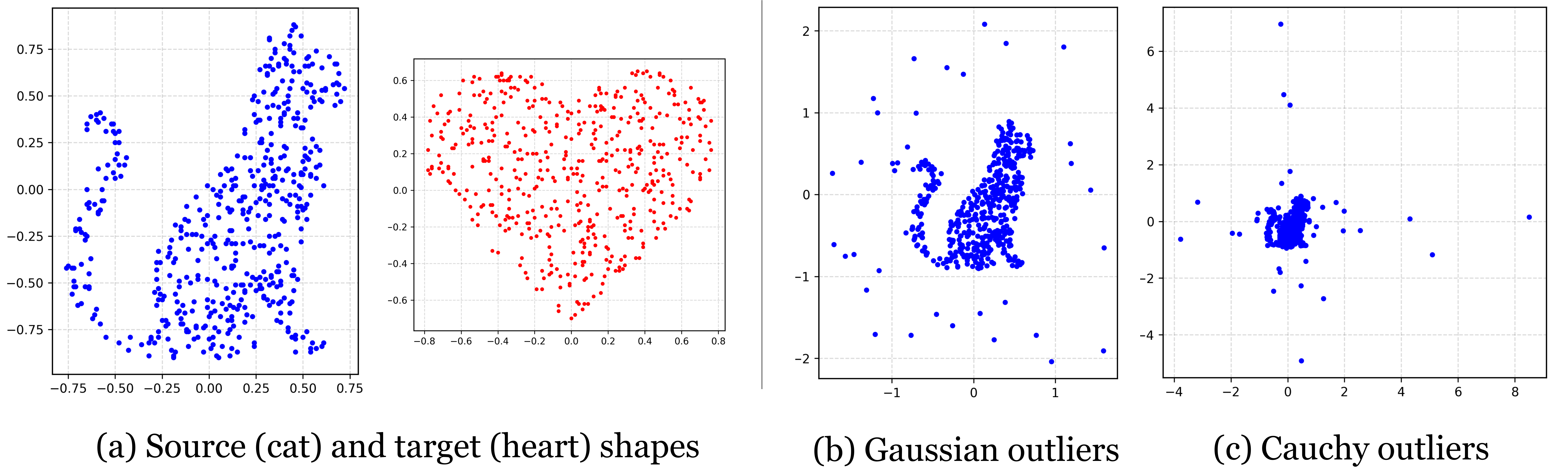}
    \caption{(a) Point clouds ($m = n = 500$) corresponding to shapes of cat (source) and heart (target). Contaminated source with 20 outliers drawn independently from a standard (b) bivariate Gaussian and (c) bivariate Cauchy.}
    \label{fig:cat_heart_data}
\end{figure}
For comparison, we use the vanilla GW, FGW \citep{vayer2020fused}, PGW \citep{chapel2020partial}, and UGW \citep{sejourne2021unbalanced} as baselines under $p=2$. For the unbalanced methods, we allocate unit mass to each point and for the rest, we normalize. Each method, at each level of contamination, is repeated 100 times to cover for any variation generated due to optimization. The parameters for each distance are selected based on the recommendations by the respective authors, e.g. in the case of FGW, the mixing coefficient of GW and the OT is kept at $0.5$. The regularization parameter for PGW is taken as $0.001$. We find that only such small values, chosen judiciously, can strike a balance between adequate penalization and a low enough value of the corresponding loss. In TGW (and HGW), the method's accuracy hinges on the selection of $\tau$. Very low values lead to over-penalization and large deviations from the actual robust benchmark. In our study, we devise a data-driven scheme for selecting $\tau$. Given observations $\{(x_{i},y_{j})\}^{m,n} \sim \mu \otimes \nu$, we scrutinize the distribution of sample distortion values (say, $J_{X,Y}$) $|\:{\norm{x_{i} - x_{i'}}^{2} - \norm{y_{j} - y_{j'}}^{2}}\:|$. An immediate estimate for a threshold that trims outlying $J_{X,Y}$ values is a higher percentile, e.g. $98\%,95\%$, which we use as a reference. Our choice of an appropriate $\tau$ becomes $\tilde{m} + 3\tilde{\sigma}$, where $\tilde{m}$ and $\tilde{\sigma}$ are the median and mean absolute deviation about median of $J_{X,Y}$. Ideally, for a standard folded Normal distribution, the value turns out $\approx 2.04$ (see, Appendix \ref{param_sel}). The method enables a dynamic parameter selection that adjusts according to the proportion of outliers. We present a detailed discussion in Appendix \ref{param_sel}. 



\begin{figure}[H]
    \centering
    \begin{subfigure}{0.425\linewidth}
        \includegraphics[width=\linewidth]{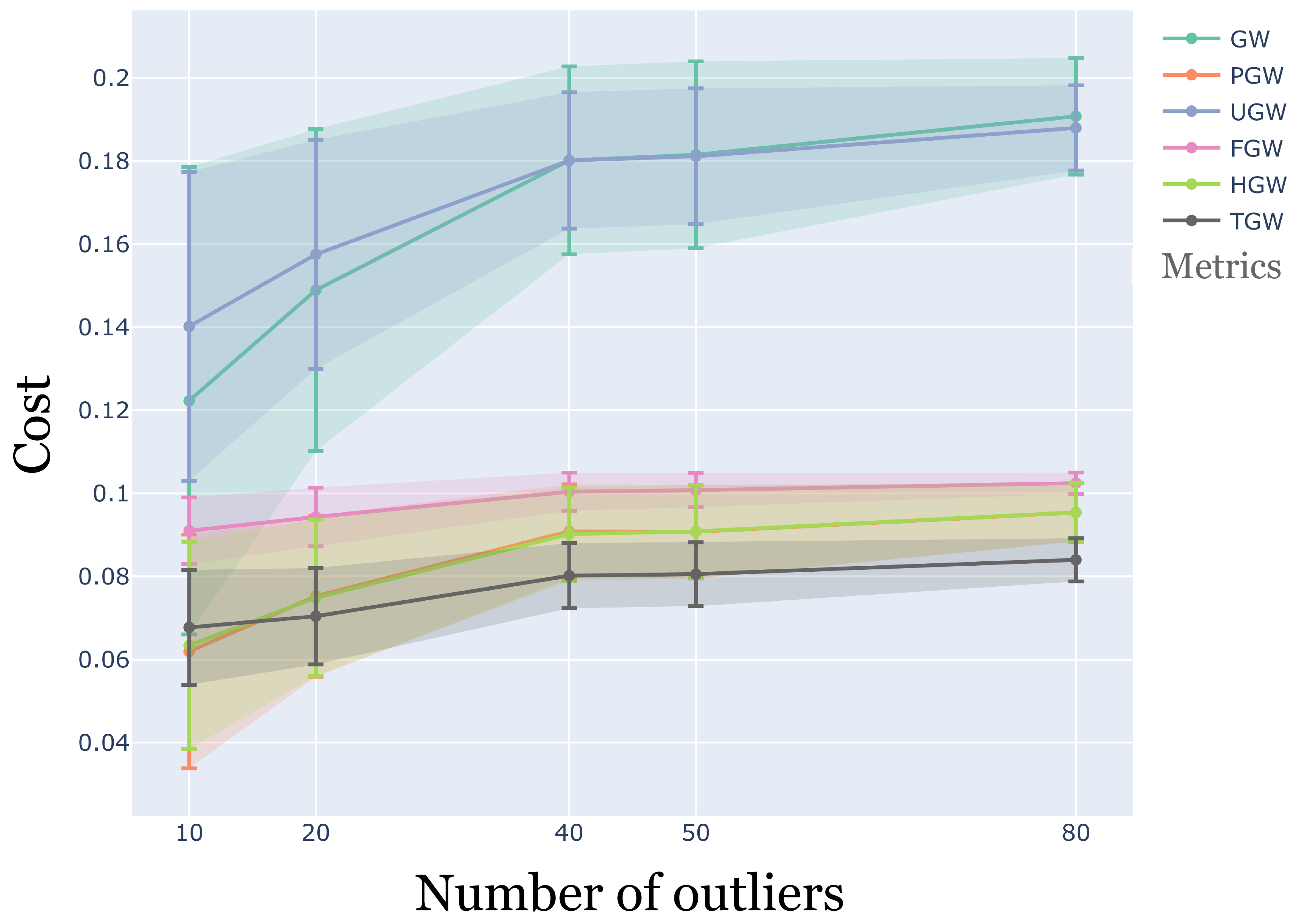}
        \caption{}
        \label{fig:cauchy_plot}
    \end{subfigure}
    \begin{subfigure}{0.565\linewidth}
      \includegraphics[width=\linewidth]{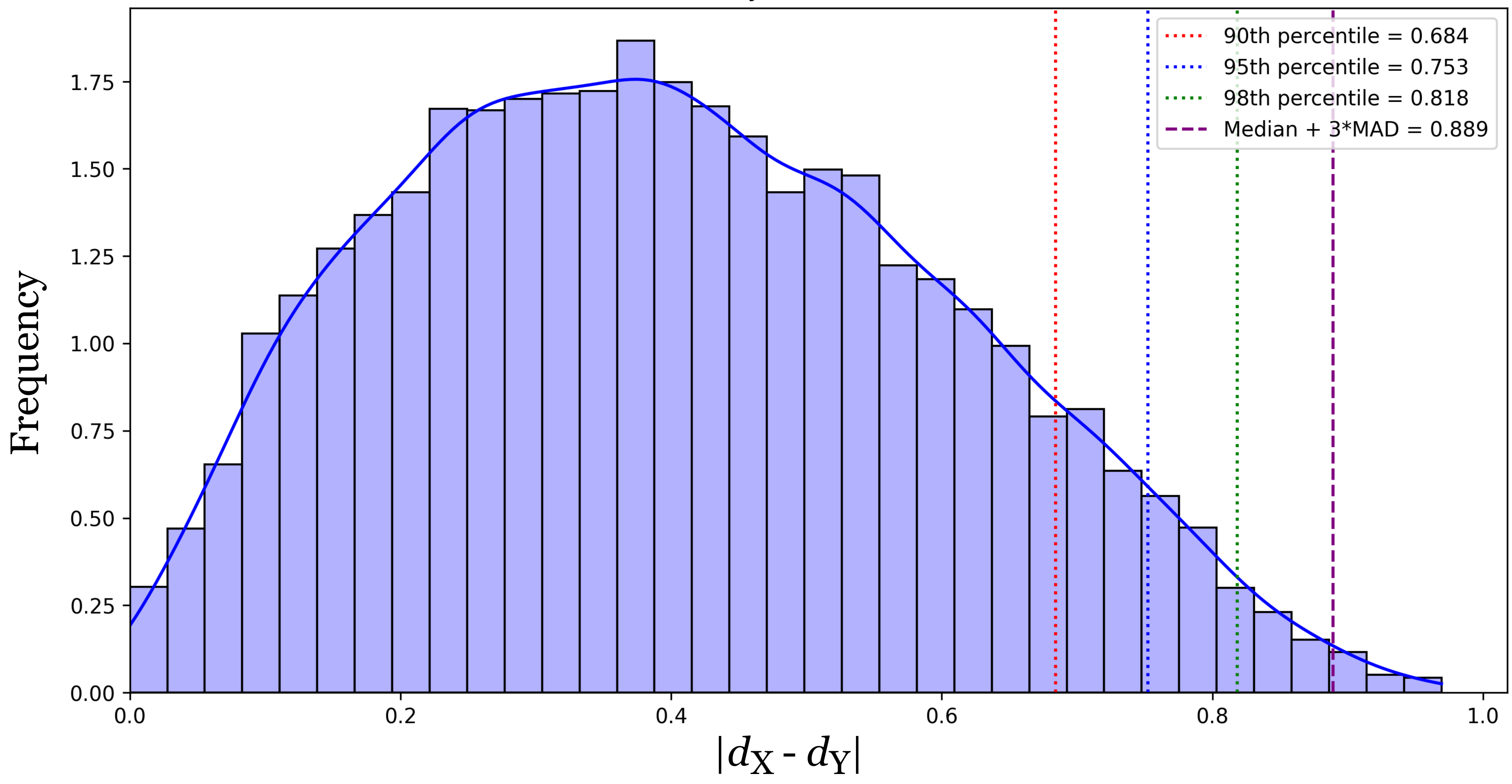}
      \caption{}
      \label{fig:cauchy_dist}
    \end{subfigure}
    \caption{(a) Average loss values under increasing proportion of bi-variate Cauchy outliers $(0.02,0.04,0.08,0.1,0.16)$ in the source domain. (b) Empirical distribution of deviations between pairwise distances under $80$ Cauchy outliers. Realized $95$-percentile and $\tilde{m} + 3\tilde{\sigma}$ are $0.753$ and $0.889$ respectively.} 
    \label{fig:cat_heart_Cauchy}
\end{figure}

Based on the proposed scheme, the value of $\tau$ is chosen dynamically at each level $\alpha$ for HGW. As a reference, we use the $95$-percentile of $J_{X,Y}$ in TGW. The immediate observation is that TGW remains the most stable under increasing contamination. On the other hand, HGW exhibits performance comparable to that of partial mass allocation, as in PGW. Since the threshold only penalizes extreme values of $J_{X,Y}$, pairwise distances between outliers that are similar to that between inliers contribute to the overall loss. This implies a minute increase in HGW. The effect is much pronounced in Gaussian outliers (see, Appendix \ref{param_sel}). The stability of PGW is intuitive since its optimal plan ignores the outliers altogether. Remarkably, HGW simulates the same effect without altering the plan. FGW (with a mixing ratio $0.5$) shows elevated fluctuation as its OT component is also vulnerable to contamination. The instability of UGW might stem from mass-splitting and the partial dependence of its plan on outliers (as also pointed out by \citet{bai2024partialgromovwassersteinmetric}). However, despite relying on a full plan connecting outliers to points in the target shape, HGW results in robust estimates of the corresponding loss. In Appendix \ref{param_sel}, we show the results corresponding to Gaussian contamination and compare the effects due to varying $\tau$.

\subsection{Local Robustification} \label{Part2}
Techniques that make the distortion $\Lambda(d_{X}(x,x'),d_{Y}(y,y'))$, given $(x,y), (x',y') \in \mu \otimes \nu$, robust to outliers, offer a robust estimate to the extent of deviation from isometry. It is equivalent to selecting a different measurement to compare the two local geometries, however perturbed; for example, TGW. In other words, the penalization applies over the discrepancy in pairwise distances, rather than $d_{X}(x,x')$ and $d_{Y}(y,y')$ themselves. This may suffer from inefficiency in information retention, as given only an outlying observation $x' \in \mathcal{X}$, it depreciates the contribution of a `clean' $d_{Y}(y,y')$. An earlier-stage robustification of the distances solves this issue. In search of nearer GW approximates, we turn to robust surrogates of $d_{X}$ and $d_{Y}$. 

For typical choices of metric spaces (for example, $(\mathbb{R}^{d}, \norm{\:\cdot\:})$), it is quite straightforward to retain metric properties of $d_{X}$ under thresholding (e.g., Tukey) or Winsorization. For example, the \textit{truncated} surrogate $l_{\lambda}(d_{X}) = \min\{d_{X}, \lambda\}$, $\lambda \geq 0$ satisfies non-negativity, symmetry and the triangle inequality. Based on the fact that $\mathcal{T}_{2}(a-b) \geq \abs{l_{\lambda}(a) - l_{\lambda}(b)}^{2}$, for $a,b \geq 0$, it immediately improves the corresponding robust GW formulation, evoking a lower bound to Tukey-type distances.  

\begin{lemma} \label{ineqtukloc}
    Given $(\mu, \nu) \in \mathcal{P}_{4}(\mathbb{R}^{d}) \times \mathcal{P}_{4}(\mathbb{R}^{d'})$, the $(2,\lambda)$-TGW between them satisfies
    \begin{align*}
        d_{\textrm{TGW}}(\mu, \nu) \geq \frac{1}{2} \Big(\inf_{\pi \in \Pi(\mu,\nu)} \int \int \abs{l_{\lambda}\Big(\norm{x - x'}^{2}\Big) - l_{\lambda}\Big(\norm{y - y'}^{2}\Big)}^{2} d \pi \otimes \pi (x,y,x',y')\Big)^{\frac{1}{2}}.
    \end{align*}
\end{lemma}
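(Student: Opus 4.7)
The entire statement reduces to integrating a pointwise inequality between the Tukey penalization of a difference and the squared difference of the truncated values, so the plan is to first establish that pointwise bound and then propagate it through the definition of $d_{\textrm{TGW}}$.

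\textbf{Step 1 (pointwise inequality).} I will verify the claim quoted in the text, namely that for all $a,b\geq 0$,
\begin{equation*}
    \mathcal{T}_{2}(|a-b|)\;\geq\;\bigl|l_{\lambda}(a)-l_{\lambda}(b)\bigr|^{2},
\end{equation*}
by a short case analysis. If $|a-b|\leq \lambda$, then $\mathcal{T}_{2}(|a-b|)=(a-b)^{2}$, and since $l_{\lambda}(t)=\min\{t,\lambda\}$ is $1$-Lipschitz on $[0,\infty)$ we have $|l_{\lambda}(a)-l_{\lambda}(b)|\leq |a-b|$, giving the inequality. If $|a-b|>\lambda$, then $\mathcal{T}_{2}(|a-b|)=\lambda^{2}$, while $l_{\lambda}(a),l_{\lambda}(b)\in[0,\lambda]$ forces $|l_{\lambda}(a)-l_{\lambda}(b)|\leq \lambda$, so $|l_{\lambda}(a)-l_{\lambda}(b)|^{2}\leq \lambda^{2}$. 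This is the only analytic content of the argument.

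\textbf{Step 2 (apply with squared Euclidean distances).} With $d_{X}(x,x')=\|x-x'\|^{2}$ and $d_{Y}(y,y')=\|y-y'\|^{2}$ (both nonnegative), setting $a=\|x-x'\|^{2}$, $b=\|y-y'\|^{2}$ yields
\begin{equation*}
    \mathcal{T}_{2}\bigl(\bigl|\|x-x'\|^{2}-\|y-y'\|^{2}\bigr|\bigr)\;\geq\;\bigl|l_{\lambda}(\|x-x'\|^{2})-l_{\lambda}(\|y-y'\|^{2})\bigr|^{2}
\end{equation*}
$(\pi\otimes\pi)$-a.e.\ for every $\pi\in\Pi(\mu,\nu)$. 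The finite $4$-moment assumption on $\mu,\nu$ guarantees integrability of both sides, so I can integrate against $\pi\otimes\pi$, take square roots (both integrals are nonnegative), and multiply by $\tfrac{1}{2}$ to obtain
\begin{equation*}
    \tfrac{1}{2}\|\Lambda_{X,Y}\|_{\mathcal{T}_{2}(\pi\otimes\pi)}\;\geq\;\tfrac{1}{2}\Bigl(\int\!\!\int\bigl|l_{\lambda}(\|x-x'\|^{2})-l_{\lambda}(\|y-y'\|^{2})\bigr|^{2}\,d\pi\otimes\pi\Bigr)^{1/2}.
\end{equation*}

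\textbf{Step 3 (infimum over couplings).} The preceding inequality holds for arbitrary $\pi\in\Pi(\mu,\nu)$; taking the infimum over $\Pi(\mu,\nu)$ on the left yields $d_{\textrm{TGW}}(\mu,\nu)$, while the infimum of the right-hand side over the same set of couplings is exactly the quantity appearing in the statement of the lemma, and the inequality is preserved. This completes the argument.

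The only nontrivial step is the pointwise bound in Step 1, but since it follows from $1$-Lipschitzness of $l_{\lambda}$ together with the range bound $l_{\lambda}\in[0,\lambda]$, I do not expect any real obstacle. The $4$-moment hypothesis is used only to justify integrability of $\|x-x'\|^{4}$ and $\|y-y'\|^{4}$, which trivially dominates $l_{\lambda}(\cdot)^{2}$ since $l_{\lambda}$ is bounded.
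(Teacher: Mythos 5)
Your proof is correct and takes essentially the same route as the paper, which establishes the lemma solely by invoking the pointwise bound $\mathcal{T}_{2}(a-b) \geq \abs{l_{\lambda}(a) - l_{\lambda}(b)}^{2}$ for $a,b \geq 0$ and then integrating against $\pi \otimes \pi$ before passing to the infimum over couplings. Your case analysis (1-Lipschitzness of $l_{\lambda}$ when $\abs{a-b}\leq\lambda$, the range bound $l_{\lambda}\in[0,\lambda]$ otherwise) just makes that pointwise fact explicit, and your remark that both integrands are bounded, so integrability is automatic, is accurate.
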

We call such formulations, as in the lower bound, \textit{locally robust GW} ($(p,\lambda)$-LRGW, in general). Infima of the corresponding optimization are always realized at relaxed optimal couplings (see, Appendix \ref{exist}). The modification gives greater control over the extent of robustification based on distinct choices of $\lambda,\lambda' \geq 0$ for the two mm spaces. LRGW also follows most metric properties of GW, particularly non-negativity, symmetry, and triangle inequality. The proofs become similar to showing the same for GW under altered mm spaces of the form $(\mathcal{X}, l_{\lambda}(d_{X}), \mu)$, $\lambda > 0$. For completeness, we mention some properties of LRGW, including its dependence on the threshold $\lambda$.

\begin{lemma}[Properties] \label{properLR}
    Given Polish mm spaces $(\mathcal{X}, d_{X}, \mu)$, $(\mathcal{Y}, d_{Y}, \nu)$; and $p \in [0,\infty]$
    \begin{enumerate}[(i)] 
        \item for $\lambda' > \lambda \geq 0$, we have $(p,\lambda)$-LRGW($\mu,\nu$) $\leq (p,\lambda')$-LRGW($\mu,\nu$). In fact,
        \begin{equation*}
            (p,\lambda')\textrm{-LRGW}(\mu,\nu) - (p,\lambda)\textrm{-LRGW}(\mu,\nu) \leq \inf_{\pi \in \Pi^{\lambda}(\mu,\nu)} \norm{l_{\lambda'}(\bar{l}_{\lambda}(d_{X})) - l_{\lambda'}(\bar{l}_{\lambda}(d_{Y}))}_{L^{p}(\pi \times \pi)},
        \end{equation*}
        where $\bar{l}_{\lambda}(x) = \max\{x,\lambda\}$ and $\Pi^{\lambda}$ is the set of couplings optimal for $(p,\lambda)$-LRGW.
        \item $(p,\infty)$-LRGW($\mu,\nu$) = $p$-GW($\mu,\nu$).
    \end{enumerate}
\end{lemma}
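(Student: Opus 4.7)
\textbf{Proof plan for Lemma \ref{properLR}.} The plan is to reduce everything to a pointwise comparison between the truncation operators $l_\lambda$ and $l_{\lambda'}$, then invoke Minkowski's inequality on the resulting distortions. I first address the monotonicity in part (i). By a short case analysis on whether each of $a,b \geq 0$ lies below, between, or above the thresholds, one verifies the Lipschitz-type inequality $|l_\lambda(a) - l_\lambda(b)| \leq |l_{\lambda'}(a) - l_{\lambda'}(b)|$ for $0 \leq \lambda \leq \lambda'$. Since this holds pointwise in $(x,x',y,y')$, it transfers directly to the $L^p(\pi \otimes \pi)$ norms for every coupling $\pi \in \Pi(\mu,\nu)$, and taking the infimum yields the monotonicity $(p,\lambda)$-LRGW$(\mu,\nu) \leq (p,\lambda')$-LRGW$(\mu,\nu)$.

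For the quantitative bound, the crucial observation is the pointwise identity
\begin{equation*}
    l_{\lambda'}(t) - l_{\lambda}(t) = l_{\lambda'}(\bar{l}_{\lambda}(t)) - \lambda, \qquad t \geq 0,
\end{equation*}
which I would verify by splitting into the three regions $t < \lambda$, $\lambda \leq t \leq \lambda'$, and $t > \lambda'$; in each case both sides reduce to $0$, $t-\lambda$, or $\lambda'-\lambda$ respectively. Applying this to both $d_X$ and $d_Y$ and subtracting cancels the additive $\lambda$, giving
\begin{equation*}
    l_{\lambda'}(d_X) - l_{\lambda'}(d_Y) = \bigl[l_\lambda(d_X) - l_\lambda(d_Y)\bigr] + \bigl[l_{\lambda'}(\bar{l}_\lambda(d_X)) - l_{\lambda'}(\bar{l}_\lambda(d_Y))\bigr].
\end{equation*}

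Next, for any $\pi \in \Pi^\lambda(\mu,\nu) \subseteq \Pi(\mu,\nu)$, Minkowski's inequality in $L^p(\pi \otimes \pi)$ yields
\begin{equation*}
    \norm{l_{\lambda'}(d_X) - l_{\lambda'}(d_Y)}_{L^p(\pi \otimes \pi)} \leq \norm{l_\lambda(d_X) - l_\lambda(d_Y)}_{L^p(\pi \otimes \pi)} + \norm{l_{\lambda'}(\bar{l}_\lambda(d_X)) - l_{\lambda'}(\bar{l}_\lambda(d_Y))}_{L^p(\pi \otimes \pi)}.
\end{equation*}
The left side upper-bounds $(p,\lambda')$-LRGW$(\mu,\nu)$, while for $\pi \in \Pi^\lambda$ the first term on the right is exactly $(p,\lambda)$-LRGW$(\mu,\nu)$ by optimality. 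Rearranging and taking the infimum over $\pi \in \Pi^\lambda(\mu,\nu)$ gives the claimed bound. Part (ii) is immediate: since $l_\infty(t) = t$ for all $t \in \mathbb{R}_+$, the LRGW distortion kernel coincides with the standard GW distortion $|d_X - d_Y|$, and thus the infima over $\Pi(\mu,\nu)$ agree.

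The only substantive step is recognizing the telescoping identity $l_{\lambda'} - l_\lambda = l_{\lambda'} \circ \bar{l}_\lambda - \lambda$, which unlocks Minkowski; without it one would be stuck trying to compare the two truncated distortions through an awkward triangle-type argument. Once that identity is in place, the rest is routine manipulation of $L^p$ norms together with the fact that $\Pi^\lambda \subseteq \Pi(\mu,\nu)$.
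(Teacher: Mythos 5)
Your proposal is correct and follows essentially the same route as the paper: the same pointwise identity $l_{\lambda'}(t)-l_{\lambda}(t)=l_{\lambda'}(\bar{l}_{\lambda}(t))-\lambda$, the same decomposition of $l_{\lambda'}(d_X)-l_{\lambda'}(d_Y)$, the triangle inequality in $L^{p}(\pi\otimes\pi)$ applied to a coupling $\pi\in\Pi^{\lambda}$, and obtaining (ii) as the $\lambda'=\infty$ limit of (i). (Both arguments, yours and the paper's, implicitly require $p\geq 1$ for the $L^{p}$ triangle inequality, so this is not a point of difference.)
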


Observe that, lemma \ref{properLR}\textit{(ii)} rather holds for any $\lambda \geq M = \textrm{diam}(\mathcal{X}) \vee \textrm{diam}(\mathcal{Y})$, given $\textrm{diam}(\mathcal{X}) = \max_{x,x' \in \mathcal{X}}d_{X}(x,x')$, which however may become arbitrarily large in the presence of outliers in a sample problem. As a consequence of lemma \ref{properLR}\textit{(ii)}, $\mathcal{X} \cong \mathcal{Y}$ implies that the associated LRGW nullifies. However, the converse does not hold necessarily since, $l_{\lambda}(d_{X}(x,x')) = l_{\lambda}(d_{Y}(y,y'))$ a.s. does not imply $d_{X}(x,x')=d_{Y}(y,y')$ a.s. While this formulation sacrifices non-degeneracy, it preserves geometric sensitivity under appropriately tuned $\lambda$. It delineates an estimated support of the distribution of distances based on inlying observations\footnote{Given $x \in \mathcal{X}$, define the map $u^{\lambda}_{x} : \mathcal{X} \rightarrow [0,\lambda)$ by $u^{\lambda}_{x}(x') = l_{\lambda}(d_{X}(x,x'))$. Then, ${u^{\lambda}_{x}}_{\#}\mu \in \mathcal{P}([0,\lambda))$ denotes the distribution of distances supported on the trimmed interval.}. Though finer than TGW, such a filtration allows distances corresponding to outlying samples within $\lambda$-radius to each other to pass through. This hints towards addressing contamination due to distribution shifts and mass reallocation as a result. Later in the section, we discuss the origin of local penalization ($l_{\lambda}$) in a generalized setup. To motivate, we mention that similar costs are often used to devise robust OT-based divergences metrizing $\mathcal{P}(\mathcal{X})$ (see Remark \ref{remROB}). Remarkably, impartially trimmed-$W_{p}$ due to \citet{czado1998assessing} becomes equivalent to trimming the underlying univariate measures \citep{alvarez2008trimmed}. In this line, our next result shows that variational representations of LRGW formulations link the alignment problem to certain robust OT costs, relying on trimmed observations instead. Given two mm spaces $(\mathbb{R}^{d}_{\geq 0}, \norm{\:\cdot\:}, \mu)$ and $(\mathbb{R}^{d'}_{\geq 0}, \norm{\:\cdot\:}, \nu)$\footnote{Such subspaces remain Polish equipped with the inherited metric (\citet{fristedt2013modern}, Chapter 18.1). We may equivalently choose $[0,1]^d$, which is sufficient for the IGW formulation. In both cases, the corresponding problem boils down to assessing the alignment between distributions corresponding to non-negative multivariate random variables.}, let us consider the locally robust inner product GW distance \citep{memoli2011gromov}
\begin{equation*}
    d_{\textrm{LRIGW}}(\mu, \nu; \lambda) \coloneqq \Big(\inf_{\pi \in \Pi(\mu,\nu)} \int\int \abs{l_{\lambda}(\langle x, x'\rangle) - l_{\lambda}(\langle y , y'\rangle)}^{2} d \pi \otimes \pi (x,y,x',y')\Big)^{\frac{1}{2}}.
\end{equation*}
Here, the robustification translates to limiting extreme angular deviation. At $\lambda \rightarrow \infty$, the cost circles back to IGW. To state the result, let us first decompose the squared LRIGW cost in the following way:
\begin{equation*}
    d^{2}_{\textrm{LRIGW}}(\mu, \nu; \lambda) = F_{1} + F_{2},
\end{equation*}
where
\begin{align*}
        &F_{1}(\mu, \nu; \lambda) = \int \abs{l_{\lambda}(\langle x, x'\rangle)}^{2} d \mu \otimes \mu (x,x') + \int \abs{l_{\lambda}(\langle y, y'\rangle)}^{2} d \nu \otimes \nu (y,y'), \\ &F_{2}(\mu, \nu; \lambda) = \inf_{\pi \in \Pi(\mu,\nu)} -2 \int l_{\lambda}(\langle x, x'\rangle) \:l_{\lambda}(\langle y , y'\rangle) d \pi \otimes \pi (x,y,x',y').
    \end{align*}
\begin{proposition}[Locally robust IGW duality] \label{propIGW}
    Given $(\mu, \nu) \in \mathcal{P}_{4}(\mathbb{R}^{d}_{\geq 0}) \times \mathcal{P}_{4}(\mathbb{R}^{d'}_{\geq 0})$, define $M^{\lambda}_{\mu,\nu} \coloneqq \sqrt{M_{2}(\mu;\lambda)M_{2}(\nu;\lambda)}$, where for any distribution $\rho$, $M_{2}(\rho;\lambda) = \int \norm{l_{\lambda}(x)}^{2} d\rho(x)$, $\lambda \geq 0$. Then, there exists an upper bound to $F_{2}$, say $\bar{F}_{2}$, satisfying
    \begin{equation*}
        \bar{F}_{2}(\mu, \nu; (d \vee d')\lambda^{2}) = \inf_{\textbf{A} \in \mathcal{D}_{M^{\lambda}_{\mu,\nu}}} 8 \norm{\textbf{A}}^{2}_{F} + \textrm{OT}_{c^{\lambda}_{\textbf{A}}}(\mu, \nu),
    \end{equation*}
    where $\mathcal{D}_{M^{\lambda}_{\mu,\nu}} \coloneqq [0, M^{\lambda}_{\mu,\nu}/2]^{d \times d'}$ and $c^{\lambda}_{\textbf{A}}: (x,y) \in \mathbb{R}^{d}_{\geq 0} \times \mathbb{R}^{d'}_{\geq 0} \mapsto -8l_{\lambda}(x)^{T}\textbf{A}l_{\lambda}(y)$ denotes the cost function deployed under $\textrm{OT}_{c^{\lambda}_{\textbf{A}}}$.
\end{proposition}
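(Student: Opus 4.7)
The plan is to isolate the quadratic structure of $F_{2}$, replace the scalar truncation of inner products by a componentwise truncation via a sharp pointwise inequality (which accounts for the inflated threshold $(d\vee d')\lambda^{2}$), and then linearize the resulting squared Frobenius term using the standard convex-conjugate identity $-\|B\|_{F}^{2}=\inf_{A}\{2\|A\|_{F}^{2}-4\langle A,B\rangle_{F}\}$. The final step is to swap the two infima so that the inner minimization over $\pi$ becomes an OT problem with the claimed cost.

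First I would establish the pointwise bound
\[
   l_{(d\vee d')\lambda^{2}}(\langle x,x'\rangle)\;\ge\;\langle l_{\lambda}(x),l_{\lambda}(x')\rangle
\]
for $x,x'\in\mathbb{R}^{d}_{\ge 0}$ (and analogously for $y,y'$). This follows because the right-hand side equals $\sum_{i}\min(x_{i},\lambda)\min(x'_{i},\lambda)$, which is at most $\min(\langle x,x'\rangle,\,d\lambda^{2})\le l_{(d\vee d')\lambda^{2}}(\langle x,x'\rangle)$. Since both sides are non-negative, multiplying the $x$-factor by the corresponding $y$-factor preserves the inequality, and after plugging in the leading $-2$ and integrating against $\pi\otimes\pi$, taking the infimum over $\pi$ yields
\[
   F_{2}(\mu,\nu;(d\vee d')\lambda^{2})\;\le\;\inf_{\pi\in\Pi(\mu,\nu)}\Big(-2\,\|B^{\lambda}_{\pi}\|_{F}^{2}\Big)\;=:\;\bar{F}_{2}(\mu,\nu;(d\vee d')\lambda^{2}),
\]
where $B^{\lambda}_{\pi}\in\mathbb{R}^{d\times d'}$ is defined entrywise by $(B^{\lambda}_{\pi})_{ij}=\int l_{\lambda}(x)_{i}\,l_{\lambda}(y)_{j}\,d\pi(x,y)$. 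This defines the upper bound $\bar F_{2}$ whose variational form we now derive.

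Next I would apply the quadratic-to-linear identity at scale $8$: for any $B\in\mathbb{R}^{d\times d'}$,
\[
   -2\|B\|_{F}^{2}=\inf_{A\in\mathbb{R}^{d\times d'}}\Big(8\|A\|_{F}^{2}-8\langle A,B\rangle_{F}\Big),
\]
with unique minimizer $A^{\star}=B/2$. Applied with $B=B^{\lambda}_{\pi}$, Cauchy--Schwarz gives $(B^{\lambda}_{\pi})_{ij}\le\sqrt{M_{2}(\mu;\lambda)M_{2}(\nu;\lambda)}=M^{\lambda}_{\mu,\nu}$, so $A^{\star}=B^{\lambda}_{\pi}/2$ lies in $\mathcal{D}_{M^{\lambda}_{\mu,\nu}}=[0,M^{\lambda}_{\mu,\nu}/2]^{d\times d'}$. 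Hence the constrained infimum coincides with the unconstrained one, and
\[
   -2\|B^{\lambda}_{\pi}\|_{F}^{2}=\inf_{A\in\mathcal{D}_{M^{\lambda}_{\mu,\nu}}}\Big(8\|A\|_{F}^{2}-8\,\langle A,B^{\lambda}_{\pi}\rangle_{F}\Big)=\inf_{A\in\mathcal{D}_{M^{\lambda}_{\mu,\nu}}}\Big(8\|A\|_{F}^{2}+\int c^{\lambda}_{A}(x,y)\,d\pi(x,y)\Big).
\]

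Finally I would swap the infima over $\pi$ and $A$. The objective is jointly separable in the sense that $8\|A\|_{F}^{2}$ does not depend on $\pi$ and $\int c^{\lambda}_{A}\,d\pi$ is linear in $\pi$ for fixed $A$ and jointly continuous in $(A,\pi)$ on $\mathcal{D}_{M^{\lambda}_{\mu,\nu}}\times\Pi(\mu,\nu)$, with $\Pi(\mu,\nu)$ compact in the weak topology; the exchange is justified by a standard application of Fubini's theorem on infima for jointly measurable non-negative integrands (or alternatively by Sion's minimax theorem, noting that the objective is convex in $A$ and linear, hence concave, in $\pi$). After the swap, the inner problem over $\pi$ for fixed $A$ is precisely $\operatorname{OT}_{c^{\lambda}_{A}}(\mu,\nu)$, yielding
\[
   \bar{F}_{2}(\mu,\nu;(d\vee d')\lambda^{2})=\inf_{A\in\mathcal{D}_{M^{\lambda}_{\mu,\nu}}}\Big(8\|A\|_{F}^{2}+\operatorname{OT}_{c^{\lambda}_{A}}(\mu,\nu)\Big),
\]
which is the claimed identity. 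The main subtleties I expect are (a) justifying the pointwise inequality with the specific threshold $(d\vee d')\lambda^{2}$, which is why the dimension-dependent scaling appears, and (b) exchanging the infima rigorously, since although the attainment of $\inf_{A}$ is pointwise in $\pi$ (with explicit minimizer $B^{\lambda}_{\pi}/2$), the interchange must be argued without depending on that minimizer lying within the feasible set for every $\pi$ simultaneously.
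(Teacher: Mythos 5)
Your proof is correct and follows essentially the same route as the paper's: the same pointwise truncation inequality (you fold the $(d\vee d')\lambda^{2}$ reparametrization into a single step, where the paper first uses componentwise thresholds $\sqrt{\lambda/d},\sqrt{\lambda/d'}$ and then rescales), the same quadratic-to-linear conjugate identity with the Cauchy--Schwarz bound placing the minimizer $B^{\lambda}_{\pi}/2$ inside $\mathcal{D}_{M^{\lambda}_{\mu,\nu}}$, and the same exchange of infima yielding $\textrm{OT}_{c^{\lambda}_{\textbf{A}}}$. The only remark is that your worry in (b) is moot: since $\mathcal{D}_{M^{\lambda}_{\mu,\nu}}$ does not depend on $\pi$ and both optimizations are infima, the interchange is a set-theoretic identity requiring no Sion/Fubini-type argument.
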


\begin{proof}
    The proof follows the decomposition of the GW cost due to \citet{zhang2024gromov}. Recall the decomposition of the squared LRIGW cost: $d^{2}_{\textrm{LRIGW}}(\mu, \nu) = F_{1} + F_{2}$, where $F_{2}(\mu, \nu; \lambda) = \inf_{\pi \in \Pi(\mu,\nu)} -2 \int l_{\lambda}(\langle x, x'\rangle) \:l_{\lambda}(\langle y , y'\rangle) d \pi \otimes \pi (x,y,x',y')$.
    
    Now, for all $x,x' \sim \mu \in \mathcal{P}_{4}(\mathbb{R}^{d}_{\geq 0})$
    \begin{equation*}
        l_{\lambda}(\langle x, x'\rangle) = l_{\lambda}\Big(\sum_{i=1}^{d}x_{i}x'_{i}\Big) \geq \sum_{i=1}^{d} l_{\frac{\lambda}{d}}(x_{i}x'_{i}) \geq \sum_{i=1}^{d} l_{\sqrt{\frac{\lambda}{d}}}(x_{i})l_{\sqrt{\frac{\lambda}{d}}}(x'_{i}) = \Big\langle l_{\sqrt{\frac{\lambda}{d}}}(x), l_{\sqrt{\frac{\lambda}{d}}}(x')\Big\rangle.
    \end{equation*}
    In the last step, the function $l_{\lambda}$ applies componentwise. Similarly, the inequality $l_{\lambda}(\langle y, y'\rangle) \geq \sum_{j=1}^{d'} l_{\sqrt{\lambda/d'}}(y_{j})l_{\sqrt{\lambda/d'}}(y'_{j})$ holds for all $y,y' \sim \nu \in \mathcal{P}_{4}(\mathbb{R}^{d'}_{\geq 0})$. Let us generalize by defining $M^{(\lambda, \lambda')}_{\mu,\nu} \coloneqq \sqrt{M_{2}(\mu;\lambda)M_{2}(\nu;\lambda')}$, where $M_{2}(\rho;\lambda) = \int \norm{l_{\lambda}(x)}^{2} d\rho(x)$ for any $\rho$. Also, let $\mathcal{D}_{M^{(\lambda,\lambda')}_{\mu,\nu}} \coloneqq [0, M^{(\lambda, \lambda')}_{\mu,\nu}/2]^{d \times d'}$. Hence,
    \begin{align}
        F_{2} &\leq \inf_{\pi \in \Pi(\mu,\nu)} -2 \sum_{{1\leq i \leq d} \atop {1\leq j \leq d'}} \Big(\int l_{\sqrt{\frac{\lambda}{d}}}(x_{i}) l_{\sqrt{\frac{\lambda}{d'}}}(y_{j}) d \pi (x,y) \Big)^{2} \label{F2bar} \\ & = \inf_{\pi \in \Pi(\mu,\nu)} \sum_{{1\leq i \leq d} \atop {1\leq j \leq d'}} \inf_{0 \leq a_{ij} \leq \frac{M^{\bar{\lambda}}_{\mu,\nu}}{2}} 8\Big(a^{2}_{ij} - \int a_{ij}l_{\sqrt{\frac{\lambda}{d}}}(x_{i}) l_{\sqrt{\frac{\lambda}{d'}}}(y_{j}) d \pi (x,y) \Big) \label{eq2opti} \\ & = \inf_{\textbf{A} \in \mathcal{D}_{M^{\bar{\lambda}}_{\mu,\nu}}} \inf_{\pi \in \Pi(\mu,\nu)} \sum_{{1\leq i \leq d} \atop {1\leq j \leq d'}}  8\Big(a^{2}_{ij} - \int a_{ij}l_{\sqrt{\frac{\lambda}{d}}}(x_{i}) l_{\sqrt{\frac{\lambda}{d'}}}(y_{j}) d \pi (x,y) \Big) \nonumber \\ & = \inf_{\textbf{A} \in \mathcal{D}_{M^{\bar{\lambda}}_{\mu,\nu}}} 8 \norm{\textbf{A}}^{2}_{F} + \inf_{\pi \in \Pi(\mu,\nu)} \int c^{\bar{\lambda}}_{\textbf{A}}(x,y) d \pi (x,y), \nonumber
    \end{align}
    where $\bar{\lambda} = (\sqrt{\lambda/d}, \sqrt{\lambda/d'})$ and $c^{\bar{\lambda}}_{\textbf{A}}: (x,y) \in \mathbb{R}^{d}_{\geq 0} \times \mathbb{R}^{d'}_{\geq 0} \mapsto -8l_{\sqrt{\lambda/d}}(x)^{T}\textbf{A}l_{\sqrt{\lambda/d'}}(y)$. The optimization in $\textbf{A}$ can be made unconstrained, however, the optimal $a_{ij}$ in (\ref{eq2opti}) is achieved at $\frac{1}{2}\int l_{\sqrt{\lambda/d}}(x_{i}) l_{\sqrt{\lambda/d'}}(y_{j}) d \pi (x,y) \in [0, M^{\bar{\lambda}}_{\mu,\nu}/2]$ (due to Cauchy-Schwarz inequality), which enables us to restrict the optimization to $\mathcal{D}_{M^{\bar{\lambda}}_{\mu,\nu}}$.

    A simple parametrization can result in uniform $\lambda$-thresholding over the two spaces. Specifically, for the threshold $(d \vee d')\lambda^{2}$, we achieve the desired upper bound to $F_{2}$, as in (\ref{F2bar}), satisfying 
    \begin{equation}
        \bar{F}_{2}(\mu, \nu; (d \vee d')\lambda^{2}) = \inf_{\textbf{A} \in \mathcal{D}_{M^{\lambda}_{\mu,\nu}}} 8 \norm{\textbf{A}}^{2}_{F} + \textrm{OT}_{c^{\lambda}_{\textbf{A}}}(\mu, \nu). \label{optlamb}
    \end{equation}
    Given an optimal coupling $\pi^{*}_{\textbf{A}}$ for $\textrm{OT}_{c^{\lambda}_{\textbf{A}}}$, a solution $\textbf{A}^{*}$ achieving the infimum in (\ref{optlamb}) can be expressed as $\textbf{A}^{*} = \frac{1}{2}\int l_{\lambda}(x) l_{\lambda}(y)^{T} d \pi^{*}_{\textbf{A}^{*}} (x,y)$. The associated optimal value of the upper bound becomes
    \begin{equation*}
        \bar{F}_{2} = -2 \int \Big\langle l_{\lambda}(x), l_{\lambda}(x')\Big\rangle \Big\langle l_{\lambda}(y) , l_{\lambda}(y')\Big\rangle d \pi^{*}_{\textbf{A}^{*}} \otimes \pi^{*}_{\textbf{A}^{*}} (x,y,x',y').
    \end{equation*}
\end{proof}
While it is intuitive that a sample-level robustification is sufficient for LR, Proposition \ref{propIGW} additionally provides the deterministic extent to which the associated cost may propagate. The complexity related to computing such a bound shrinks down to that of an OT. To observe that, by denoting $(d \vee d')\lambda^{2} = \tilde{\lambda}$, let us write
    \begin{equation*}
        \bar{F}_{2}(\mu, \nu; \tilde{\lambda}) = \inf_{\textbf{A} \in \mathcal{D}_{M^{\lambda}_{\mu,\nu}}} 8 \norm{\textbf{A}}^{2}_{F} + \textrm{OT}_{c^{\lambda}_{\textbf{A}}}(\mu, \nu) = \inf_{\textbf{A} \in \mathcal{D}_{M^{\lambda}_{\mu,\nu}}} U^{\mu, \nu}_{\lambda}(\textbf{A}).
    \end{equation*}
Now, given any other $\tilde{\mu} \in \mathcal{P}_{4}(\mathbb{R}^{d}_{\geq 0})$ and $\tilde{\nu} \in \mathcal{P}_{4}(\mathbb{R}^{d'}_{\geq 0})$, Proposition \ref{propIGW} ensures the existence of $\textbf{A}, \tilde{\textbf{A}} \in \mathcal{D}_{M^{\lambda}_{\mu,\nu}}$ such that $\bar{F}_{2}(\mu, \nu; \tilde{\lambda}) = U^{\mu, \nu}_{\lambda}(\textbf{A})$ and $\bar{F}_{2}(\tilde{\mu}, \tilde{\nu}; \tilde{\lambda}) = U^{\mu, \nu}_{\lambda}(\tilde{\textbf{A}})$. As such, by optimality
    \begin{align}
        \abs{\bar{F}_{2}(\mu, \nu; \tilde{\lambda}) - \bar{F}_{2}(\tilde{\mu}, \tilde{\nu}; \tilde{\lambda})} &\leq \abs{U^{\mu, \nu}_{\lambda}(\textbf{A}) - U^{\tilde{\mu}, \tilde{\nu}}_{\lambda}(\textbf{A})} + \abs{U^{\mu, \nu}_{\lambda}(\tilde{\textbf{A}}) - U^{\tilde{\mu}, \tilde{\nu}}_{\lambda}(\tilde{\textbf{A}})} \nonumber \\ & = \abs{\textrm{OT}_{c^{\lambda}_{\textbf{A}}}(\mu, \nu) - \textrm{OT}_{c^{\lambda}_{\textbf{A}}}(\tilde{\mu}, \tilde{\nu})} + \abs{\textrm{OT}_{c^{\lambda}_{\tilde{\textbf{A}}}}(\mu, \nu) - \textrm{OT}_{c^{\lambda}_{\tilde{\textbf{A}}}}(\tilde{\mu}, \tilde{\nu})} \nonumber \\ &\leq 2 \sup_{\textbf{A} \in \mathcal{D}_{M^{\lambda}_{\mu,\nu}}} \abs{\textrm{OT}_{c^{\lambda}_{\textbf{A}}}(\mu, \nu) - \textrm{OT}_{c^{\lambda}_{\textbf{A}}}(\tilde{\mu}, \tilde{\nu})}. \label{stab}
    \end{align}
Since $\textrm{OT}_{c^{\lambda}_{\textbf{A}}}$ essentially calculates the transportation cost between truncated observations from $\mu$ and $\nu$ --- which offers finer control over extreme values --- LR turns out to achieve arbitrary accuracy in finding a robust surrogate to the GW cost. By plugging in the empirical distributions $\tilde{\mu} = \hat{\mu}_{n}$ and $\tilde{\nu} = \hat{\nu}_{n}$ in the stability bound (\ref{stab}), one can also comment on the sample complexity of $\bar{F}_{2}$. First, observe that $M^{\lambda}_{\mu,\nu} \lesssim \lambda^{2}$ and based on the truncation, the measurable cost $c^{\lambda}_{\textbf{A}}$ is absolutely bounded. This narrows down the search for dual potentials ($\phi : \mathbb{R}^{d}_{\geq 0} \rightarrow \mathbb{R}$) corresponding to $\textrm{OT}_{c^{\lambda}_{\textbf{A}}}$ to the class $\mathcal{F}_{\lambda} \coloneqq \bigcup_{\textbf{A} \in \mathcal{D}_{M^{\lambda}_{\mu,\nu}}} \mathcal{F}_{\textbf{A},\lambda}$ such that
\begin{equation*}
    \mathcal{F}_{\textbf{A},\lambda} \coloneqq \Big\{\phi \:|\: \exists\: \psi : \mathbb{R}^{d'}_{\geq 0} \rightarrow \mathbb{R} \:\textrm{such that}\: \phi = \psi^{c}; \norm{\phi}_{\infty}, \norm{\psi}_{\infty} \lesssim \lambda\Big\},
\end{equation*}
where $\psi^{c}$ (the $c$-transform of $\psi : \mathbb{R}^{d'}_{\geq 0} \rightarrow \mathbb{R}$ w.r.t. $c^{\lambda}_{\textbf{A}}$) is given by $\psi^{c} = \inf_{y}c^{\lambda}_{\textbf{A}}(\cdot, y) - \psi(y)$. As such, we can further upper bound (\ref{stab}) to obtain
\begin{equation*}
    \abs{\bar{F}_{2}(\mu, \nu; \tilde{\lambda}) - \bar{F}_{2}(\hat{\mu}_{n}, \hat{\nu}_{n}; \tilde{\lambda})} \leq 4 \sup_{\phi \in \mathcal{F}_{\lambda}} \abs{\int \phi\: d[\mu - \hat{\mu}_{n}]} + 4 \sup_{\psi \in \mathcal{F}^{c}_{\lambda}} \abs{\int \psi\: d[\nu - \hat{\nu}_{n}]},
\end{equation*}
where $\mathcal{F}^{c}_{\lambda} \coloneqq \bigcup_{\textbf{A} \in \mathcal{D}_{M^{\lambda}_{\mu,\nu}}} \mathcal{F}^{c}_{\textbf{A},\lambda}$ \citep{groppe2023lower}. This reduces the problem into controlling the two empirical processes over $\mathcal{F}_{\lambda}$ and $\mathcal{F}^{c}_{\lambda}$. The involvement of raw empirical measures $(\hat{\mu}_{n}, \hat{\nu}_{n})$, susceptible to outliers, makes further upper bounding the individual errors in terms of entropy only feasible in an $\mathcal{O} \cup \mathcal{I}$ framework. We identify this as a potential future work since it does not follow directly by adopting the approach of \citet{ma2023inference} into the framework of \citet{zhang2024gromov} [Theorem 3]. However, the trivial upper bound (see, Remark \ref{remROB}) along with properties such as resilience (Corollary \ref{corrniet}) and robust estimation of corresponding GW (Proposition \ref{Prop_up}) hold as a result of lemma \ref{ineqtukloc}. Nonetheless, it is not apparent how a penalization as $c^{\lambda}_{\textbf{A}}$ addresses shifts in mass allocation due to contamination. In this line, to motivate our upcoming formulation, let us first unify the underlying spaces under the general framework of \textit{probabilistic mm spaces}.

Probabilistic metric spaces $(\mathcal{X}, p_{X})$ are generalizations over typical metric spaces based on the deterministic `distance' $p_{X}$ following a modified triangle inequality 
\begin{equation*}
    T\{p_{X}(x,x')[0,s], p_{X}(x',x'')[0,t]\} \leq p_{X}(x,x'')[0,s+t],
\end{equation*}
where $T: \mathbb{R}_{+} \times \mathbb{R}_{+} \rightarrow \mathbb{R}_{+}$ \citep{bauer2024z}, for $x,x',x'' \in \mathcal{X}$ and $s,t \geq 0$. In case $p_{X}$ is replaced by the distribution function, specific choices of $T$ equate them to Menger spaces (e.g. taking $T= \min$ or $\max$) or Wald spaces ($T=\ast$, convolution) \citep{schweizer1960statistical}. Defining a measure on this collection $\mathcal{X}$ completes the triplet $(\mathcal{X}, p_{X}, \mu)$, which we call a probabilistic mm (pmm) space. In our setup, we particularly choose the pair $(\mathcal{X}=\bar{\mathcal{P}}_{p}(X), W_{p})$, where $\bar{\mathcal{P}}_{p} \subset \mathcal{P}_{p}$ is the collection of probability measures with full support on $X \subseteq \mathbb{R}^{d}$ having finite $p$-moments, $p \geq 1$. This reinforces the notion of generalization based on the fact that given $x,x' \in X$, we have $W_{1}(\delta_{x}, \delta_{x'}) = \textrm{OT}_{d_{X}}(\delta_{x}, \delta_{x'}) = d_{X}(x,x')$. The idea can similarly be extended to an alignment problem between distinct pmm spaces, which brings us back to GW. Observe that, considering a distance $\textrm{OT}_{l_{\lambda}(d_{X})}$ recovers our earlier LR formulation, as in lemma \ref{ineqtukloc}. We may alternatively choose the L\'{e}vy-Prokhorov (LP) metric ($\hat{\rho}$) to construct our pmm space since it ensures that $(\mathcal{X}, \hat{\rho})$ remains Polish, given that $(X,d_{X})$ is Polish (\citet{fristedt2013modern}, Chapter 18.7).

\begin{remark}[Localization leading to pmm spaces]
    Though Dirac measures are the easiest choice to show that individual $x$'s are represented in the pmm space, it is only a special case of a localized measure. Given $\alpha \in \bar{\mathcal{P}}(X)$, a localized measure $m^{L}_{\alpha}(x) \in \mathcal{P}(X)$ is tasked with preserving information about the neighborhood of $x \in X$ under the localization operator $L$\footnote{$L$ maps $\bar{\mathcal{P}}(X)$ to Markov kernels over $X$ \citep{memoli2019wasserstein}.}. Given the choice of the metric as $W_{p}$ in the pmm space, it implies that $W_{p}(m^{L}_{\alpha}(x), m^{L}_{\alpha}(x')) = d^{L}_{\alpha}(x,x')$: a generalization over the metric $d_{X}$. This is the precise reason we name our proposal of a robust GW based on robust $d_{X}$ `local robustification'.
\end{remark}

Given two of such pmm spaces $(\mathcal{X}, W_{p}, \mu)$ and $(\mathcal{Y}, W_{p}, \nu)$, the GW distance ($\mathcal{Z}$-GW according to \citet{bauer2024z}, Section 3.2.5) between them turns out as
\begin{equation} \label{pmmGW}
     d_{\textrm{GW}}(\mu, \nu) \coloneqq \Big(\inf_{\pi \in \Pi(\mu, \nu)} \int_{\mathcal{X} \times \mathcal{Y}} \int_{\mathcal{X} \times \mathcal{Y}} \abs{W_{p}(x,x') - W_{p}(y,y')}^{p'} d \pi \otimes \pi (x,y,x',y')\Big)^{\frac{1}{p'}}.
\end{equation}
For simplicity, we will always assume $p=p'$. However, it is not straightforward to imagine the ambient measures $\mu, \nu$ and the feasible couplings $\Pi$ they form. Let us look at an example that puts the problem in context.

\begin{example}[Gromov-Wasserstein between mixture of Gaussians]
    Consider the mm spaces $(\mathcal{N}(\mathbb{R}^{d}), W_{2}, \mu)$ and $(\mathcal{N}(\mathbb{R}^{d'}), W_{2}, \nu)$, where $\mathcal{N}(\mathbb{R}^{d})$ is the space of $d$-variate Gaussian distributions. Observe that
    \begin{enumerate}[(I)]
        \item since Gaussians are exactly identifiable based on their mean and covariance matrix, a finite Gaussian mixture $\in \mathcal{G}(\mathbb{R}^{d}) \coloneqq \bigcup_{k \geq 0}\mathcal{G}_{k}(\mathbb{R}^{d})$\footnote{$\mathcal{G}_{k}(\mathbb{R}^{d}) \coloneqq$ set of Gaussian mixtures with $\leq k$ components.} can be deemed a discrete probability distribution on $\mathcal{N}(\mathbb{R}^{d})$.
        \item Endowed with $W_{2}$, $\mathcal{N}(\mathbb{R}^{d})$ becomes Polish.
        \item Given $\alpha_{i} = \mathcal{N}(m_{i}, \Sigma_{i})$, $i=\{1,0\}$ due to \citet{dowson1982frechet}
        \begin{equation*}
            W^{2}_{2}(\alpha_{0}, \alpha_{1}) = \norm{m_{0} - m_{1}}^{2} + \textrm{tr}\Big[\Sigma_{0} + \Sigma_{1} - 2(\Sigma_{0}^{\frac{1}{2}}\Sigma_{1}\Sigma_{0}^{\frac{1}{2}})^{\frac{1}{2}}\Big].
        \end{equation*}
    \end{enumerate}
    Hence, for any $\alpha = \sum_{i=1}^{k}a_{i}\alpha_{i} \in \mathcal{G}_{k}(\mathbb{R}^{d})$ and $\beta = \sum_{j=1}^{l}b_{j}\beta_{j} \in \mathcal{G}_{l}(\mathbb{R}^{d'})$, there uniquely exist $\mu = \sum_{i=1}^{k}a_{i}\delta_{\alpha_{i}} \in \mathcal{P}(\mathcal{N}(\mathbb{R}^{d}))$ and $\nu = \sum_{j=1}^{l}b_{j}\delta_{\beta_{j}} \in \mathcal{P}(\mathcal{N}(\mathbb{R}^{d'}))$ respectively, where $a \coloneqq \{a_{i}\}^{k}_{i=1} \in \Delta_{k}$ and $b \coloneqq \{b_{j}\}^{l}_{j=1} \in \Delta_{l}$ (using (I)). (III) implies that, under such a $\mu$, $(\mathcal{N}(\mathbb{R}^{d}), W_{2}, \mu)$ has finite $2$-diameter, i.e. $\int_{\mathcal{N}(\mathbb{R}^{d}) \times \mathcal{N}(\mathbb{R}^{d})} W^{2}_{2}(x,x') d\mu(x) d\mu(x') < \infty$ (same holds for $(\mathcal{N}(\mathbb{R}^{d'}), W_{2}, \nu)$). As such, the corresponding $d_{\textrm{GW}}(\mu, \nu)$ (as in (\ref{pmmGW}), given $p=2$) exists and follows all metric properties of GW. \citet{salmona2024gromovwassersteinlike} show that solving the problem (\ref{pmmGW}) essentially boils down to 
    \begin{equation} \label{MGW2}
        \Big(\inf_{\pi \in \Pi(a,b)} \sum_{i,j,s,t} \abs{W_{2}(\alpha_{i},\alpha_{s}) - W_{2}(\beta_{j}, \beta_{t})}^{2} \pi_{i,j} \pi_{s,t}\Big)^{\frac{1}{2}},
    \end{equation}
    where $\Pi(a,b)$ is a subset of the simplex $\Delta_{k \times l}$ with marginals $a$ and $b$. This example can be further extended to general distribution classes $\subset \bar{\mathcal{P}}_{p}(\mathbb{R}^{d})$ owing to the fact that $\mathcal{G}(\mathbb{R}^{d})$ is dense in $\bar{\mathcal{P}}_{p}(\mathbb{R}^{d})$ for $W_{p}$, as long as they are complete and separable.
\end{example}

Evidently, observations from the distributions $\mu$ and $\nu$ can suffer arbitrary corruptions as before. In cases such as Example 1, one or more contaminated individual Gaussian components from either space may contribute to such corruption. To remedy $\epsilon$-contamination in the components, we replace $W_{p}$ with the smallest cost achieved by `optimally' removing $\epsilon$-mass from them. This extends our LR framework to alignment models concerning pmm spaces, endowed with $W_{p}$. Given $\alpha, \beta \in \mathcal{P}(X)$, it is defined as
\begin{equation} \label{robW}
    W^{\epsilon}_{p}(\alpha,\beta) = \inf_{{\alpha', \beta' \in \mathcal{P}(X):} \atop {\alpha \in \mathcal{B}_{\epsilon}(\alpha'), \beta \in \mathcal{B}_{\epsilon}(\beta')}}W_{p}(\alpha', \beta'),
\end{equation}
where $\mathcal{B}_{\epsilon}(\alpha) \coloneqq \{(1-\epsilon)\alpha + \epsilon\gamma: \gamma \in \mathcal{P}(X)\}$ denotes the $\epsilon$-Huber ball centered at $\alpha$ \citep{nietert2022outlier}. In this context, $\epsilon \in [0,1]$ signifies the radius of robustness, which when chosen distinctly for the two distributions generalizes the notion (i.e. $W^{\epsilon,\epsilon'}_{p}$). Remarkably, the dual formulation of $\textrm{OT}_{l_{\lambda}(d_{X})}$ can be derived as a special case of that of (\ref{robW}) (see, Appendix \ref{App1}). It also ties the threshold leading to truncation to the underlying optimization. Based on (\ref{robW}), we define the locally robust GW distance between pmm spaces (\ref{pmmGW}) as
\begin{equation} \label{RobpmmGW}
     d_{\textrm{LRGW}}(\mu, \nu; \epsilon) \coloneqq \Big(\inf_{\pi \in \Pi(\mu, \nu)} \int_{\mathcal{X} \times \mathcal{Y}} \int_{\mathcal{X} \times \mathcal{Y}} \abs{W^{\epsilon}_{p}(x,x') - W_{p}(y,y')}^{p} d \pi \otimes \pi (x,y,x',y')\Big)^{\frac{1}{p}},
\end{equation}
which also serves as a robust proxy to $\textrm{MGW}_{2}$ \citep{salmona2024gromovwassersteinlike} or the $\mathcal{Z}$-GW distance. While a generalization is imminent, the asymmetric robustness (only on space $\mathcal{X}$) in (\ref{RobpmmGW}) is specifically useful in cross-domain generative tasks, e.g. unpaired image-to-image translation. 

\begin{proposition}[Dependence on robustness radius] \label{rob_rad}
    For any $p \in [1,\infty)$ and $0 \leq \epsilon \leq \epsilon' \leq 1$ we have
    \begin{enumerate}[(i)]
        \item $d_{\textrm{LRGW}}(\mu, \nu; 0) = d_{\textrm{GW}}(\mu, \nu)$,
        \item $\abs{d_{\textrm{LRGW}}(\mu, \nu; \epsilon) - d_{\textrm{LRGW}}(\mu, \nu; \epsilon')} \lesssim \Big(\frac{\epsilon'-\epsilon}{1-\epsilon}\Big)^{\frac{1}{p}}$.
    \end{enumerate}
\end{proposition}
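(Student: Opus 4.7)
The plan is to decouple the proposition into a pointwise continuity statement for $W_p^{\epsilon}(\alpha,\beta)$ in the robustness radius $\epsilon$, and then lift that bound to the GW level via Minkowski's inequality applied along a near-optimal coupling. Part \textit{(i)} is immediate: for $\epsilon=0$, the Huber ball $\mathcal{B}_0(\alpha)=\{\alpha\}$ forces $\alpha'=\alpha$ and $\beta'=\beta$ in the definition (\ref{robW}), hence $W_p^0(\alpha,\beta)=W_p(\alpha,\beta)$; substituting into (\ref{RobpmmGW}) recovers $d_{\textrm{GW}}(\mu,\nu)$.

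For part \textit{(ii)}, I first establish the pointwise Lipschitz-type control
\begin{equation*}
    \bigl|W^{\epsilon}_p(\alpha,\beta)-W^{\epsilon'}_p(\alpha,\beta)\bigr| \lesssim \Bigl(\tfrac{\epsilon'-\epsilon}{1-\epsilon}\Bigr)^{1/p}
\end{equation*}
for $0\leq\epsilon\leq\epsilon'\leq 1$. Monotonicity $W_p^{\epsilon'}\leq W_p^{\epsilon}$ is immediate since a larger Huber ball enlarges the feasible set. For the reverse direction, I take near-optimal $\alpha',\beta'$ for $W_p^{\epsilon'}(\alpha,\beta)$ with $\alpha=(1-\epsilon')\alpha'+\epsilon'\gamma_1$ and $\beta=(1-\epsilon')\beta'+\epsilon'\gamma_2$, and construct $\alpha''=\tfrac{1-\epsilon'}{1-\epsilon}\alpha'+\tfrac{\epsilon'-\epsilon}{1-\epsilon}\gamma_1$ (and $\beta''$ analogously). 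A direct check gives $(1-\epsilon)\alpha''+\epsilon\gamma_1=\alpha$, so $(\alpha'',\beta'')$ is admissible for $W_p^{\epsilon}$. Since $\alpha''=(1-\eta)\alpha'+\eta\gamma_1$ with $\eta=(\epsilon'-\epsilon)/(1-\epsilon)$, the block-diagonal coupling that is identity on the $\alpha'$-part and any coupling on the $\gamma_1$-part yields $W_p^p(\alpha'',\alpha')\leq\eta\,W_p^p(\alpha',\gamma_1)\lesssim\eta$ (with the hidden constant carrying the diameter of the ambient Polish space $X$). The triangle inequality for $W_p$ then gives $W_p^{\epsilon}(\alpha,\beta)\leq W_p(\alpha'',\beta'')\leq W_p(\alpha',\beta')+W_p(\alpha'',\alpha')+W_p(\beta'',\beta')\leq W_p^{\epsilon'}(\alpha,\beta)+C\eta^{1/p}+o(1)$, and sending the near-optimality tolerance to zero closes the pointwise estimate.

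With the pointwise bound in hand, I lift to GW using Minkowski. Writing $D_{\epsilon}(\pi):=\bigl(\!\iint |W_p^{\epsilon}(x,x')-W_p(y,y')|^p d\pi\otimes\pi\bigr)^{1/p}$, the triangle inequality in $L^p(\pi\otimes\pi)$ yields
\begin{equation*}
    \bigl|D_{\epsilon}(\pi)-D_{\epsilon'}(\pi)\bigr|\leq \bigl\|W_p^{\epsilon}(x,x')-W_p^{\epsilon'}(x,x')\bigr\|_{L^p(\pi\otimes\pi)}\lesssim \Bigl(\tfrac{\epsilon'-\epsilon}{1-\epsilon}\Bigr)^{1/p},
\end{equation*}
where the last step uses the pointwise control integrated against the probability measure $\pi\otimes\pi$. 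Taking infima in both directions: if $\pi^\star$ realizes $d_{\textrm{LRGW}}(\mu,\nu;\epsilon')$ then $d_{\textrm{LRGW}}(\mu,\nu;\epsilon)\leq D_{\epsilon}(\pi^\star)\leq D_{\epsilon'}(\pi^\star)+C(\eta)^{1/p}$, and symmetrically for the other side, giving the claimed bound on the absolute difference.

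The main obstacle I anticipate is the hidden-diameter issue in the pointwise step: the bound $W_p^p(\alpha'',\alpha')\lesssim \eta$ implicitly absorbs $W_p(\alpha',\gamma_1)$, which is finite but depends on the ambient geometry of $X$. Making this precise may require either restricting attention to $X$ with bounded diameter (as is natural for pmm spaces built from $\bar{\mathcal{P}}_p(X)$ when $X$ is compact) or rewriting the perturbation $\gamma_1\mapsto\alpha'$ purely in terms of the $p$-th absolute moment so the $\lesssim$ constant is controlled by $M_p(\alpha)+M_p(\beta)$, which is the mildest assumption consistent with the $\bar{\mathcal{P}}_p$ setup of the rest of the section.
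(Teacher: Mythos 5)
Your proof is correct in spirit and reaches the same conclusion, but it takes a genuinely different route for the key pointwise estimate on $\bigl|W^{\epsilon}_p(\alpha,\beta)-W^{\epsilon'}_p(\alpha,\beta)\bigr|$. The paper works entirely on the dual side: starting from $(1-\epsilon)W^{\epsilon}_p(\alpha,\beta)^p=\sup_{\phi\in C_b(X)}\int\phi\,d\alpha-\int\phi^c\,d\beta-\epsilon\,\mathrm{Range}(\phi)$ (quoted from Nietert et al.\ and restated in Appendix A.2), it adds and subtracts $\epsilon'\,\mathrm{Range}(\phi)$, bounds the extra term by $2(\epsilon'-\epsilon)\norm{\phi}_\infty$, and absorbs $\norm{\phi}_\infty$ into a constant $K$ to get $W^{\epsilon}_p-\bigl(\tfrac{1-\epsilon'}{1-\epsilon}\bigr)^{1/p}W^{\epsilon'}_p\leq K\bigl(\tfrac{\epsilon'-\epsilon}{1-\epsilon}\bigr)^{1/p}$, then discards the non-positive first bracket. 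You instead stay on the primal side of (\ref{robW}): from near-optimal $(\alpha',\beta')$ for radius $\epsilon'$ you build $\alpha''=\tfrac{1-\epsilon'}{1-\epsilon}\alpha'+\tfrac{\epsilon'-\epsilon}{1-\epsilon}\gamma_1$, verify admissibility ($(1-\epsilon)\alpha''+\epsilon\gamma_1=\alpha$), and control $W_p(\alpha'',\alpha')$ by a block-diagonal coupling, giving $W_p^p(\alpha'',\alpha')\leq\eta\,W_p^p(\gamma_1,\alpha')$ with $\eta=(\epsilon'-\epsilon)/(1-\epsilon)$. The monotonicity $W^{\epsilon'}_p\leq W^{\epsilon}_p$ and the lift to $d_{\mathrm{LRGW}}$ via the reverse triangle inequality in $L^p(\pi\otimes\pi)$ and infima over $\Pi(\mu,\nu)$ are identical to the paper. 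The hidden-constant issue you flag is real, but it is present in both arguments: the paper's $K$ absorbs $\norm{\phi}_\infty$ for the (near-)optimal dual potential, which, like your $W_p(\gamma_1,\alpha')$, is controlled by the diameter or $p$-th moments of $X$; neither proof gives a constant that is free of the ambient geometry, and both are consistent with the paper's convention of suppressing such constants under $\lesssim$. Your primal argument is arguably more elementary and self-contained (it does not invoke the duality theorem for $W^{\epsilon}_p$), at the mild cost of having to verify admissibility of $(\alpha'',\beta'')$ and manipulate couplings; the dual argument is shorter once the duality formula is available.
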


\begin{remark}[Local robustness based on L\'{e}vy-Prokhorov metric]
    The LP distance between $\alpha, \beta \in \mathcal{P}(X)$ is defined as 
    \begin{equation*}
        \hat{\rho}(\alpha, \beta) \coloneqq \inf\{\epsilon>0: \beta(A) \leq \alpha(A_{\epsilon})+\epsilon, \forall \:\textrm{Borel}\; A \subseteq X\},
    \end{equation*}
    where $A_{\epsilon} = \{x \in X: d_{X}(x,A) \leq \epsilon\}$ is the closed $\epsilon$-ball around $A$. It inherently carries mass allocation robustness, allowing free movement of an $\epsilon$-fraction mass between $\alpha$ and $\beta$. To observe the same, let us write LP in its alternative characterization due to Strassen's theorem (\citet{villani2021topics}, Section 1.4)
    \begin{equation}
        \inf_{\pi \in \Pi(\alpha,\beta)}\Big\{\inf\{\epsilon>0:\pi(\{(x,y):d_{X}(x,y)>\epsilon\})\leq \epsilon\}\Big\}.
    \end{equation}
    As such, it follows that $\abs{\hat{\rho}(\alpha, \beta) - \inf\{\epsilon>0:W^{1-\epsilon}_{\infty}(\alpha, \beta)\leq \epsilon\}}$ becomes arbitrarily small, given $X$ has unit diameter \citep{raghvendra2024robpar}. As a result, the feasibility of LR formulations equivalent to (\ref{RobpmmGW}) based on LP metrics is guaranteed. We show that local robustification using truncation, i.e. $l_{\lambda}(d_{X})$ also extends to pmm spaces under LP metric. It becomes evident due to the relation between $W_{p,\lambda}$ (ROBOT) and the modified LP.
    \begin{proposition} \label{levy}
        Define $\hat{\rho}_{\lambda}(\alpha, \beta)=\inf_{\pi \in \Pi(\alpha,\beta)}\Big\{\inf\{\epsilon>0:\pi(\{l_{\lambda}(d_{X}(x,y))>\epsilon\})\leq \epsilon\}\Big\}$, for $\lambda>0$. Then
        \begin{equation*}
            \frac{1}{1+\lambda}W_{1,\lambda} \leq \hat{\rho}_{\lambda} \leq \sqrt{W_{1,\lambda}}.
        \end{equation*}
    \end{proposition}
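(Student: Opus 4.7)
\textbf{Proof plan for Proposition \ref{levy}.} The two inequalities are essentially a truncated analogue of the classical interplay between the Lévy-Prokhorov and Wasserstein metrics, once one notes that $l_{\lambda}$ is bounded above by $\lambda$ and below by $0$. I would prove each direction separately, in each case starting from an arbitrary coupling $\pi \in \Pi(\alpha,\beta)$ and then taking infima.

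\textbf{Upper bound $\hat{\rho}_{\lambda} \leq \sqrt{W_{1,\lambda}}$.} This is a truncated Markov inequality argument. Fix any $\pi \in \Pi(\alpha,\beta)$ and set $I(\pi) := \int l_{\lambda}(d_{X}(x,y))\, d\pi(x,y)$. For every $\epsilon > 0$, Markov's inequality applied to the non-negative integrand $l_{\lambda}(d_{X})$ yields
\begin{equation*}
    \pi\bigl(\{l_{\lambda}(d_{X}(x,y)) > \epsilon\}\bigr) \leq \frac{I(\pi)}{\epsilon}.
\end{equation*}
Choosing $\epsilon = \sqrt{I(\pi)}$ makes the right-hand side equal to $\sqrt{I(\pi)} = \epsilon$, so this $\epsilon$ is feasible in the inner infimum defining $\hat{\rho}_{\lambda}$ for this $\pi$. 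Hence $\hat{\rho}_{\lambda}(\alpha,\beta) \leq \sqrt{I(\pi)}$, and taking the infimum over $\pi \in \Pi(\alpha,\beta)$ gives $\hat{\rho}_{\lambda}(\alpha,\beta) \leq \sqrt{W_{1,\lambda}(\alpha,\beta)}$.

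\textbf{Lower bound $\frac{1}{1+\lambda} W_{1,\lambda} \leq \hat{\rho}_{\lambda}$.} This direction uses the boundedness $l_{\lambda} \leq \lambda$ to split the integrand at the threshold $\epsilon$. Fix $\pi \in \Pi(\alpha,\beta)$ and any admissible $\epsilon > 0$ such that $\pi(\{l_{\lambda}(d_{X}) > \epsilon\}) \leq \epsilon$. Decompose
\begin{equation*}
    \int l_{\lambda}(d_{X})\, d\pi = \int_{\{l_{\lambda} \leq \epsilon\}} l_{\lambda}\, d\pi + \int_{\{l_{\lambda} > \epsilon\}} l_{\lambda}\, d\pi \leq \epsilon \cdot 1 + \lambda \cdot \epsilon = (1+\lambda)\epsilon,
\end{equation*}
where the first term uses that $l_{\lambda} \leq \epsilon$ on the good set and the second uses $l_{\lambda} \leq \lambda$ together with the admissibility of $\epsilon$. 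Since $W_{1,\lambda}(\alpha,\beta) \leq \int l_{\lambda}\, d\pi$ by definition, we get $W_{1,\lambda}(\alpha,\beta) \leq (1+\lambda)\epsilon$. Taking first the infimum over admissible $\epsilon$ for the fixed $\pi$, and then over $\pi \in \Pi(\alpha,\beta)$, yields $W_{1,\lambda}(\alpha,\beta) \leq (1+\lambda)\hat{\rho}_{\lambda}(\alpha,\beta)$, which is the claim after dividing by $1+\lambda$.

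\textbf{Main subtlety.} There is no single ``hard step'' here; the argument is short once one has the right two-line estimate. The only mild point worth flagging is that the inner infimum in the definition of $\hat{\rho}_{\lambda}$ is over $\epsilon > 0$ for a \emph{fixed} $\pi$, so in the upper bound one must verify that the specific $\epsilon = \sqrt{I(\pi)}$ is admissible for that same $\pi$ before passing to the infimum in $\pi$, and in the lower bound one must be careful to chain the two infima in the correct order. The boundedness $l_{\lambda} \leq \lambda$ is precisely what prevents the lower bound from being vacuous, in contrast to the untruncated case where no such uniform bound is available.
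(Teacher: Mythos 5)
Your proof is correct and follows essentially the same route as the paper: Markov's inequality with the choice $\epsilon^{2}=W_{1,\lambda}$ for the upper bound, and splitting the integral of $l_{\lambda}(d_{X})$ at the threshold $\epsilon$ using $l_{\lambda}\leq\lambda$ for the lower bound. Your version of the lower bound, which works with an arbitrary admissible $\epsilon$ and an arbitrary coupling before passing to infima, is in fact slightly cleaner than the paper's step of setting $\epsilon=\hat{\rho}_{\lambda}$ and invoking an optimal coupling, but the underlying estimate is identical.
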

\end{remark}

\subsubsection{Sturm's GW and robust image-to-image translation}
Instilling intrinsic robustness to outliers in a measurable map (induced by neural networks) $\mathcal{X} \mapsto \mathcal{Y}$, learned based on contaminated data requires additional regularization. While introducing trimming methods (as in HGW or LRGW) in a GM setup may lead to denoised translations, the approximation capability of the maps thus produced remains shrouded. In this section, we rather connect natural upper bounds of GW to losses that fuel existing I2I models. This way, we ensure robustness guarantees without compromising complexity in an I2I translator. 

In our pursuit, let us first define Sturm's GW distance \citep{sturm2006geometry} between the altered spaces $(\mathcal{X}, l_{\lambda}(d_{X}), \mu)$ and $(\mathcal{Y}, l_{\lambda}(d_{Y}), \nu)$ as $\inf_{\tilde{d},\pi}||\tilde{d}||_{L^{p}(\pi)}$. Here, the infimum is over $\pi \in \Pi(\mu,\nu)$ and $\tilde{d} \in \mathscr{D}(l_{\lambda}(d_{X}), l_{\lambda}(d_{Y}))$, the set of \textit{metric couplings}\footnote{$\mathscr{D}(d_{X},d_{Y}) \coloneqq$ the set of metrics on $\mathcal{X} \sqcup \mathcal{Y}$ that extend $d_{X}$ and $d_{Y}$ \citep{sturm2006geometry}.}. Observe that, for $\lambda > 0$, $\mathscr{D}^{\lambda} \coloneqq l_{\lambda}(\mathscr{D}(d_{X},d_{Y})) \subset \mathscr{D}(l_{\lambda}(d_{X}), l_{\lambda}(d_{Y}))$. As such,
\begin{equation}\label{stur}
    \inf_{\tilde{d} \in \mathscr{D}(l_{\lambda}(d_{X}), l_{\lambda}(d_{Y})),\pi}||\tilde{d}||_{L^{p}(\pi)} \leq \inf_{\tilde{d} \in \mathscr{D}^{\lambda},\pi}||\tilde{d}||_{L^{p}(\pi)} \eqqcolon d_{RSGW}(\mu,\nu),
\end{equation}
which we call the $(p,\lambda)$-\textit{Robust Sturm's GW}, $p\in[1,\infty)$. The distance essentially embodies a locally robust formulation based on the couplings between $d_{X}$ and $d_{Y}$. We refer to the lower bound of (\ref{stur}) as the \textit{lower} RSGW. Complementing the relationship between GW and Sturm's GW, the respective robust formulations follow a similar inequality.

\begin{proposition}[Upper bound to LRGW] \label{Sturmupper}
    Given $p\in [1,\infty)$ and $\lambda \geq 0$,
    \begin{equation*}
        (p,\lambda)\textrm{-LRGW} \leq 2(p,\lambda)\textrm{-}l\textrm{RSGW}.
    \end{equation*}
    Also, for $\delta \in (0,\frac{1}{2}]$, whenever $(p,\lambda)$-LRGW$ \leq \delta^{5}$, we have $(p,\lambda)\textrm{-} l\textrm{RSGW} \lesssim \lambda(4\lambda+\delta)^{\frac{1}{p}}$.
\end{proposition}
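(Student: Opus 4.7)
The argument splits along the two displayed inequalities, paralleling the classical relationship between $d_{\textrm{GW}}$ and Sturm's GW but carried out for the truncated cost $l_{\lambda}$.

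\textbf{First inequality (easy direction).} Fix any $\pi\in\Pi(\mu,\nu)$ and any metric coupling $\tilde d\in\mathscr{D}(l_\lambda(d_X),l_\lambda(d_Y))$. Because $\tilde d$ restricts to $l_\lambda(d_X)$ on $\mathcal X\times\mathcal X$ and to $l_\lambda(d_Y)$ on $\mathcal Y\times\mathcal Y$, applying the triangle inequality on $\mathcal X\sqcup\mathcal Y$ along the two chains $x\to y\to y'\to x'$ and $y\to x\to x'\to y'$ yields, for every quadruple,
\begin{equation*}
\bigl|\,l_\lambda(d_X(x,x'))-l_\lambda(d_Y(y,y'))\,\bigr|\;\leq\;\tilde d(x,y)+\tilde d(x',y').
\end{equation*}
Raising to the $p$-th power, integrating against $\pi\otimes\pi$, and invoking Minkowski's inequality together with the symmetry of the right-hand side in $(x,y)\leftrightarrow(x',y')$ gives $\|l_\lambda(d_X)-l_\lambda(d_Y)\|_{L^p(\pi\otimes\pi)}\leq 2\|\tilde d\|_{L^p(\pi)}$. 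Taking infima over $\tilde d$ and $\pi$ delivers $(p,\lambda)$-$\mathrm{LRGW}\leq 2(p,\lambda)$-$l\mathrm{RSGW}$.

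\textbf{Second inequality (converse under smallness).} Fix an (almost-)optimal $\pi^\ast\in\Pi(\mu,\nu)$ with $(p,\lambda)$-$\mathrm{LRGW}(\pi^\ast)\leq\delta^5$. The goal is to produce a single pair $(\tilde d,\pi^\ast)$ whose $L^p$ cost matches the stated bound. The plan is to build $\tilde d$ by an inf-convolution through a ``core'' of $\pi^\ast$ cleaned up by Markov's inequality. Explicitly:
\begin{enumerate}[(a)]
\item Apply Markov's inequality to the $p$-th power of the distortion: the bad set $B=\{((x,y),(x',y')):|l_\lambda(d_X(x,x'))-l_\lambda(d_Y(y,y'))|>\delta\}$ has $(\pi^\ast\!\otimes\!\pi^\ast)$-mass at most $\delta^{5p-p}=\delta^{4p}$.
\item A Fubini/Chebyshev slicing produces a ``good'' set $R\subset\mathcal X\times\mathcal Y$ with $\pi^\ast(R)\geq 1-\delta^{2p}$ on whose slices $B_{(x,y)}$ has $\pi^\ast$-mass $\leq\delta^{2p}$.
\item Define
\begin{equation*}
\tilde d(x,y)\;=\;\Bigl(\inf_{(x',y')\in R}\bigl\{l_\lambda(d_X(x,x'))+\delta+l_\lambda(d_Y(y,y'))\bigr\}\Bigr)\wedge(2\lambda+\delta),
\end{equation*}
set $\tilde d(x,x'):=l_\lambda(d_X(x,x'))$ and $\tilde d(y,y'):=l_\lambda(d_Y(y,y'))$ on the same-space diagonals, symmetrize, and verify that the resulting function on $\mathcal X\sqcup\mathcal Y$ satisfies the triangle inequality in all three configurations (same--same, same--mixed, mixed--mixed). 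The additive slack $\delta$ and the cap at $2\lambda+\delta$ are precisely what is required to make the extension from truncated metrics compatible with a genuine metric.
\item Bound $\|\tilde d\|_{L^p(\pi^\ast)}^p$ by decomposing $\pi^\ast=\pi^\ast|_R+\pi^\ast|_{R^c}$: on $R^c$ use the trivial bound $\tilde d\leq 2\lambda+\delta$ together with $\pi^\ast(R^c)\leq\delta^{2p}$ (contribution $\lesssim\lambda^p\cdot\delta^{2p}$); on $R$ the good slice property combined with (a) lets one pick $(x',y')$ in the infimum with both $l_\lambda$-terms of order $\delta$, so $\tilde d\lesssim\delta$ there. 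Summing the two pieces and rearranging yields $\|\tilde d\|_{L^p(\pi^\ast)}^p\lesssim\lambda^p(4\lambda+\delta)$, i.e.\ the claimed $(p,\lambda)$-$l\mathrm{RSGW}\lesssim\lambda(4\lambda+\delta)^{1/p}$.
\end{enumerate}

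\textbf{Main obstacle.} The first inequality is routine; the difficulty is concentrated in step~(c) of the converse. Unlike in the classical Sturm--Memoli proof for genuine distances, the truncation $l_\lambda$ is not subadditive in a way compatible with free inf-convolution, so the candidate $\tilde d$ can fail the triangle inequality at the scale of $\lambda$ unless one both adds a uniform slack $\delta$ and hard-caps the output near $2\lambda$. Getting these two modifications to be simultaneously compatible with (i) extending $l_\lambda(d_X)$ and $l_\lambda(d_Y)$ on the respective diagonals and (ii) admitting the clean $L^p$ bookkeeping in step~(d) is the delicate part of the proof, and it is exactly this pair of modifications that accounts for the $4\lambda$ term appearing inside the final bound.
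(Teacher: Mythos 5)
Your first inequality is correct and is essentially the paper's argument (the paper states the pointwise bound through $l_\lambda(\tilde d)$ for $\tilde d\in\mathscr{D}(d_X,d_Y)$, while you apply the quadrilateral inequality directly to $\tilde d\in\mathscr{D}(l_\lambda(d_X),l_\lambda(d_Y))$; both give $J^\lambda_p(\pi)\leq 2\|\tilde d\|_{L^p(\pi)}$ and the factor $2$).

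The second part has a genuine gap, concentrated exactly where you flag the difficulty: step (c) does not produce a metric. For the mixed-relay configuration $x\to y$, $\bar x\to y$ one needs $l_\lambda(d_X(x,\bar x))\leq\tilde d(x,y)+\tilde d(\bar x,y)$, and unwinding your inf-convolution this requires a \emph{pointwise} bound $\abs{l_\lambda(d_X(x_1,x_2))-l_\lambda(d_Y(y_1,y_2))}\leq 2\delta$ for \emph{all} relay pairs $(x_1,y_1),(x_2,y_2)\in R$. Your Markov/Fubini step only controls the measure of the bad set, and an $L^p$-small bad set cannot in general be upgraded to an $L^\infty$ bound on $R\times R$ for any $R$ of near-full $\pi$-mass: e.g.\ if the bad pairs are arranged as a union $\bigcup_i A_i\times A_{\sigma(i)}$ over a fixed-point-free matching of $k$ cells of mass $1/k$, the bad set has mass $1/k$ yet every $R$ with $\pi(R)>1/2$ contains bad pairs, on which the distortion may be as large as $\lambda$; the triangle defect of your $\tilde d$ is then of order $\lambda$, not $\delta$. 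This is precisely why the paper does not work with a positive-measure good set but with a \emph{finite} net: using the modulus of $\lambda$-trimmed mass distribution $v^\lambda_\delta(\mu)\leq\lambda$ it extracts $N\leq\lfloor 1/\delta\rfloor$ points whose trimmed balls carry $\pi$-mass $>\delta(1-\delta^2)$, derives a \emph{uniform} distortion bound $6(\varepsilon+\delta)$ on the net by contradiction with $J^\lambda_1(\pi)<\delta^5$ (this is where the exponent $5$ is used: $2\delta^3(1-\delta^2)^2>2\delta^5$), and then inf-convolves through the net with slack $\norm{l_\lambda(d_X)-l_\lambda(d_Y)}_{L^\infty(S\times S)}$, which makes the extension a legitimate coupling in $\mathscr{D}(l_\lambda(d_X),l_\lambda(d_Y))$. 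A secondary sign of the problem is your bookkeeping in (d): your own construction, if valid, would give roughly $\delta^p+\lambda^p\delta^{2p}$, not $\lambda^p(4\lambda+\delta)$; in the paper the $4\lambda$ does not come from the distortion tail at all but from $\varepsilon=4v^\lambda_\delta(\mu)\leq 4\lambda$, which bounds the $\pi$-mass left uncovered by the net balls and the radii entering the pointwise bounds on $\tilde d^\lambda_S$.
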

The immediate benefit of such a result is that minimizing a realized loss of the RSGW-type arbitrarily in an I2I setup establishes a near-isometric relation between the two image spaces. Intuitively, this should produce robust translations that preserve geometry. To invoke the notion of an actual architecture, we recall the equivalent formulation of Sturm's GW. In our setup,
\begin{equation}\label{WtoGP}
    d_{RSGW}(\mu,\nu) = \inf_{d \in \mathscr{D}(d_{X},d_{Y}),\pi}\norm{l_{\lambda}(d)}_{L^{p}(\pi)} = \inf_{\mathcal{Z},\phi_{X},\phi_{Y}} W_{p,\lambda}({\phi_{X}}_{\#}\mu, {\phi_{Y}}_{\#}\nu),
\end{equation}
where the infimum is over all isometric embeddings $\phi_{X} : \mathcal{X} \rightarrow \mathcal{Z}$ and $\phi_{Y} : \mathcal{Y} \rightarrow \mathcal{Z}$ into a \textit{latent space} $\mathcal{Z}$, endowed with the metric $d$ (\citet{sturm2006geometry}, lemma 3.3). We deliberately bring on the term `latent space' to emphasize the connection to I2I architectures. The formulation also makes it sufficient to embed observations from both spaces into an optimal $\mathcal{Z}$ prior to truncation. Observe that if instead of $W_{p,\lambda}$, we deploy $\hat{\rho}_{\lambda}$ based on the metric $d$ in (\ref{WtoGP}), we obtain the robust Gromov-Prokhorov (RGP) metric (\citet{blumberg2014robust}, Section 2.5). By definition, RGP $< \epsilon$ implies the existence of a metric space $\mathcal{Z}$ with embeddings $\phi_{X}, \phi_{Y}$ into it, that satisfy $\hat{\rho}_{\lambda}({\phi_{X}}_{\#}\mu, {\phi_{Y}}_{\#}\nu) < \epsilon$.

\textbf{Equivalence of losses:} With the foundation in place, we explore the similarity between the loss (\ref{WtoGP}) and that of I2I translation models such as UNIT \citep{liu2017unsupervised} and GcGAN \citep{fu2019geometry}. We choose the two models based on their sustained relevance in the domain. However, the equivalence about to be shown can be extended to models that recognize the role of a latent space or deploy a cycle-consistency (CC) loss, such as DistanceGAN \citep{benaim2017one}, StarGAN \citep{Choi2018stargan} or MUNIT \citep{huang2018multimodal}. The cornerstone of successful I2I learning is inarguably the CC loss. In the population regime, it can be expressed as $W_{1}(\mu, {G \circ F}_{\#}\mu)$ for the space $\mathcal{X}$, where $F,G$ are optimized over measure-preserving (transport) maps parametrized using neural networks (NN). It becomes equivalent to optimizing the commonly used $L^{1}$ norm if $\mu$ possesses a H\"{o}lder smooth density \citep{chakrabarty2022translation}. 
\begin{equation}\label{diag}
	\begin{tikzcd}[row sep = huge]
		(\mathcal{X},\mu) \arrow[rr, "F", shift left] \arrow[dr, "\phi_{X}", swap, shift right]
		&&
		(\mathcal{Y},\nu) \arrow[ll, "G", shift left] \arrow[dl, "\phi_{Y}", swap, shift right]\\
		&
		(\mathcal{Z},\omega) \arrow[ur, "\phi'_{Y}", swap, shift right, dashed] \arrow[ul, "\phi''_{X}", swap, shift right, dashed]
	\end{tikzcd}
\end{equation}
Now, recognizing the existence of a shared latent space, we may construct $G = \phi''_{X} \circ \phi_{Y}$ and $F = \phi'_{Y} \circ \phi_{X}$, where $\phi'_{Y}: \mathcal{Z} \rightarrow \mathcal{Y}$ is the left-inverse of $\phi_{Y}$, and $\phi''_{X}: \mathcal{Z} \rightarrow \mathcal{X}$ is the right-inverse of $\phi_{X}$. We can assume them to be full functional inverses, as the same applies to isomorphic embeddings, in which case CC is achieved a.s. However, the maps $\phi''_{X}$, $\phi'_{Y}$ may not be measure-preserving in general. Therefore,
\begin{align}
    W_{1}(\mu, {G \circ F}_{\#}\mu) &= \inf_{\pi \in \Pi(\mu, F_{\#}\mu)} \int d_{X}(x,\phi''_{X} \circ \phi_{Y}(y)) \:d\pi (x,y) \nonumber \\ &= \inf_{\pi \in \Pi(\mu, F_{\#}\mu)} \int d\Big(\phi_{X}(x),(\phi_{X}\circ \phi''_{X}) \circ \phi_{Y}(y)\Big) \:d\pi (x,y) \label{SGW_e} \\ &= \inf_{\pi \in \Pi({\phi_{X}}_{\#}\mu, (\phi_{Y} \circ \phi'_{Y}) \circ {\phi_{X}}_{\#}\mu)} \int d(x, y) \:d\pi (x,y) \nonumber \\ &= \inf_{\pi \in \Pi(\omega, (\phi_{Y} \circ \phi'_{Y})_{\#}\omega)} \int d(x, y) \:d\pi (x,y) = W_{1}(\omega, {\phi_{Y} \circ \phi'_{Y}}_{\#}\omega) \label{auto},
\end{align}
where $d \in \mathscr{D}(d_{X},d_{Y})$\footnote{To avoid complications, we do not differentiate the two $W_{1}$ metrics in terms of notations, which are indeed calculated based on $d_{X}$ and $d$ respectively.}. We list out some immediate observations from the upper derivation. Firstly, constructing such a chaining ($\mathcal{X} \leftarrow \mathcal{Z} \leftarrow \mathcal{Y}$) reduces the problem of achieving CC in $\mathcal{X}$ to that of ensuring accurate autoencoding of $\omega$ based on the contextual latent law $\nu$ (\ref{auto}). The same choice of $F,G$ also guarantees CC in $\mathcal{Y}$ a.s. Observe that, for any $F$ satisfying $F_{\#}\mu = \nu$, given an optimal $\mathcal{Z}$ and the pair of embeddings into it, (\ref{SGW_e}) equates to SGW. As such, SGW is an upper bound to the optimal CC loss $\inf_{F,G}W_{1}(\mu, {G \circ F}_{\#}\mu)$ when $G$ follows our construction optimally, which is rather intuitive. It becomes much simpler if $\mu,\nu \in \mathcal{P}^{\textrm{ac}}_{2}(\mathbb{R}^{d})$, in which the construction can be made uniquely (see, Appendix \ref{chain}).

The second common loss component between UNIT and GcGAN is the constraint that ensures $F_{\#}\mu = \nu$ and $G_{\#}\nu = \mu$. Typically, the imposition is done using a GAN or WGAN objective. In our framework, a WGAN loss under $1$-Lipschitz critics turns out as
\begin{equation*}
    W_{1}(\mu, G_{\#}\nu) = \inf_{\pi \in \Pi(\mu,\nu)} \int d_{X}(x,G(y)) \:d\pi(x,y) = W_{1}({\phi_{X}}_{\#}\mu, {\phi_{Y}}_{\#}\nu),
\end{equation*}
which again at an optimal latent space equals SGW. Similarly, the loss $W_{1}(F_{\#}\mu,\nu)$ boils down to solving (\ref{auto}). As such, it is sufficient to optimize SGW between $\mu$ and $\nu$ subject to the autoencoding constraint (\ref{auto}) to solve the UNIT problem. 

The only additional term GcGAN employs is namely the geometric-consistency (GC) loss. In the population regime, a $\mathcal{X} \xrightarrow{F} \mathcal{Y}$ translation model incurs a GC loss
\begin{equation*}
    W_{1}({F \circ s_{X}}_{\#}\mu, s_{Y}\circ F_{\#}\mu),
\end{equation*}
where $s_{X}$ and $s_{Y}$ are automorphisms in $\mathcal{X}$ and $\mathcal{Y}$ respectively, e.g. rotation. Based on our construction, considering $s_{X} = \phi''_{X} \circ \phi_{X}$ and $s_{Y} = \phi'_{Y} \circ \phi_{Y} = \textrm{Id}_{Y}$ meets the constraint. Combining all the above observations gives the clear impression that effectively choosing a latent space $\mathcal{Z}$ --- in turn, enabling appropriate construction of $F$ and $G$ --- implies consistent I2I translation in UNIT and GcGAN. Remarkably, all the results hold exactly under the altered metric $W_{1,\lambda}$ (also, $W^{\epsilon}_{1}$) since the constructions remain same for $l_{\lambda}(d_{X})$ and $l_{\lambda}(d_{Y})$ (\ref{WtoGP}). As such, robustifying UNIT or GcGAN only requires updating their dependence on SGW to one with RSGW. 
    


\subsection*{Experiment: Style transfer with noise}
The first experiment we conduct tests the denoising capability of a robust GcGAN deploying (\ref{robW}) during I2I style transfer. Despite an overhaul in the optimization, we call our proposed model `robust GcGAN' for simplicity. Notably, this is the first outlier-robust cross-domain generative model to our knowledge. Based on the dataset `Apples-Oranges' \citep{zhu2017unpaired}, the underlying task is to translate the visual style of oranges onto apples that are contaminated. Unlike the Huber setup here, standard Gaussian noise is added to the RGB channels of each target sample (apple). The mixing intensity $\alpha$ is kept $0.2$. We present a detailed discussion on the experimental setup in Appendix \ref{Ablation_LR}. As discussed in the previous section, it is sufficient to optimize the RSGW loss for a suitable $\mathcal{Z}$, which in this case is the image space itself. As a regularizer, we add the GC loss taking $s_{X}, s_{Y}$ as $90^{\circ}$ clockwise rotations in their respective spaces. Model architecture and the choice of the Lagrangian parameter remain similar to that directed by the GcGAN authors.
\begin{figure}[H]
    \centering
    \includegraphics[width=\linewidth]{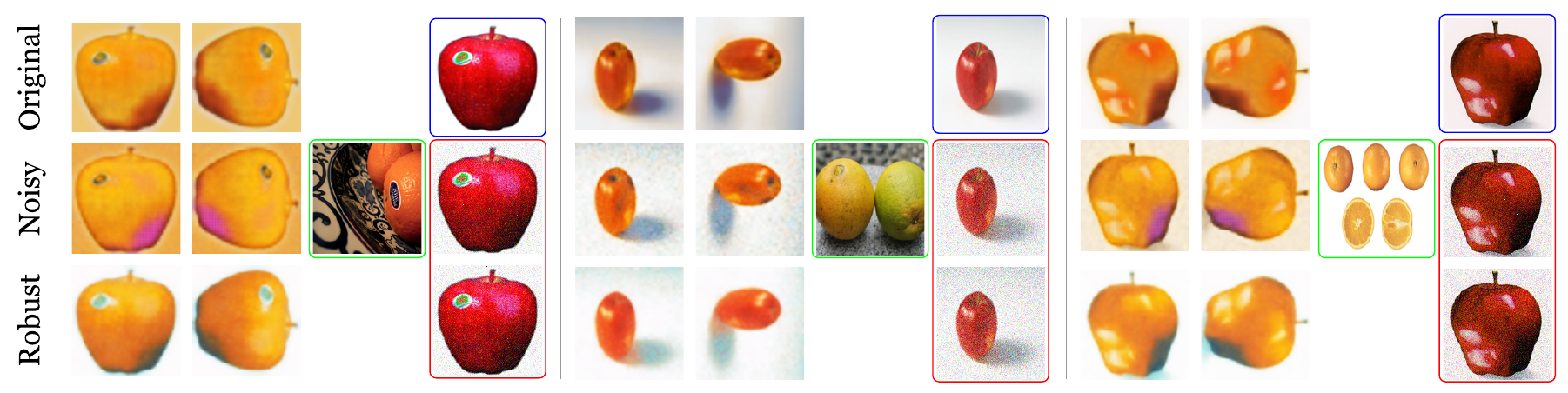}
    \caption{Style transfer performance of robust GcGAN under contamination ($\alpha = 0.3$). Images encircled in `\textcolor{blue}{blue}' represent clean target samples, in `\textcolor{red}{red}' are noisy versions of them and the ones in `\textcolor{green}{green}' act as sources of the style to be transferred. At $\epsilon = 0.5$, the robust translations (third row) maintain sharpness and prevent artifacts from appearing, improving the FID score to $152.65$ (compared to $154.74$ in the noiseless setting: first row).}
    \label{fig:RUNIT_AO}
\end{figure}
For comparison, the experiment contains three phases. As in the first row of Figure \ref{fig:RUNIT_AO}, we generate samples using the original GcGAN (without any modifications) on clean observations (control). The second row shows the degradation in translation once noise is added. Finally, applying our robust formulation at $\epsilon=0.5$ we observe a significant improvement in images, both qualitative and quantitative. We present our parameter selection scheme in Appendix \ref{Ablation_LR} in the form of an ablation study. 

\begin{figure}[H]
  \begin{minipage}[c]{0.4\textwidth}
    \includegraphics[width=\textwidth]{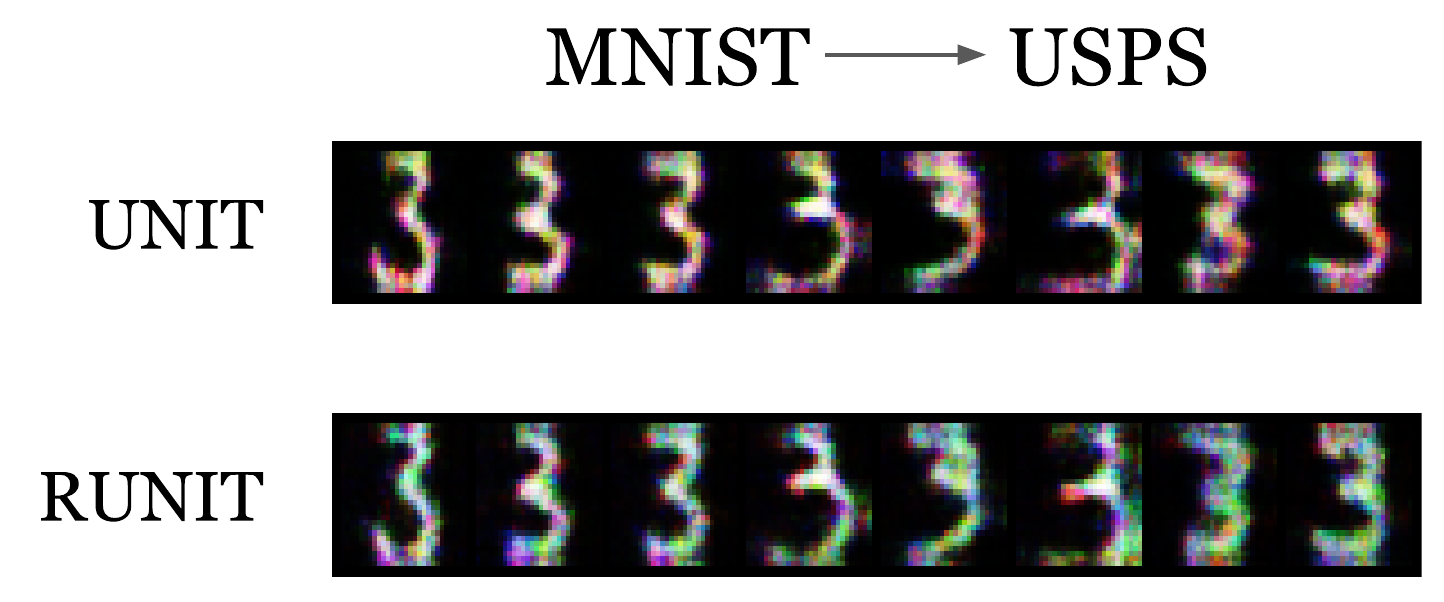}
  \end{minipage}\hfill
  \begin{minipage}[c]{0.58\textwidth}
    \caption{\hspace{4pt}Unpaired translation under contamination ($\alpha = 0.4$) using robust UNIT. At $\epsilon=0.5$, RUNIT recovers the visual quality of generated USPS samples (FID = $262.48$, compared to $304.39$ in case of UNIT under pixel noise).} 
    \label{Fig:RUNIT_MU}
  \end{minipage}
\end{figure}
The second experiment is of a different spirit in terms of the dissimilarity between dimensions of the two spaces, namely handwritten digit datasets MNIST ($28 \times 28$) and USPS ($16 \times 16$). Our goal lies in checking the robust domain translation capacity of a UNIT architecture reinforced with RSGW. Unlike style transfer, here, samples from MNIST (base distribution) are subjected to Gaussian noise. Keeping the robustness radius at $0.5$, the robust UNIT model produces USPS samples with improved FID score (see, Figure \ref{Fig:RUNIT_MU}) compared to vanilla UNIT. Besides the expected denoising, the heightened image quality is a result of employing WGAN instead of vanilla GAN regularization.

\subsection{Plan Robustification} \label{Part3}

All existing efforts to make GW robust to outliers rely on penalizing the plan $\pi$ based on partial mass transport. Relieving the constraint from aligning all points enables the optimization to filter out outliers as only their contribution to the total mass is ignored. In a GW setup, unbalancing \citep{sejourne2021unbalanced, tran2023unbalanced} in principle achieves the same by relaxing $\Pi(\mu,\nu)$ to $\mathcal{M}_{+}(\mathcal{X} \times \mathcal{Y})$, i.e. the set of positive Radon measures. However, it does not restrict the amount of mass to be pruned in its imposition of marginal constraints under quadratic $\phi$-divergences. Moreover, it fails to guarantee an optimal plan in the exact sense. While a rescaling afterward produces a joint probability distribution, it only redistributes the leftover mass to inlying points from both spaces uniformly. Using the TV metric instead connects the problem to PGW \citep{bai2024partialgromovwassersteinmetric}. It readily carries out the redistribution, which ties the idea to our previous truncation methods (TGW and LRGW). While the redistribution pathway in TGW is not uniform, it is directed to the points, which through their interactions with the other space, result in $\tau > 0$ distortion. In LR, points almost $\lambda>0$ apart receive the mass.

Though imposed on top of an unbalanced GW between surrogates of $\mu$ and $\nu$, \citet{kong2024outlier} is the only method to date that employs a direct penalization to robustify in the spirit of robust OT \citep{balaji2020robust}. Since our motivation lies in constructing a robust I2I translation architecture, we rather prioritize a balanced formulation of the same kind that results in an optimal plan, and eventually maps. As encouragement, we draw an immediate connection between the robust penalization and ROT \citep{le2021robust}. Observe that, the $(4,2)$-GW distance between Euclidean mm spaces can be fragmented as $d^{2}_{\textrm{GW}}(\mu, \nu) = S_{1} + S_{2}$, where
\begin{align*}
    S_{2}(\mu,\nu) = \inf_{\pi \in \Pi(\mu,\nu)}\Gamma(\pi) \coloneqq \inf_{\pi \in \Pi(\mu,\nu)}\int -\norm{x}^{2}\norm{y}^{2} d\pi(x,y) -2\sum_{{1\leq i \leq d} \atop {1\leq j \leq d'}} \Big(\int x_{i}y_{j} d\pi(x,y)\Big)^{2},
\end{align*}
and $S_{1}$ depends solely on the marginals $\mu, \nu$ \citep{zhang2024gromov}. It enables us to define
\begin{align}\label{S_2}
    \tilde{S}_{2}(\mu,\nu) = \inf_{{\tilde{\mu} \in \mathcal{P}(\mathcal{X})} \atop {\tilde{\nu} \in \mathcal{P}(\mathcal{Y})}} \min_{\pi \in \Pi(\tilde{\mu} ,\tilde{\nu})} \Gamma(\pi) + \lambda_{1} D_{f}(\tilde{\mu} | \mu) + \lambda_{2} D_{f}(\tilde{\nu} | \nu),
\end{align}
where $D_{f}$ is some $f$-divergence and $\lambda_{1}, \lambda_{2} > 0$.

\begin{lemma}[Duality] \label{dual_3}
    Given $(\mu,\nu) \in \mathcal{P}_{4}(\mathbb{R}^{d}) \times \mathcal{P}_{4}(\mathbb{R}^{d'})$, if $D_{f} \equiv d_{\textrm{KL}}$ we have 
    \begin{equation*}
        \tilde{S}_{2}(\mu,\nu) = \inf_{{\tilde{\mu} \in \mathcal{P}(\mathcal{X})} \atop {\tilde{\nu} \in \mathcal{P}(\mathcal{Y})}} \inf_{\textbf{A} \in \mathcal{D}_{M_{\tilde{\mu},\tilde{\nu}}}} 8 \norm{\textbf{A}}^{2}_{F} + \textrm{ROT}_{c_{\textbf{A}}}(\mu, \nu),
    \end{equation*}
    where $\mathcal{D}_{M_{\tilde{\mu},\tilde{\nu}}} = [-M^{\infty}_{\tilde{\mu},\tilde{\nu}}/2, M^{\infty}_{\tilde{\mu},\tilde{\nu}}/2]^{d \times d'}$ (see, Proposition \ref{propIGW}) and the cost $c_{\textbf{A}}$ maps $(x,y) \mapsto -\norm{x}^{2}\norm{y}^{2} -8x^{T}\textbf{A}y$.
\end{lemma}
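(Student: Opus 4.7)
The statement is essentially the ROT analogue of Proposition \ref{propIGW}, so the plan is to mimic that derivation but with the truncation $l_\lambda$ removed and the KL penalty terms retained through the swap of infima. First I would import the decomposition $d_{\textrm{GW}}^{2}(\mu,\nu) = S_{1} + S_{2}$ from \citet{zhang2024gromov} so that only the quadratic piece $\Gamma(\pi)$ needs to be dualized. The goal is to show that, for any coupling $\pi$ with marginals $\tilde\mu,\tilde\nu$,
\begin{equation*}
    \Gamma(\pi) \;=\; \inf_{\textbf{A} \in \mathcal{D}_{M_{\tilde\mu,\tilde\nu}}} \Bigl( 8\norm{\textbf{A}}_{\textrm{F}}^{2} + \int c_{\textbf{A}}(x,y)\, d\pi(x,y) \Bigr),
\end{equation*}
after which the result falls out by interchanging infima and recognizing the definition of $\textrm{ROT}_{c_{\textbf{A}}}$.

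The key algebraic step is the elementary variational identity $-2a^{2} = \inf_{b\in\mathbb R}(8b^{2} - 8ab)$, whose optimum is attained at $b = a/2$. Applying this pointwise with $a = \int x_{i}y_{j}\,d\pi(x,y)$ for each index pair $(i,j)$ converts the offending squared integrals appearing in $\Gamma(\pi)$ into linear functionals of $\pi$:
\begin{equation*}
    -2\Bigl(\int x_{i}y_{j}\,d\pi\Bigr)^{2} = \inf_{a_{ij}}\Bigl(8 a_{ij}^{2} - 8 a_{ij}\!\int x_{i}y_{j}\,d\pi\Bigr).
\end{equation*}
Summing over $1\le i\le d$, $1\le j\le d'$ and absorbing the $-\norm{x}^{2}\norm{y}^{2}$ piece produces the cost $c_{\textbf{A}}(x,y) = -\norm{x}^{2}\norm{y}^{2} - 8x^{T}\textbf{A}y$ and the Frobenius norm term. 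The constraint $\textbf{A}\in\mathcal D_{M_{\tilde\mu,\tilde\nu}}$ is read off from the optimizer $a_{ij}^{*} = \tfrac{1}{2}\int x_{i}y_{j}\,d\pi$: a Cauchy--Schwarz bound on $|a_{ij}^{*}|$ in terms of the second moments of $\tilde\mu,\tilde\nu$ (exactly as in Proposition \ref{propIGW}, just without the $l_{\lambda}$ truncation since we now consider the un-truncated second-moment factor $M^{\infty}_{\tilde\mu,\tilde\nu}$) confines $\textbf{A}$ to the symmetric box $[-M^{\infty}_{\tilde\mu,\tilde\nu}/2, M^{\infty}_{\tilde\mu,\tilde\nu}/2]^{d\times d'}$.

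With this pointwise dual representation in hand, I would plug back into the definition of $\tilde S_{2}$ and swap the inner infima over $\pi$ and $\textbf{A}$ (both are infima so the swap is unconditionally valid):
\begin{equation*}
    \tilde S_{2}(\mu,\nu) = \inf_{\tilde\mu,\tilde\nu}\inf_{\textbf{A}\in\mathcal D_{M_{\tilde\mu,\tilde\nu}}} \Bigl[ 8\norm{\textbf{A}}_{\textrm{F}}^{2} + \textrm{OT}_{c_{\textbf{A}}}(\tilde\mu,\tilde\nu) + \lambda_{1} d_{\textrm{KL}}(\tilde\mu\,|\,\mu) + \lambda_{2} d_{\textrm{KL}}(\tilde\nu\,|\,\nu) \Bigr].
\end{equation*}
For a fixed $\textbf{A}$, the three $\tilde\mu,\tilde\nu$-dependent terms $\textrm{OT}_{c_{\textbf{A}}}(\tilde\mu,\tilde\nu) + \lambda_{1} d_{\textrm{KL}}(\tilde\mu|\mu) + \lambda_{2} d_{\textrm{KL}}(\tilde\nu|\nu)$ are, by definition of the robust OT cost of \citet{le2021robust}, precisely $\textrm{ROT}_{c_{\textbf{A}}}(\mu,\nu)$ evaluated along the running proxies $\tilde\mu,\tilde\nu$. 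Keeping the outer infimum over $(\tilde\mu,\tilde\nu)$ (which is needed anyway to constrain the admissible box for $\textbf{A}$) yields the claimed identity.

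The main obstacle, and essentially the only one beyond bookkeeping, is justifying that the Cauchy--Schwarz-based box $\mathcal D_{M_{\tilde\mu,\tilde\nu}}$ genuinely contains a minimizer so that enlarging $\textbf{A}$ to all of $\mathbb R^{d\times d'}$ does not change the value; this is verified by noting that the unconstrained minimizer $a_{ij}^{*} = \tfrac{1}{2}\int x_{i}y_{j}\,d\pi$ automatically lies in the box whenever $\pi$ has marginals $\tilde\mu,\tilde\nu$, and any $\pi$ encountered in the outer optimization has exactly such marginals. Everything else is the same linearization-and-swap machinery as in Proposition \ref{propIGW}, now carried through with KL penalties instead of a truncated cost.
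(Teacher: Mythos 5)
Your proposal is correct and follows essentially the same route the paper intends (and leaves implicit): the \citet{zhang2024gromov} decomposition isolating $\Gamma(\pi)$, the variational identity $-2a^{2}=\inf_{b}(8b^{2}-8ab)$ applied to each correlation term, the Cauchy--Schwarz bound (now symmetric, without the $l_{\lambda}$ truncation) confining the optimizer to $\mathcal{D}_{M_{\tilde{\mu},\tilde{\nu}}}$, and the swap of infima followed by recognizing the KL-penalized proxy optimization as the ROT of \citet{le2021robust}. Your reading of $\textrm{ROT}_{c_{\textbf{A}}}(\mu,\nu)$ as evaluated along the running proxies $(\tilde{\mu},\tilde{\nu})$ is exactly the interpretation the lemma's statement presumes.
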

As such, the optimization underlying robust alignment is essentially a moment-constrained robust OT. If we only regularize based on one marginal (e.g. $\mu$ only), the optimization boils down to solving a semi-constrained problem, namely RSOT. The GW estimate corresponding to such a $\tilde{S}_{2}(\mu,\nu)$ can be made robust by plugging in the optimal marginals so obtained, $\tilde{\mu},\tilde{\nu}$ in the functional $S_{1}$. This marks the potential the robust penalization (\ref{S_2}) has. In search of learnable maps between the spaces, we may narrow down the feasible set of couplings following \citet{hur2024reversible}. Instead of $\Pi(\tilde{\mu} ,\tilde{\nu})$, we may choose the path-restricted distributions that follow the \textit{binding constraint}
\begin{equation*}
    \{\pi : \pi = (\textrm{Id},F)_{\#}\tilde{\mu} = (G,\textrm{Id})_{\#}\tilde{\nu}\} \subset \Pi(\tilde{\mu} ,\tilde{\nu}),
\end{equation*}
where $F,G$ are measurable maps between $\mathcal{X}, \mathcal{Y}$ (see, illustration \ref{diag}) and $\tilde{\mu} ,\tilde{\nu}$ are supposedly `clean' marginals. One can also relax this by choosing a larger subclass of $\Pi$ that imposes only $F_{\#}\tilde{\mu}= \tilde{\nu}$ and $G_{\#}\tilde{\nu}= \tilde{\mu}$. In any case, the resultant robust distance--- an upper bound to GW under a similar penalization (\citet{hur2024reversible}, Proposition 5.4)--- only inculcates maps that transport inlying marginals to that in the other space. In a Huber contamination model, this readily implies that the corresponding set of couplings is non-empty. However, it is in principle different from learning a map possessing a denoising ability in the sense $F_{\#}\mu = \tilde{\nu}$. Maps of the latter kind promote carrying out the optimization over couplings given as
\begin{equation*}
    \{\pi : \pi = (\tilde{\textrm{Id}},F)_{\#}\mu = (G,\tilde{\textrm{Id}})_{\#}\nu\},
\end{equation*}
where $\tilde{\textrm{Id}}$ denotes the denoising operation that pushes forward $\mu$ to $\tilde{\mu}$ (also $\nu$ in its ambient space). The set containing such couplings is also non-empty since partial mass transport guarantees the existence of such `robustifiers' $\tilde{\textrm{Id}}$, and hence a pair of amenable maps $(F,G)$. Given $\epsilon \in [0,1)$, let us define the partial couplings
\begin{equation} \label{par}
    \Pi_{\epsilon}(\mu ,\nu) = \{\pi : \pi = (\tilde{\textrm{Id}}^{\epsilon},F)_{\#}\mu = (G,\tilde{\textrm{Id}}^{\epsilon})_{\#}\nu\},  
\end{equation}
where $\tilde{\textrm{Id}}^{\epsilon}_{\#}\alpha \leq (1-\epsilon)\alpha$, for $\alpha \in \mathcal{P}(\mathcal{X})$. The inequality should be understood setwise. We can also generalize the notion based on distinct mass fractions to be clipped in the two spaces. It is (\ref{par}) that we base our final proposition on constructing a robust I2I translation model. We call the term $\inf_{\pi \in \Pi_{\epsilon}(\mu ,\nu)} ||d_{X} - d_{Y}||_{L^{p}(\pi \otimes \pi)}$, the \textit{robust reversible Gromov-Monge} (RRGM) distance. The formulation essentially is a robust surrogate to the RGM distance due to \citet{hur2024reversible} based on a partial alignment. To favor comparative analysis, we only consider $p=2$ in our experiments. During transform sampling, RGM uses additional penalization imposing the constraints of measure preservation. The preferred metric for the same is often chosen as Maximum Mean Discrepancy (MMD). To eliminate the critical question of the ideal kernel given an empirical problem, we employ instead $W_{1}$. Under the same, the RRGM loss\footnote{The loss (\ref{RRGM_lag}) relaxes the binding constraint (as in (\ref{par})) and only imposes robust measure preservation. As such, it is essentially a lower bound to RRGM.} can be written in a Lagrangian form given as
\begin{align} \label{RRGM_lag}
    \inf_{F,G} \Big[\int \Big(d_{X}(\tilde{\textrm{Id}}^{\epsilon}(x), G(y)) - d_{Y}(\tilde{\textrm{Id}}^{\epsilon}(y),F(x))\Big)^2 d\mu \otimes \nu\Big]^{\frac{1}{2}} + \lambda_{1}W_{1}(\tilde{\textrm{Id}}^{\epsilon}_{\#}\mu, G_{\#}\nu) + \lambda_{2}W_{1}(F_{\#}\mu, \tilde{\textrm{Id}}^{\epsilon}_{\#}\nu),
\end{align}
where the infimum is over measurable maps and $\lambda_{1}, \lambda_{2} > 0$. We may find a further lower bound to the loss owing to the fact that
\begin{equation} \label{furth_low}
    W_{1}(\tilde{\textrm{Id}}^{\epsilon}_{\#}\mu, G_{\#}\nu) \geq W^{\epsilon}_{1}(\mu, G_{\#}\nu),
\end{equation}
where $W^{\epsilon}_{1}$ only carries out a partial transport of $\mu$ asymmetrically (see, (\ref{robW})). The same argument holds for the other term, given an asymmetric $W^{\epsilon}_{1}$ employment on the other space. During demanding I2I translations, it is often beneficial to have learnable discriminators over $1$-Lipschitz dual maps. The usage of $W_{1}$ is also advantageous since it enables deploying a larger class of neural network-induced critics. As such, the two added losses can be optimized using WGAN-GP \citep{gulrajani2017improved} architectures. Despite promoting a different redistribution pathway of inlying mass, the sample complexity of transform sampling under $W^{\epsilon}_{1}$ should be of a similar order to LR constraints (\citet{nietert2023robust}, Theorem 4). We note that the convergence rate corresponding to the latter depends only on the inlying sample size (see, Appendix \ref{sample_comp}) if outliers remain bounded in number.

\subsection*{Experiment: Image-to-image translation with noise}
We test RRGM in a noisy MNIST$\leftrightarrow$USPS domain translation experiment. The contamination regime for the experiment remains the same as that in RUNIT. Maintaining $\alpha = 0.4$, we randomly select pixel locations following a Gaussian law and set their values to $1.0$ (bright white) for all channels, adding visible outlier points to the image. Since handwritten digit images have all information regarding the numerical in the shape of white pixels, this contamination becomes quite challenging for an I2I model. For example, CycleGAN \citep{zhu2017unpaired} performs poorly despite employing a cycle-consistency component and generative losses in both directions. In contrast, RRGM (based on (\ref{furth_low}) with $\epsilon = 0.5$) under $\lambda_{i} = 0.2$; $i=1,2$ generates significantly sharper and denoised samples (Figure \ref{fig:RRGM}). For a fair comparison, we also present both clean and noisy samples to the discriminators in CycleGAN. Even if the discriminators are shown noisy observations only, the generation quality of RRGM surpasses that of CycleGAN. As a reference, we maintain a similar parameter selection for RGM \citep{hur2024reversible}, which deploys an additional MMD loss to impose the binding constraint. However, in the absence of dedicated critic modules, it lags behind. Our model outperforms both techniques by a significant margin, in terms of both quantitative and qualitative measures. 

\begin{figure}[H]
    \centering
    \begin{subfigure}{0.48\linewidth}
        \includegraphics[width=\linewidth]{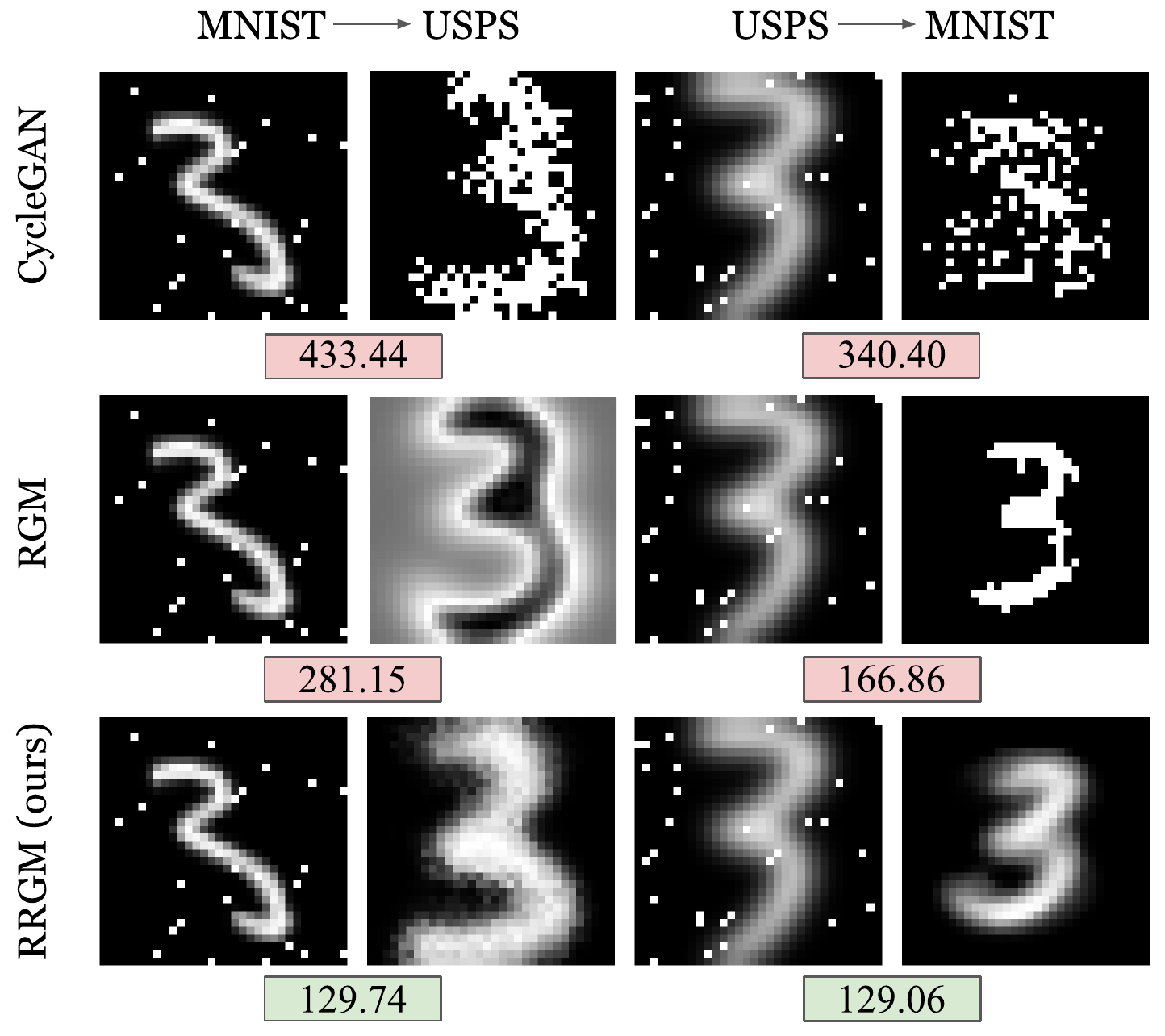}
        \caption{}
        \label{fig:RRGM}
    \end{subfigure}
    \hspace{8pt}
    \begin{subfigure}{0.42\linewidth}
      \includegraphics[width=\linewidth]{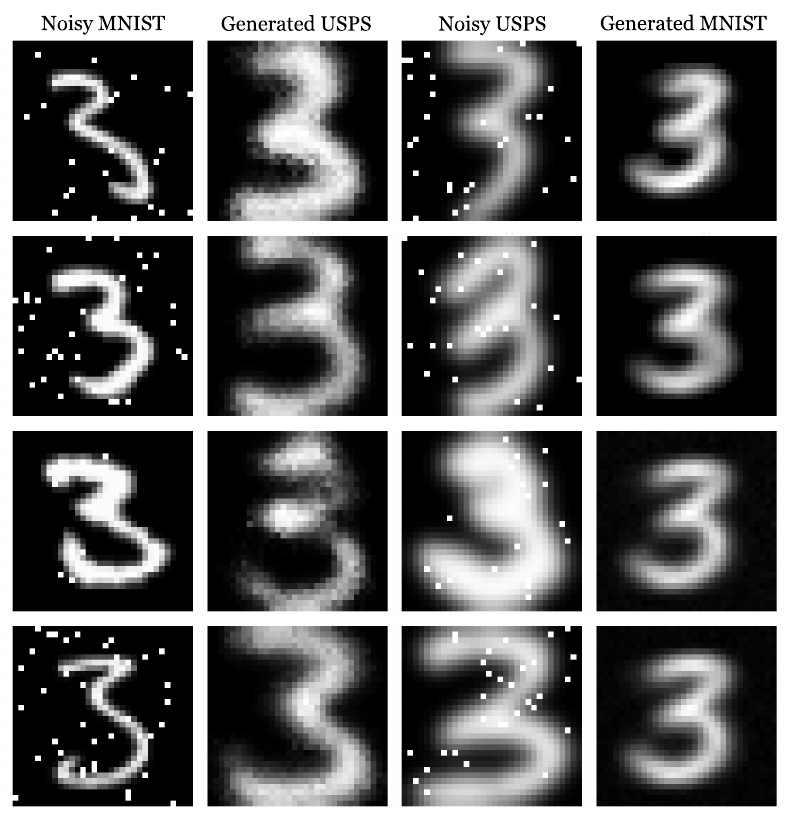}
      \caption{}
      \label{fig:RRGM_samples}
    \end{subfigure}
    \caption{(a) FID scores corresponding to robust cross-domain generations between USPS and MNIST data under Gaussian contamination, using CycleGAN, RGM, and RRGM (ours). (b) Denoised generated samples using RRGM in both domains.} 
    \label{fig:RRGM_comp}
\end{figure}

\section{Discussion}
We explore three major possibilities for the robustification problem in a GW setup. Drawing from classical techniques in robust statistics, we propose novel GW surrogate distances TGW (and HGW) and LRGW that limit contamination due to outliers. We study their interrelation and their respective dependence on the truncation parameters. For TGW, we comment on its population-level robust guarantees and the resilience it offers to underlying distributions. Based on a data-dependent parameter selection scheme, we present a working algorithm to solve the HGW distance, which exhibits superior protection against outliers compared to existing methods in shape-matching tasks. On the other hand, solving LRGW-type measures boils down to calculating an OT loss between trimmed samples from the underlying distributions. It also hints at the effective level of trimming necessary ($\lambda$) given a sample problem. We generalize our notion to probabilistic mm spaces, which allows one to define LR alignment between mixtures of distributions of distinct dimensions. We extend this setup to introduce robust image translation networks that surpass existing benchmarks. We also propose in RRGM a cross domain transform sampling framework that is robust to outliers. It promotes robust concentration and uniform deviation bounds besides significantly improving image quality in I2I translations.

\hspace{8pt}
\appendix
\section{Technical details and proofs} \label{App1}

\subsection*{Proof of Proposition \ref{Prop_up}}
    Triangle inequality of $d_{X}$ implies that given $(x,y), (x',y') \sim \mu' \otimes \nu$
    \begin{equation*}
        2\Lambda_{1}(d_{X},d_{X}) = \abs{d_{X}(x,x') - d_{X}(y,y')} \leq d_{X}(x,y) + d_{X}(x',y').
    \end{equation*}
    Thus, for a coupling $\pi \in \Pi(\mu', \nu)$, the triangle inequality of the norm $\norm{\cdot}_{\mathcal{T}_{p}}$ implies
    \begin{equation} \label{tri_norm}
        \norm{2\Lambda_{1}}_{\mathcal{T}_{p}(\pi \otimes \pi)} \leq 2\norm{d_{X}}_{\mathcal{T}_{p}(\pi)}.
    \end{equation}
    Now,
    \begin{align}
        \inf_{\pi \in \Pi} \norm{d_{X}}_{\mathcal{T}_{p}(\pi)} &= \inf_{\pi \in \Pi} \left(\int \mathcal{T}_{p}(d_{X}(x,y)) d\pi\right)^{\frac{1}{p}} = W_{\mathcal{T}_{p}}(\mu', \nu) \nonumber \\ & \leq W_{\mathcal{T}_{p}}((1-\epsilon)\mu + \epsilon \mu_{c}, \mu) + W_{\mathcal{T}_{p}}(\mu, \nu) \label{tri_tuk} \\ &\leq W_{\mathcal{T}_{p}}(\epsilon\mu, \epsilon\mu_{c}) + W_{\mathcal{T}_{p}}(\mu, \nu) \label{eps} \\ &\leq \epsilon^{\frac{1}{p}} W_{\mathcal{T}_{p}}(\mu, \mu_{c}) + W_{\mathcal{T}_{p}}(\mu, \nu) \nonumber \\ &\leq \tau\epsilon^{\frac{1}{p}} + W_{\mathcal{T}_{p}}(\mu, \nu), \label{evi}
    \end{align}
    where (\ref{eps}) is due to the fact that for any $\alpha, \beta \in \mathcal{P}(\mathcal{X})$, we have $\textrm{OT}_{d_{X}}(\alpha, \beta) \leq \textrm{OT}_{d_{X}}(\alpha - \alpha \wedge \beta, \beta - \alpha \wedge \beta)$. The triangle inequality of $W_{\mathcal{T}_{p}}$ (\ref{tri_tuk}) follows a similar proof to that of the $p$-Wasserstein distance. Inequality (\ref{evi}) is due to the trivial upper bound of the Tukey function. This along with the previous observation completes the proof.

\subsection{Relation between $W^{\epsilon}_{p}$ and truncated OT} \label{duality}
Given $\alpha, \beta \in \mathcal{P}(X)$ such that $X$ is compact, the dual formulation of $W^{\epsilon}_{p}$ for $p \in [1,\infty)$ becomes (\citet{nietert2022outlier}, Theorem 2)
\begin{equation} \label{dualWeps}
    (1-\epsilon)W^{\epsilon}_{p}(\alpha, \beta)^{p} = \sup_{\phi \in C_{b}(X)} \int \phi \:d \alpha - \int \phi^{c} \:d \beta - \epsilon\: \textrm{Range}(\phi),
\end{equation}
where $C_{b}(X) \coloneqq \{f: X \rightarrow \mathbb{R} : f \;\textrm{is continuous} , \:\norm{f}_{\infty} < \infty\}$ and $\phi^{c}$ denotes the $c$-transform of $\phi$ w.r.t. the cost $d_{X}(\cdot,\cdot)^{p}$. On the other hand, the Kantorovich potential $\phi$ that solves $\textrm{OT}_{l_{\lambda}(d_{X})}(\alpha, \beta) \coloneqq \inf_{\pi \in \Pi(\alpha, \beta)} \int_{X \times X} \min\{d_{X}(x,y), \lambda\}^{p} d\pi(x,y)$ is a solution to the dual
\begin{equation} \label{constdual}
    \sup_{{\phi \in C_{b}(X):} \atop {\textrm{Range}(\phi) \leq \lambda}} \int \phi \:d \alpha - \int \phi^{c} \:d \beta,
\end{equation}
such that $\phi^{c} = \phi$ (\citet{ma2023inference}, Theorem 2.1), $\lambda > 0$. The latter is a constrained formulation of the regularized dual (\ref{dualWeps}). Given any tolerable margin $\lambda <\infty$ on the range of potentials, the solution to (\ref{constdual}) satisfies (\ref{dualWeps}).

\subsection{Existence of optimal couplings in LRGW} \label{exist}
Given Polish mm spaces $(\mathcal{X}, d_{X}, \mu)$, $(\mathcal{Y}, d_{Y}, \nu)$ define the locally robust $p$-distortion realized by a coupling $\pi \in \Pi(\mu, \nu)$ as, $p \in [0,\infty]$
\begin{equation*}
    J^{\lambda}_{p}(\pi) \coloneqq \norm{l_{\lambda}(d_{X})-l_{\lambda}(d_{Y})}_{L^{p}(\pi \otimes \pi)},
\end{equation*}
where $\lambda >0$. Then,
\begin{lemma}
    There exists a coupling $\pi^{\lambda} \in \Pi(\mu, \nu)$ such that, $(p,\lambda)$-LRGW $=J^{\lambda}_{p}(\pi)$.
\end{lemma}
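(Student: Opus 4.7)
My approach is the standard direct method of the calculus of variations, in the spirit of \citet{memoli2011gromov}, Corollary 10.1, which is already invoked earlier in the paper for TGW. The two ingredients are compactness of $\Pi(\mu,\nu)$ in the weak topology and continuity (or at minimum lower semi-continuity) of $\pi \mapsto J^{\lambda}_p(\pi)$. Truncation by $l_\lambda$ is precisely what makes this argument frictionless, since it bounds the integrand uniformly by $\lambda$.

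\textbf{Step 1: Compactness of $\Pi(\mu,\nu)$.} Since $\mathcal{X}$ and $\mathcal{Y}$ are Polish, the marginals $\mu,\nu$ are Radon and hence tight. Standard arguments (see, e.g., \citet{villani2009optimal}) then give that $\Pi(\mu,\nu)$ is tight: for any $\varepsilon>0$, choose compacts $K_{X}\subseteq\mathcal{X}$ and $K_{Y}\subseteq\mathcal{Y}$ with $\mu(K_{X}^{c}),\nu(K_{Y}^{c})<\varepsilon/2$; then $\pi((K_{X}\times K_{Y})^{c}) \leq \mu(K_{X}^{c}) + \nu(K_{Y}^{c}) < \varepsilon$ uniformly over $\pi \in \Pi(\mu,\nu)$. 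By Prokhorov, $\Pi(\mu,\nu)$ is relatively compact in the weak topology. Closedness follows because weak convergence $\pi_n \rightharpoonup \pi$ preserves the marginals via pushforwards under the (continuous) projection maps. Hence $\Pi(\mu,\nu)$ is weakly compact.

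\textbf{Step 2: Continuity of the objective.} Consider the map $\Phi:(\mathcal{X}\times\mathcal{Y})^{2}\to\mathbb{R}$ given by
\begin{equation*}
    \Phi(x,y,x',y') = \bigl|\,l_{\lambda}(d_{X}(x,x')) - l_{\lambda}(d_{Y}(y,y'))\,\bigr|^{p}.
\end{equation*}
Since $d_{X},d_{Y}$ are jointly continuous and $l_{\lambda}(t)=\min\{t,\lambda\}$ is continuous, $\Phi$ is continuous; since $0 \leq l_\lambda \leq \lambda$, $\Phi$ is uniformly bounded by $\lambda^{p}$. Now $\pi_n \rightharpoonup \pi$ in $\mathcal{P}(\mathcal{X}\times\mathcal{Y})$ implies $\pi_n \otimes \pi_n \rightharpoonup \pi \otimes \pi$ in $\mathcal{P}((\mathcal{X}\times\mathcal{Y})^2)$ (a standard tensorization fact for weak convergence). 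Integrating the bounded continuous function $\Phi$ against $\pi_n\otimes\pi_n$ thus converges to the same integral against $\pi\otimes\pi$. Consequently, for $p\in[1,\infty)$, $\pi \mapsto J_{p}^{\lambda}(\pi) = \bigl(\int \Phi \, d(\pi\otimes\pi)\bigr)^{1/p}$ is weakly continuous on $\Pi(\mu,\nu)$.

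\textbf{Step 3: Extraction and conclusion.} Take a minimizing sequence $(\pi_n) \subset \Pi(\mu,\nu)$ with $J^{\lambda}_{p}(\pi_n) \downarrow \inf_{\pi\in\Pi(\mu,\nu)} J^{\lambda}_p(\pi)$. By Step 1, pass to a subsequence $\pi_{n_k} \rightharpoonup \pi^{\lambda} \in \Pi(\mu,\nu)$. By Step 2, $J^{\lambda}_{p}(\pi^{\lambda}) = \lim_k J^{\lambda}_p(\pi_{n_k}) = \inf_\pi J^{\lambda}_p(\pi)$, realizing the infimum. For $p\in(0,1)$, the same argument applies since $\Phi$ is still bounded and continuous (though $J^{\lambda}_p$ is not a norm). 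The only subtle case is $p=\infty$: here $J^{\lambda}_{\infty}$ is an essential supremum against $\pi\otimes\pi$, which is only lower semi-continuous under weak convergence in general. One handles this either by replacing continuity with lower semi-continuity in Step 2 (still sufficient for the direct method when combined with compactness from Step 1) or by a diagonal argument passing $p\to\infty$ along the minimizers obtained for finite $p$. I anticipate this $p=\infty$ case being the only mild obstacle; for $p\in[1,\infty)$ the proof is essentially automatic once one observes that $l_\lambda$ buys boundedness of the integrand for free, which is the very reason LRGW is better behaved than the untruncated GW objective at this level of generality.
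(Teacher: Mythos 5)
Your proof is correct and follows the same overall direct-method strategy as the paper (weak compactness of $\Pi(\mu,\nu)$ plus continuity of $J^\lambda_p$, then extraction of a convergent minimizing subsequence, with lower semi-continuity covering $p=\infty$). The one genuine difference is in how the continuity step is verified. The paper proves continuity by hand: it introduces the auxiliary metric $d^\lambda((x,y),(x',y')) = l_\lambda(d_X(x,x')) + l_\lambda(d_Y(y,y'))$, shows the integrand $2\Lambda_1(l_\lambda(d_X),l_\lambda(d_Y))$ is $1$-Lipschitz for this metric, composes with $t\mapsto t^p$ to get a Lipschitz integrand, and then decomposes the double integral into an inner-integral term (controlled by uniform convergence of $f_{\pi_n}\to f_\pi$) and an outer-integral term (controlled by weak convergence of $\pi_n$). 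You sidestep all of that by invoking the tensorization lemma $\pi_n\rightharpoonup\pi \Rightarrow \pi_n\otimes\pi_n\rightharpoonup\pi\otimes\pi$ and observing that the integrand $\Phi$ is bounded (by $\lambda^p$, a gift of the truncation) and continuous. That fact is indeed standard (it is essentially Billingsley's product-convergence theorem for Polish spaces), so the shortcut is legitimate; it gives a cleaner proof at the cost of relying on a result that deserves an explicit citation rather than being dismissed as "standard." Your remark that $p\in(0,1)$ goes through unchanged and that $p=\infty$ requires lower semi-continuity matches the paper's handling of the $p=\infty$ case (which realizes $J^\lambda_\infty$ as the monotone supremum of the finite-$p$ objectives).
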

The lemma can be proved by extending Corollary 10.1 of \citet{memoli2011gromov} for the mm spaces with modified metrics $l_{\lambda}(d_{X})$ and $l_{\lambda}(d_{Y})$. We give a version of the proof for completeness.

\begin{proof}
    First, observe that the set of couplings $\Pi(\mu, \nu)$ is sequentially compact in $\mathcal{P}(\mathcal{X} \times \mathcal{Y})$ (\citet{sturm2023space}, lemma 1.2). Hence, for $1\leq p<\infty$, it suffices to show that $J^{\lambda}_{p}(\pi)$ is continuous on $\Pi(\mu, \nu)$. Let us choose the metric $d^{\lambda}((x,y),(x',y')) = l_{\lambda}(d_{X}(x,x')) + l_{\lambda}(d_{Y}(y,y'))$ mapping $\mathcal{X} \times \mathcal{Y}$ to $[0,2\lambda]$. Given $((x_{i},y_{i}), (x'_{i},y'_{i})) \sim \pi \otimes \pi$ for $i=1,2$, observe that
    \begin{align}
        &\abs{\abs{l_{\lambda}(d_{X}(x_{1},x'_{1})) - l_{\lambda}(d_{Y}(y_{1},y'_{1}))}-\abs{l_{\lambda}(d_{X}(x_{2},x'_{2})) - l_{\lambda}(d_{Y}(y_{2},y'_{2}))}} \nonumber \\ &\leq \abs{l_{\lambda}(d_{X}(x_{1},x'_{1})) - l_{\lambda}(d_{X}(x_{2},x'_{2})) + l_{\lambda}(d_{Y}(y_{2},y'_{2})) - l_{\lambda}(d_{Y}(y_{1},y'_{1}))} \nonumber \\ &\leq \abs{l_{\lambda}(d_{X}(x_{1},x'_{1})) - l_{\lambda}(d_{X}(x_{2},x'_{2}))} + \abs{l_{\lambda}(d_{Y}(y_{2},y'_{2})) - l_{\lambda}(d_{Y}(y_{1},y'_{1}))} \nonumber \\ &\leq [l_{\lambda}(d_{X}(x_{1},x_{2})) + l_{\lambda}(d_{Y}(y_{1},y_{2}))] + [l_{\lambda}(d_{X}(x'_{1},x'_{2})) + l_{\lambda}(d_{Y}(y'_{1},y'_{2}))] \label{trilast} \\ &= d^{\lambda}((x_{1},y_{1}),(x_{2},y_{2})) + d^{\lambda}((x'_{1},y'_{1}),(x'_{2},y'_{2})) \label{metri},
    \end{align}
    which implies the Lipschitz continuity of $f_{1} \coloneqq 2\Lambda_{1}(l_{\lambda}(d_{X}),l_{\lambda}(d_{Y}))$ for the metric given by (\ref{metri}). The step (\ref{trilast}) follows from the triangle inequalities of $l_{\lambda}(d_{X})$ and $l_{\lambda}(d_{Y})$. Now, consider a function $f_{2} : [0,2\lambda] \rightarrow \mathbb{R}_{+}$ mapping $t \mapsto t^{p}$, which in turn implies that $f_{2} \circ f_{1}$ is Lipschitz with constant $\leq p(2\lambda)^{p-1}$. Thus, given a sequence $\{\pi_{n}\}_{n \in \mathbb{N}} \subset \mathcal{P}(\mathcal{X} \times \mathcal{Y})$ such that $\pi_{n} \xrightarrow{w} \pi$, we have
    \begin{align*}
        &\Big|\int \underbrace{\int f_{2} \circ f_{1}\:\pi_{n}(d(x,y))}_{\coloneqq f_{\pi_{n}}(x',y')}\pi_{n}(d(x',y')) - \int \underbrace{\int f_{2} \circ f_{1}\:\pi(d(x,y))}_{\coloneqq f_{\pi}(x',y')}\pi(d(x',y')\Big| \\ &\leq \Big|\int (f_{\pi_{n}} - f_{\pi})\: \pi_{n}(d(x',y'))\Big| + \Big|\int f_{\pi}\: \pi_{n}(d(x',y')) - f_{\pi}\: \pi(d(x',y'))\Big| \\ &\leq \max_{(x',y')}|f_{\pi_{n}} - f_{\pi}| + \Big|\int f_{\pi}\: d\pi_{n} - f_{\pi}\: d\pi\Big|,
    \end{align*}
    where the first term on the right-hand side vanishes as $n \rightarrow \infty$ due to the uniformly convergent $f_{\pi_{n}} \rightarrow f_{\pi}$ (based on the point-wise convergence of $f_{\pi_{n}}$ and the Lipschitz continuity of $f_{2} \circ f_{1}$). The weak convergence of $\pi_{n}$ ensures that the second term also vanishes. As such, $J^{\lambda}_{p}(\pi_{n}) \rightarrow J^{\lambda}_{p}(\pi)$ as $n \rightarrow \infty$. Hence, the proof.
    
    To show the result for $p=\infty$, first observe that given $\pi$, the sequence $\{J^{\lambda}_{l}(\pi)\}$ is non-decreasing in $1\leq l \leq \infty$ (using Jensen's inequality) and $\lim_{p \rightarrow \infty}J^{\lambda}_{p}(\pi) \rightarrow J^{\lambda}_{\infty}(\pi) = \sup\{J^{\lambda}_{p}: 1\leq p < \infty\}$. As such, $J^{\lambda}_{\infty}(\pi)$ is lower semi-continuous. Now, invoking the compactness argument proves the infimum is achieved in $\Pi(\mu , \nu)$.
\end{proof}

\subsection*{Proof of lemma \ref{properLR}}
    \textit{(i)} Given $\lambda' > \lambda \geq 0$, observe that
    \begin{equation*}
        \abs{l_{\lambda}(d_{X}) - l_{\lambda}(d_{Y})} \leq \abs{l_{\lambda'}(d_{X}) - l_{\lambda'}(d_{Y})},
    \end{equation*}
    a.s. $(\mu \otimes \nu)^{\otimes 2}$, which proves the first claim. Also, for all $a \in \mathbb{R}$ the following equality holds
    \begin{align*}
        l_{\lambda'}(a) - l_{\lambda}(a) = \min\{\lambda', \max\{a,\lambda\}\} - \lambda = l_{\lambda'}(\bar{l}_{\lambda}(a))-\lambda.
    \end{align*}
    We point out that, even a stronger equality holds in general almost surely
    \begin{equation} \label{lpineq}
        \abs{l_{\lambda'}(d_{X}) - l_{\lambda'}(d_{Y})} = \abs{l_{\lambda}(d_{X}) - l_{\lambda}(d_{Y})} + \abs{l_{\lambda'}(\bar{l}_{\lambda}(d_{X}))-l_{\lambda'}(\bar{l}_{\lambda}(d_{Y}))}.
    \end{equation}
    Now, given any $\pi$ that solves the $(p,\lambda)$-LRGW($\mu,\nu$) problem
    \begin{align*}
        (p,\lambda')\textrm{-LRGW}(\mu,\nu) - (p,\lambda)\textrm{-LRGW}(\mu,\nu) &\leq \norm{l_{\lambda'}(d_{X}) - l_{\lambda'}(d_{Y})}_{L^{p}(\pi \otimes \pi)}- (p,\lambda)\textrm{-LRGW}(\mu,\nu) \\ &\leq \norm{l_{\lambda'}(\bar{l}_{\lambda}(d_{X}))-l_{\lambda'}(\bar{l}_{\lambda}(d_{Y}))}_{L^{p}(\pi \otimes \pi)},
    \end{align*}
    where the second inequality is due to (\ref{lpineq}) and the the triangle inequality of $L^{p}$ norms. The proof follows since the choice of $\pi$ is arbitrary in $\Pi^{\lambda}$. We also present an upper bound that depends on the spaces' regulated $p$-diameters. First, $\forall a \in \mathbb{R}$ let us write
    \begin{equation*}
        \abs{l_{\lambda+\lambda'}(a)-l_{\lambda'}(a)} = \bar{l}_{0}(l_{\lambda}(a-\lambda')).
    \end{equation*}
    Now, for any optimal coupling $\pi \in \Pi(\mu, \nu)$ for the $(p,\lambda')\textrm{-LRGW}$ problem, we get
    \begin{align}
        (p,\lambda+\lambda')\textrm{-LRGW}(\mu,\nu) &\leq \norm{l_{\lambda+\lambda'}(d_{X})-l_{\lambda+\lambda'}(d_{Y})}_{L^{p}(\pi \otimes \pi)} \nonumber \\ &\leq \norm{l_{\lambda+\lambda'}(d_{X})-l_{\lambda'}(d_{X})}_{L^{p}(\mu \otimes \mu)} + \norm{l_{\lambda'}(d_{X})-l_{\lambda'}(d_{Y})}_{L^{p}(\pi \otimes \pi)} \nonumber \\ &\qquad \qquad+ \norm{l_{\lambda'}(d_{Y})-l_{\lambda+\lambda'}(d_{Y})}_{L^{p}(\nu \otimes \nu)} \nonumber \\ &\leq \norm{\bar{l}_{0}(l_{\lambda}(d_{X}(x,x')-\lambda'))}_{L^{p}(\mu \otimes \mu)} + (p,\lambda')\textrm{-LRGW}(\mu,\nu) \nonumber \\&\qquad \qquad +\norm{\bar{l}_{0}(l_{\lambda}(d_{Y}(y,y')-\lambda'))}_{L^{p}(\nu \otimes \nu)} \nonumber \\ &= d^{\lambda,\lambda'}_{\mathcal{X},p} + d^{\lambda,\lambda'}_{\mathcal{Y},p} + (p,\lambda')\textrm{-LRGW}(\mu,\nu),
    \end{align}
    where 
    \begin{align*}
        \norm{l_{\lambda}(d_{X})}_{L^{p}(\mu \otimes \mu)} &\geq d^{\lambda,\lambda'}_{\mathcal{X},p} \\ &= \norm{\bar{l}_{0}(l_{\lambda}(d_{X}(x,x')-\lambda'))}_{L^{p}(\mu \otimes \mu)} \\ &= \norm{\bar{l}_{\lambda'}(l_{\lambda+\lambda'}(d_{X}(x,x')))-\lambda'}_{L^{p}(\mu \otimes \mu)} \\ &\geq \bar{l}_{\lambda'}\big(\norm{l_{\lambda+\lambda'}(d_{X}(x,x'))}_{L^{p}(\mu \otimes \mu)}\big)-\lambda' \\ &= \bar{l}_{0}\big(\norm{l_{\lambda+\lambda'}(d_{X})}_{L^{p}(\mu \otimes \mu)} -\lambda'\big).
    \end{align*}
    As such, $(p,\lambda+\lambda')\textrm{-LRGW}(\mu,\nu) - (p,\lambda')\textrm{-LRGW}(\mu,\nu) \leq 2\max_{\mathcal{Z} \in (\mathcal{X},\mathcal{Y})}d^{\lambda,\lambda'}_{\mathcal{Z},p}$. \\
    \textit{(ii)} Choosing $\lambda'=\infty$ in the first argument of \textit{(i)} proves the result.

\subsection*{Proof of Proposition \ref{rob_rad}}
    \textit{(i)} The result is a direct consequence of the fact $W^{0}_{p} = W_{p}$. \\
    \textit{(ii)} For $0 \leq \epsilon \leq \epsilon' \leq 1$, 
    \begin{equation}
        d_{\textrm{LRGW}}(\mu, \nu; \epsilon) = \inf_{\pi \in \Pi(\mu, \nu)} \norm{W^{\epsilon}_{p}(x,x') - W_{p}(y,y')}_{L^{p}(\pi \otimes \pi)}.
    \end{equation}
    Now, for $(x,y), (x',y') \sim \mu \otimes \nu$
    \begin{align} \label{inewtri}
        \abs{W^{\epsilon}_{p}(x,x') - W_{p}(y,y')} \leq \abs{W^{\epsilon}_{p}(x,x') - W^{\epsilon'}_{p}(x,x')} + \abs{W^{\epsilon'}_{p}(x,x') - W_{p}(y,y')}.
    \end{align}
    Due to the dual form (see, Appendix \ref{duality}), given any $\alpha, \beta \in \mathcal{P}(X)$ we may write
    \begin{align*}
        (1-\epsilon)W^{\epsilon}_{p}(\alpha, \beta)^{p} &= \sup_{\phi \in C_{b}(X)} \int \phi \:d \alpha - \int \phi^{c} \:d \beta - \epsilon\: \textrm{Range}(\phi) \\ &\leq \sup_{\phi \in C_{b}(X)} \int \phi \:d \alpha - \int \phi^{c} \:d \beta - \epsilon'\: \textrm{Range}(\phi) + 2(\epsilon'-\epsilon) \norm{\phi}_{\infty}. 
    \end{align*}
    Since $\phi \in C_{b}(X)$, $\exists K>0$ such that $W^{\epsilon}_{p}(\alpha, \beta) - \big(\frac{1-\epsilon'}{1-\epsilon}\big)^{\frac{1}{p}}W^{\epsilon'}_{p}(\alpha, \beta) \leq K\big(\frac{\epsilon'-\epsilon}{1-\epsilon}\big)^{\frac{1}{p}}$. As such, 
    \begin{align}
        0 &\leq W^{\epsilon}_{p}(x,x') - W^{\epsilon'}_{p}(x,x') \nonumber \\ &\leq \Big[\Big(\frac{1-\epsilon'}{1-\epsilon}\Big)^{\frac{1}{p}}-1\Big]W^{\epsilon'}_{p}(x, x') + K\Big(\frac{\epsilon'-\epsilon}{1-\epsilon}\Big)^{\frac{1}{p}} \nonumber \\ &\leq  \Big[\Big(\frac{1-\epsilon'}{1-\epsilon}\Big)^{\frac{1}{p}}-1\Big]\underline{W}^{\epsilon'}_{p,\mu} + K\Big(\frac{\epsilon'-\epsilon}{1-\epsilon}\Big)^{\frac{1}{p}} \label{triv},
    \end{align}
    where $\underline{W}^{\epsilon'}_{p,\mu} = \inf_{x,x'\sim \mu}W^{\epsilon'}_{p}(x, x')$, which may only be non-zero given a sample problem. The last inequality, along with (\ref{inewtri}) and the triangle inequality of $L^{p}$ norms implies that
    \begin{equation*}
        d_{\textrm{LRGW}}(\mu, \nu; \epsilon) - d_{\textrm{LRGW}}(\mu, \nu; \epsilon') \leq  \Big[\Big(\frac{1-\epsilon'}{1-\epsilon}\Big)^{\frac{1}{p}}-1\Big]\underline{W}^{\epsilon'}_{p,\mu} + K\Big(\frac{\epsilon'-\epsilon}{1-\epsilon}\Big)^{\frac{1}{p}}.
    \end{equation*}
    Since (\ref{inewtri}) holds both ways (in $\epsilon,\epsilon'$), invoking the trivial upper bound to (\ref{triv}) we obtain
    \begin{equation*}
        \abs{d_{\textrm{LRGW}}(\mu, \nu; \epsilon) - d_{\textrm{LRGW}}(\mu, \nu; \epsilon')} \lesssim \Big(\frac{\epsilon'-\epsilon}{1-\epsilon}\Big)^{\frac{1}{p}}. 
    \end{equation*} 

\subsection*{Proof of Proposition \ref{levy}}
    The proof follows as a modification to \citet{huber1981robust}, Corollary 4.3. Given any $\pi \in \Pi(\alpha, \beta)$, observe that
    \begin{align*}
        \mathbb{E}_{\pi}[l_{\lambda}(d_{X}(x,y))] &\leq \varepsilon\:\mathbb{P}(l_{\lambda}(d_{X}(x,y)) \leq \varepsilon) + \lambda\:\mathbb{P}(l_{\lambda}(d_{X}(x,y)) > \varepsilon) \\ &= \varepsilon + (\lambda-\varepsilon)\:\mathbb{P}(l_{\lambda}(d_{X}(x,y)) > \varepsilon).
    \end{align*}
    Now, consider $\varepsilon = \hat{\rho}_{\lambda}(\alpha, \beta)$. As such there exists $\pi$ such that $\pi(\{(x,y):l_{\lambda}(d_{X}(x,y))>\varepsilon\})\leq \varepsilon$. Thus
    \begin{equation*}
        \mathbb{E}_{\pi}[l_{\lambda}(d_{X}(x,y))] \leq \varepsilon + (\lambda-\varepsilon)\varepsilon \leq (1+\lambda)\varepsilon.
    \end{equation*}
    Taking infimum over all couplings result in $\frac{1}{1+\lambda}W_{1,\lambda} \leq \hat{\rho}_{\lambda}$.
    To show the upper bound, by using Markov's inequality we write
    \begin{equation*}
        \mathbb{P}(l_{\lambda}(d_{X}(x,y)) > \varepsilon) \leq \frac{\mathbb{E}_{\pi}[l_{\lambda}(d_{X}(x,y))]}{\varepsilon},
    \end{equation*}
    where $\pi \in \Pi(\alpha, \beta)$ is a feasible optimal coupling for $W_{1,\lambda}$. Observe that we can always choose $\varepsilon^{2} = W_{1,\lambda}$. As such, $\hat{\rho}_{\lambda} \leq \sqrt{W_{1,\lambda}}$.

\subsection*{Proof of Proposition \ref{Sturmupper}}
    Before proving the first inequality, recall that given two mm spaces $(\mathcal{X}, d_{X}, \mu)$ and $(\mathcal{Y}, d_{Y}, \nu)$, a metric $\tilde{d}$ on $\mathcal{X} \sqcup \mathcal{Y}$ (disjoint union) is said to be a coupling of $d_{X}$ and $d_{Y}$ if and only if $\tilde{d}(x,x') = d_{X}(x,x')$ and $\tilde{d}(y,y') = d_{Y}(y,y')$ hold for all $x,x' \in \mathcal{X}$, $y,y' \in \mathcal{Y}$. Let us denote by $\mathscr{D}(d_{X},d_{Y})$ the collection of all such couplings. 

    Now, given $(x,y),(x',y') \sim \mu \otimes \nu$ and $\lambda>0$, we have $\abs{l_{\lambda}(d_{X}(x,x')) - l_{\lambda}(d_{Y}(y,y'))} \leq l_{\lambda}(\tilde{d}(x,y)) + l_{\lambda}(\tilde{d}(x',y'))$. Hence, for $p \in [1,\infty)$
    \begin{equation*}
        J^{\lambda}_{p}(\pi) \coloneqq \norm{l_{\lambda}(d_{X})-l_{\lambda}(d_{Y})}_{L^{p}(\pi \otimes \pi)} \leq 2{||l_{\lambda}(\tilde{d})||}_{L^{p}(\pi)}
    \end{equation*}
    hold for all $\pi \in \Pi(\mu,\nu)$. Hence, the inequality. To show the upper bound, we require some additional definition.
    \begin{definition}[Modulus of trimmed mass distribution]
        For $\delta \geq 0$, the modulus of $\lambda$-trimmed mass distribution of $\mu$, having full support is defined as 
        \begin{equation*}
            v^{\lambda}_{\delta}(\mu) \coloneqq \inf\{\varepsilon > 0 : \mu(\{x\in\mathcal{X} : \mu(B^{\lambda}_{X}(x,\varepsilon))\leq \delta\})\leq \varepsilon\},
        \end{equation*}
        where $B^{\lambda}_{X}(x,\varepsilon) = \{y\in\mathcal{X} : l_{\lambda}(d_{X}(x,y)) < \varepsilon\}$ is the open ball of $\lambda$-trimmed radius $\varepsilon>0$ around $x \in \mathcal{X}$. 
    \end{definition}
    Here, we uniquely consider $\mu$ to be a probability measure. Observe that only when $\varepsilon < \lambda$, we get $B^{\lambda}_{X}(x,\varepsilon) = B_{X}(x,\varepsilon)$: the usual open ball around $x$. Otherwise, the ball becomes the entire $\mathcal{X}$. As such, we only account for the `thin points' residing in the trimmed support. $v^{\lambda}_{\delta}(\mu)$ is essentially equal to $\min\{\lambda, v_{\delta}(\mu)\}$, where $v_{\delta}$ is the modulus under the metric $d_{X}$. This preserves the continuity in the sense that $v^{\lambda}_{\delta}(\mu) \xrightarrow{\delta \rightarrow 0} 0$ (\citet{greven2009convergence}, lemma 6.5). The relation also implies that an effective trimming requires $\lambda \leq 1$ for probability measures. The proof for the upper bound requires showing a similar statement to lemma 10.3 under the altered metrics $l_{\lambda}(d_{X})$ and $l_{\lambda}(d_{Y})$.
    
    \textit{\textbf{Step 1} (Construction of $\varepsilon$-nets):} Let $\delta \in (0,\frac{1}{2})$, and $\pi \in \Pi(\mu,\nu)$ such that $J^{\lambda}_{p}(\pi) < \delta^{5}$. Since the altered metric space $(\mathcal{X}, l_{\lambda}(d_{X}))$ also contains a maximal $2\varepsilon$-separated net for any $\varepsilon \geq 0$, we have the following statement.
    \begin{lemma}[\citet{greven2009convergence}, lemma 6.9]
        Given $\delta>0$ and $v^{\lambda}_{\delta}(\mu) < \varepsilon$, there exist points $\{x_{i}\}^{N}_{i=1} \in \mathcal{X}$ with $N \leq \floor{\frac{1}{\delta}}$ such that $\mu(B^{\lambda}_{X}(x_{i},\varepsilon)) > \delta$, and $\mu(\bigcup^{N}_{i=1}B^{\lambda}_{X}(x,2\varepsilon)) > 1-\varepsilon$, $\forall\: i = 1, \cdots, N$. Also, for all $i \neq j = 1,\cdots, N$, $l_{\lambda}(d_{X}(x_{i},x_{j})) > \varepsilon$.
    \end{lemma}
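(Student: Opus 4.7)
The plan is to import the classical greedy construction of maximal separated nets on the fat set, as in Lemma 6.9 of \citet{greven2009convergence}, and to check that swapping $d_X$ for its truncation $l_\lambda(d_X)$ leaves the argument untouched. The key observation is that $B^\lambda_X(x,r)$ coincides with the ordinary open ball in $(\mathcal{X}, d_X)$ whenever $r \leq \lambda$ and degenerates to the whole space $\mathcal{X}$ for $r > \lambda$; only the regime $\varepsilon \leq \lambda$ is genuinely nontrivial, since for $\varepsilon > \lambda$ a single arbitrary center witnesses every claim vacuously.

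First I would unpack the hypothesis $v^{\lambda}_{\delta}(\mu) < \varepsilon$ into a quantitative lower bound on the fat set
\begin{equation*}
    F := \{x \in \mathcal{X} : \mu(B^{\lambda}_{X}(x, \varepsilon)) > \delta\}.
\end{equation*}
The defining infimum supplies $\varepsilon' \in (v^\lambda_\delta(\mu), \varepsilon)$ with $\mu(\{x : \mu(B^\lambda_X(x, \varepsilon')) \leq \delta\}) \leq \varepsilon'$, and the monotonicity $B^\lambda_X(x, \varepsilon') \subseteq B^\lambda_X(x, \varepsilon)$ upgrades this to $\mu(F) > 1 - \varepsilon$.

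Next I would run the iterative selection inside $F$: pick $x_1 \in F$ arbitrarily, and having chosen $x_1, \ldots, x_k$, take $x_{k+1}$ from the residual $F \setminus \bigcup_{i \leq k} B^{\lambda}_{X}(x_i, \varepsilon)$ whenever it remains nonempty. The rule bakes in the separation $l_\lambda(d_X(x_i, x_j)) \geq \varepsilon$ for $i \neq j$. At termination, maximality forces $F \subseteq \bigcup_{i=1}^N B^{\lambda}_X(x_i, \varepsilon) \subseteq \bigcup_{i=1}^N B^\lambda_X(x_i, 2\varepsilon)$, so that $\mu(\bigcup_i B^\lambda_X(x_i, 2\varepsilon)) \geq \mu(F) > 1 - \varepsilon$. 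Membership $x_i \in F$ also delivers $\mu(B^{\lambda}_X(x_i, \varepsilon)) > \delta$ for free.

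The genuine obstacle is the cardinality bound $N \leq \lfloor 1/\delta \rfloor$, because $\varepsilon$-separation does not render the balls $B^\lambda_X(x_i, \varepsilon)$ pairwise disjoint and one cannot simply add their masses. The remedy, following Greven et al., is to refine the greedy rule so that $x_{k+1}$ is fat not merely with respect to $\mu$ but with respect to the residual measure $\mu_k$ obtained from $\mu$ by deleting the balls $B^\lambda_X(x_i, \varepsilon)$ selected up to step $k$, i.e.\ one insists on $\mu_k(B^\lambda_X(x_{k+1}, \varepsilon)) > \delta$. This forces the sets $B^\lambda_X(x_i, \varepsilon) \setminus \bigcup_{j < i} B^\lambda_X(x_j, \varepsilon)$ to be pairwise disjoint while each carrying mass exceeding $\delta$, giving $N \delta < 1$. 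Confirming that such a residually-fat point continues to exist while $F$ is not yet covered is exactly where the quantitative estimate $\mu(F) > 1 - \varepsilon$ from the opening step is consumed. The truncation $l_\lambda$ plays no further role, since every intermediate step treats balls merely as subsets of $\mathcal{X}$ governed by the triangle inequality, and both are inherited by $l_\lambda(d_X)$ from $d_X$.
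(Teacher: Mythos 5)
The paper does not actually prove this statement: it is imported wholesale from \citet{greven2009convergence} (Lemma 6.9), and the only commentary offered is the one-sentence observation immediately following the lemma, namely that for $\varepsilon<\lambda$ the trimmed ball $B^\lambda_X(x,\varepsilon)$ is the ordinary ball so a maximal net of $(\mathcal{X},d_X)$ satisfies the conclusion, and beyond that range the statement is vacuous. Your opening paragraph makes precisely this observation, so you and the paper agree on the only point the paper deemed worth saying. You then go further and re-derive the cited lemma from scratch; that re-derivation is essentially sound but has two blemishes worth flagging.

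First, removing open $\varepsilon$-balls at each greedy step only yields $l_\lambda(d_X(x_i,x_j))\geq\varepsilon$, whereas the lemma asserts the strict inequality $l_\lambda(d_X(x_i,x_j))>\varepsilon$. Second, the residual-measure refinement you introduce to salvage the cardinality bound is a detour: if one instead removes $2\varepsilon$-balls (pick $x_{k+1}\in F\setminus\bigcup_{i\leq k}B^\lambda_X(x_i,2\varepsilon)$), the centers are automatically $\geq 2\varepsilon>\varepsilon$ apart, the $\varepsilon$-balls around them are pairwise disjoint by the triangle inequality (which $l_\lambda(d_X)$ inherits from $d_X$), and each has mass $>\delta$ since $x_i\in F$, giving $N\delta<1$ directly. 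Maximality at termination then forces $F\subseteq\bigcup_i B^\lambda_X(x_i,2\varepsilon)$, so $\mu(\bigcup_i B^\lambda_X(x_i,2\varepsilon))\geq\mu(F)>1-\varepsilon$. This one construction delivers all four conclusions, including strict separation, without a second pass. Your residual-measure argument does eventually close (the point $x\in F$ outside all $2\varepsilon$-balls has $B^\lambda_X(x,\varepsilon)$ disjoint from every selected $\varepsilon$-ball, hence is still residually fat, contradicting termination), but you left that step at the level of a gesture, and it is precisely the step that makes or breaks the coverage claim.
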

    Observe that, since the effective range of permissible $\varepsilon$ remains $(0,\lambda)$, a maximal net of $(\mathcal{X}, d_{X})$ may be a feasible candidate satisfying the lemma. Beyond the range, the argument becomes trivial.

    Now, set $\varepsilon = 4v^{\lambda}_{\delta}(\mu)$. As such, we can find a set of points $\{x_{i}\}^{N}_{i=1} \in \mathcal{X}$ which ensures $\mu(\bigcup^{N}_{i=1}B^{\lambda}_{X}(x,\varepsilon)) > 1-\varepsilon$ with $l_{\lambda}(d_{X}(x_{i},x_{j})) > \varepsilon/2$ for all $i \neq j = 1,\cdots, N$. The set may contain arbitrarily far lying observations, yet the argument holds until $\lambda > \varepsilon/2$. Thus, following \citet{memoli2011gromov}, Claim 10.2 we argue that $\forall \: i = 1,\cdots, N$ $\exists\: y_{i} \in \mathcal{Y}$ such that 
    \begin{equation} \label{Si}
        \pi\Big(B^{\lambda}_{X}(x_{i},\varepsilon) \times B^{\lambda}_{Y}(y_{i},2(\varepsilon+\delta))\Big) \geq (1-\delta^2)\mu(B^{\lambda}_{X}(x_{i},\varepsilon))> \delta(1-\delta^2).
    \end{equation}
    Observe that if $\lambda < 2(\varepsilon+\delta)$, the first inequality holds trivially. We denote by $S=\{(x_{i},y_{i}), i=1,\cdots, N\} \subset \mathcal{X} \times \mathcal{Y}$ the set of points constructing the nets.
    
    \textit{\textbf{Step 2} (Bounding locally robust distortions):} Consider $\{x_{i}\}^{N}_{i=1}$ and $\{y_{i}\}^{N}_{i=1}$ that satisfy (\ref{Si}) with $\mu(B^{\lambda}_{X}(x_{i},\varepsilon)) > \delta$. Then, for all $i,j = 1,\cdots, N$
    \begin{equation*}
        \abs{l_{\lambda}(d_{X}(x_{i},x_{j})) - l_{\lambda}(d_{Y}(y_{i},y_{j}))} \leq 6(\varepsilon + \delta).
    \end{equation*}
    To prove the claim, let us first assume that there exists a pair $(i,j)$ for which it does not hold. As such, for all $x' \in B^{\lambda}_{X}(x_{i},\varepsilon)$, $x'' \in B^{\lambda}_{X}(x_{j},\varepsilon)$, $y' \in B^{\lambda}_{Y}(y_{i},2(\varepsilon+\delta))$, and $y'' \in B^{\lambda}_{Y}(y_{j},2(\varepsilon+\delta))$ we have
    \begin{align*}
        &\abs{l_{\lambda}(d_{X}(x',x'')) - l_{\lambda}(d_{Y}(y',y''))} \\ & \geq \abs{l_{\lambda}(d_{X}(x_{i},x_{j})) - l_{\lambda}(d_{Y}(y_{i},y_{j}))} - \abs{l_{\lambda}(d_{X}(x_{i},x_{j})) - l_{\lambda}(d_{X}(x',x''))} \\ &\qquad \qquad - \abs{l_{\lambda}(d_{Y}(y',y'')) - l_{\lambda}(d_{Y}(y_{i},y_{j}))} \\ &\geq 6(\varepsilon + \delta) - 3\varepsilon - 4(\varepsilon+\delta) =2\delta.
    \end{align*}
    Then,
    \begin{align*}
        J^{\lambda}_{1}(\pi) &\geq 2\delta \:\pi\Big(B^{\lambda}_{X}(x_{i},\varepsilon) \times B^{\lambda}_{Y}(y_{i},2(\varepsilon+\delta))\Big) \pi\Big(B^{\lambda}_{X}(x_{j},\varepsilon) \times B^{\lambda}_{Y}(y_{j},2(\varepsilon+\delta))\Big) \\ &\geq 2\delta^{3}(1-\delta^2)^{2} > 2\delta^{5},
    \end{align*}
    since $\delta \leq \frac{1}{2}$. This contradicts our initial assumption.

    \textit{\textbf{Step 3} (Constructing a suitable metric $S$):}
    Define $\tilde{d}^{\lambda}_{S}$ on $\mathcal{X} \sqcup \mathcal{Y}$ as
    \begin{equation*}
        (x,y) \mapsto \inf_{(x',y') \in S} \Big[l_{\lambda}(d_{X}(x,x')) + \norm{l_{\lambda}(d_{X}) - l_{\lambda}(d_{Y})}_{L^{\infty}(S \times S)} + l_{\lambda}(d_{Y}(y,y'))\Big],
    \end{equation*}
    also assuming $\tilde{d}^{\lambda}_{S} = l_{\lambda}(d_{X})$ on $\mathcal{X} \times \mathcal{X}$ and $\tilde{d}^{\lambda}_{S} = l_{\lambda}(d_{Y})$ on $\mathcal{Y} \times \mathcal{Y}$. Using Step 2, we get
    \begin{equation}\label{M}
        \tilde{d}^{\lambda}_{S}(x,y) \leq 2\lambda + 6(\varepsilon + \delta).
    \end{equation}
    However, for $i = 1, \cdots, N$, given $(x,y) \in B^{\lambda}_{X}(x_{i},\varepsilon) \times B^{\lambda}_{Y}(y_{i},2(\varepsilon+\delta))$ we have
    \begin{align}
        \tilde{d}^{\lambda}_{S}(x,y) &\leq \varepsilon + 2(\varepsilon+\delta) + \tilde{d}^{\lambda}_{S}(x_{i},y_{i}) \nonumber\\ &\leq \varepsilon + 8(\varepsilon+\delta), \label{L}
    \end{align}
    where the last inequality is due to \citet{memoli2011gromov}, lemma 10.1. Now, in pursuit of fragmenting $\mathcal{X} \times \mathcal{Y}$ based on balls around the points that constitute the maximal net, define $L = \bigcup^{N}_{i=1} B^{\lambda}_{X}(x_{i},\varepsilon) \times B^{\lambda}_{Y}(y_{i},2(\varepsilon+\delta))$. Hence, 
    \begin{align*}
        \int_{\mathcal{X} \times \mathcal{Y}} [\tilde{d}^{\lambda}_{S}(x,y)]^{p} d\pi(x,y) &=  \int_{L} [\tilde{d}^{\lambda}_{S}(x,y)]^{p} d\pi(x,y) + \int_{\mathcal{X} \times \mathcal{Y} \backslash L} [\tilde{d}^{\lambda}_{S}(x,y)]^{p} d\pi(x,y) \\ & \leq (9(\varepsilon+\delta))^{p} + (\varepsilon+\delta)[2\lambda + 6(\varepsilon + \delta)]^{p},
    \end{align*}
    where the last inequality is due to (\ref{M}), (\ref{L}) and the fact that $\pi(\mathcal{X} \times \mathcal{Y} \backslash L) \leq \varepsilon+\delta$ (\citet{memoli2011gromov}, Claim 10.5), which is obvious if $\lambda < 2(\varepsilon+\delta)$. As such, for $p \in [1,\infty)$
    \begin{equation*}
        d_{l\textrm{RSGW}}(\mu,\nu) \leq (4\lambda+\delta)^{\frac{1}{p}}(62\lambda + \frac{15}{2}),
    \end{equation*}
    since $\varepsilon \leq 4\lambda$ and $\delta\leq\frac{1}{2}$.

\subsection{Sample Complexity of Transform Sampling Using Tukey and LR-guided RGM Under Contamination} \label{sample_comp}
    While (\ref{RRGM_lag}) proposes altering the set of amenable couplings, based on our discussion in the first two sections, it is quite intuitive to think of a formulation that rather penalizes the norms. For example, we may construct a robust transform sampler drawing from both LR and Tukey's robustification techniques as follows:
    \begin{equation} \label{LamTuk}
        \inf_{F,G} \int \mathcal{T}_{2}\Big(d_{X}(x, G(y)) - d_{Y}(y,F(x))\Big)d\mu \otimes \nu(x,y) + \lambda_{1}W^{\lambda}_{1}(\mu, G_{\#}\nu) + \lambda_{2}W^{\lambda}_{1}(F_{\#}\mu, \nu),
    \end{equation}
    where the parameter underlying $\mathcal{T}_{2}$ is $\tau \geq 0$ and $\lambda \geq 0$. Typically, in an empirical problem both $\tau, \lambda$ need to be tuned, and the infimum is taken over arbitrary measurable maps $F,G$ between spaces $\mathcal{X} \rightleftarrows \mathcal{Y}$. They need not follow the binding constraint, as the Lagrangian conditions ensure their measure preservation only. We assume that they are continuous and component-wise uniformly bounded in the following sense.

    \begin{assumption} \label{ass1}
        Given that $\mathcal{X} \subseteq \mathbb{R}^{d}$ and $\mathcal{Y} \subseteq \mathbb{R}^{d'}$, let $F_{k}(\cdot) : \mathcal{X} \rightarrow \mathbb{R}$ (similarly, $G_{l}(\cdot) : \mathcal{Y} \rightarrow \mathbb{R}$) denote the $k$th (and $l$th, $l = 1, \cdots,d$) coordinate of $F$, which is continuous (similarly, $G$), $k=1,\cdots,d'$. There exists $b > 0$ such that for all $k=1,\cdots,d'$, and $l = 1, \cdots,d$
            \begin{equation*}
            \sup_{F} \norm{F_{k}}_{\infty}, \sup_{G} \norm{G_{l}}_{\infty} \leq b.
            \end{equation*}
    \end{assumption}
    We denote such classes of functions as $\mathcal{F}_{b}^{\mathcal{X} \rightarrow \mathcal{Y}}$ and $\mathcal{F}_{b}^{\mathcal{Y} \rightarrow \mathcal{X}}$ respectively.
    
    \subsubsection{Robust Concentration Inequalities}
    Given a pair of feasible maps $(F,G) \in \mathcal{F}_{b}^{\mathcal{X} \rightarrow \mathcal{Y}} \times \mathcal{F}_{b}^{\mathcal{Y} \rightarrow \mathcal{X}}$ we observe the non-asymptotic deviation of realized values of (\ref{LamTuk}) from a robust population benchmark. We assume, without loss of generality, that $\lambda_{1} = \lambda_{2} = 1$. Also, let $x_{1},x_{2},\cdots,x_{m}$ are sampled following the $\mathcal{O} \cup \mathcal{I}$ framework with $\mu$ being the inlier distribution. Similarly, we have $y_{1},y_{2},\cdots,y_{n}$ from the other space with inliers drawn i.i.d from $\nu$. The inlying (outlying) set of samples are indexed using $\mathcal{I}^{X}$ and $\mathcal{I}^{Y}$ ($\mathcal{O}^{X}$ and $\mathcal{O}^{Y}$) respectively. Let us denote
        \begin{align*}
            &T(\mu, \nu,F,G) \coloneqq \int \mathcal{T}_{2}\Big(d_{X}(x, G(y)) - d_{Y}(y,F(x))\Big)d\mu \otimes \nu(x,y), \\
            &L(\mu, \nu,F,G) \coloneqq W^{\lambda}_{1}(\mu, G_{\#}\nu) + W^{\lambda}_{1}(F_{\#}\mu, \nu).
        \end{align*}
        As such, the population loss function in (\ref{LamTuk}) can be written as $C(\mu, \nu, F,G) = T(F,G) + L(F,G)$. The empirical loss under contaminated measures $\hat{\mu}_{m}, \hat{\nu}_{n}$ is given as $T(\hat{\mu}_{m}, \hat{\nu}_{n},F,G)$. However, to emphasize the robust translations we write the empirical version of $L(\mu, \nu,F,G)$ as $L(\hat{\mu}_{m}, \hat{\nu}_{n},F,G) \coloneqq W^{\lambda}_{1}(\mu, G_{\#}\hat{\nu}_{n}) + W^{\lambda}_{1}(F_{\#}\hat{\mu}_{m}, \nu)$. The following two results combined, present the concentration of $C(\hat{\mu}_{m}, \hat{\nu}_{n}, F,G)$ around $\frac{|\mathcal{I}^{X}||\mathcal{I}^{Y}|}{mn} \mathbb{E}_{\mu \otimes \nu}[T(\hat{\mu}^{\mathcal{I}}_{m}, \hat{\nu}^{\mathcal{I}}_{n},F,G)] + \frac{|\mathcal{I}^{Y}|}{n}\mathbb{E}_{\nu}[W^{\lambda}_{1}(\mu, G_{\#}\hat{\nu}^{\mathcal{I}}_{n})] + \frac{|\mathcal{I}^{X}|}{m}\mathbb{E}_{\mu}[W^{\lambda}_{1}(F_{\#}\hat{\mu}^{\mathcal{I}}_{m}, \nu)]$.

        \begin{proposition} \label{conc_1}
        There exists a constant $K>0$ depending on $\tau^2$ such that for $\delta>0$
            \begin{equation*}
            \abs{T(\hat{\mu}_{m}, \hat{\nu}_{n},F,G) - \frac{|\mathcal{I}^{X}||\mathcal{I}^{Y}|}{mn} \mathbb{E}_{\mu \otimes \nu}[T(\hat{\mu}^{\mathcal{I}}_{m}, \hat{\nu}^{\mathcal{I}}_{n},F,G)]} \leq K\frac{|\mathcal{I}^{X}||\mathcal{I}^{Y}|}{mn}\sqrt{\frac{\ln\Big(\frac{4(|\mathcal{I}^{X}| \vee |\mathcal{I}^{Y}|)}{\delta}\Big)}{|\mathcal{I}^{X}| \wedge |\mathcal{I}^{Y}|}} + \tau^{2} \Big(\frac{|\mathcal{O}^{X}|}{m} + \frac{|\mathcal{O}^{Y}|}{n}\Big)
        \end{equation*}
        holds with probability at least $1-\delta$.
        \end{proposition}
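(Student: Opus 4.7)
The skeleton of the argument is to peel off the outlier contribution deterministically using the boundedness of the Tukey loss, and then to concentrate the remaining inlier double average around its population limit by a bounded-differences inequality. Write the empirical loss as the double sum
\begin{equation*}
T(\hat{\mu}_m, \hat{\nu}_n, F, G) = \frac{1}{mn}\sum_{i=1}^{m}\sum_{j=1}^{n} h_{F,G}(x_i, y_j),\qquad h_{F,G}(x,y) \coloneqq \mathcal{T}_{2}\bigl(d_X(x, G(y)) - d_Y(y, F(x))\bigr),
\end{equation*}
split the index set $\{1,\dots,m\}\times\{1,\dots,n\}$ into the inlier block $\mathcal{I}^X\times\mathcal{I}^Y$ and its complement, and factor out the inlier prefactor to obtain the identity $T(\hat{\mu}_m, \hat{\nu}_n, F, G) = \tfrac{|\mathcal{I}^X||\mathcal{I}^Y|}{mn}\, T(\hat{\mu}_m^{\mathcal{I}}, \hat{\nu}_n^{\mathcal{I}}, F, G) + R_{m,n}$, where $R_{m,n}$ collects the contribution of pairs $(i,j)$ with $i\in\mathcal{O}^X$ or $j\in\mathcal{O}^Y$.

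\textbf{Outlier residual and inlier concentration.} Since $0\le h_{F,G}\le \tau^2$ by definition of $\mathcal{T}_2$, and the number of cross pairs is at most $|\mathcal{O}^X| n + m |\mathcal{O}^Y|$, one gets the deterministic bound $|R_{m,n}| \le \tau^2(|\mathcal{O}^X|/m + |\mathcal{O}^Y|/n)$, which already contributes the second summand of the stated bound. For the inlier piece, linearity in each marginal gives $\mathbb{E}_{\mu\otimes\nu}[T(\hat{\mu}_m^{\mathcal{I}}, \hat{\nu}_n^{\mathcal{I}}, F, G)] = T(\mu,\nu,F,G)$. View $T(\hat{\mu}_m^{\mathcal{I}}, \hat{\nu}_n^{\mathcal{I}}, F, G)$ as a function of the $|\mathcal{I}^X|+|\mathcal{I}^Y|$ independent inlier samples; because $|h_{F,G}|\le \tau^2$, replacing a single $x_i$ perturbs the value by at most $\tau^2/|\mathcal{I}^X|$, and swapping a $y_j$ by at most $\tau^2/|\mathcal{I}^Y|$. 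McDiarmid's inequality then yields, with probability at least $1-\delta$,
\begin{equation*}
\bigl|T(\hat{\mu}_m^{\mathcal{I}}, \hat{\nu}_n^{\mathcal{I}}, F, G) - T(\mu,\nu,F,G)\bigr| \;\le\; \tau^2\sqrt{\tfrac{1}{2}\log(2/\delta)\bigl(\tfrac{1}{|\mathcal{I}^X|}+\tfrac{1}{|\mathcal{I}^Y|}\bigr)} \;\le\; \tau^2\sqrt{\tfrac{\log(2/\delta)}{|\mathcal{I}^X|\wedge|\mathcal{I}^Y|}}.
\end{equation*}
Multiplying by the prefactor $|\mathcal{I}^X||\mathcal{I}^Y|/(mn)$ delivers the first summand of the claimed bound.

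\textbf{Matching the stated logarithmic factor.} The plain McDiarmid bound involves only $\log(1/\delta)$, whereas the statement carries the weaker $\log\bigl(4(|\mathcal{I}^X|\vee|\mathcal{I}^Y|)/\delta\bigr)$. I would recover this form by running the concentration as a two-stage conditional Hoeffding: first bound $|T(\hat{\mu}_m^{\mathcal{I}}, \hat{\nu}_n^{\mathcal{I}}, F,G) - T(\mu, \hat{\nu}_n^{\mathcal{I}}, F,G)|$ conditional on the $y$-inliers, then bound $|T(\mu, \hat{\nu}_n^{\mathcal{I}}, F,G) - T(\mu,\nu,F,G)|$, union-bounding at confidences $\delta/4$ each and letting $K$ depend on $\tau^2$. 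The extra $\log(|\mathcal{I}^X|\vee|\mathcal{I}^Y|)$ is the price of taking this decomposition uniformly over the realisations of the opposite-side samples, and it is consistent with what the subsequent appendix statements evidently require for a uniform-in-$(F,G)$ upgrade via covering of $\mathcal{F}_b^{\mathcal{X}\to\mathcal{Y}}\times \mathcal{F}_b^{\mathcal{Y}\to\mathcal{X}}$.

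\textbf{Main obstacle.} The deterministic outlier step is automatic from the Tukey truncation, and the inlier concentration at the $1/\sqrt{|\mathcal{I}^X|\wedge|\mathcal{I}^Y|}$ rate is standard McDiarmid. The delicate point is packaging the logarithm in the precise form the later proofs rely on, which pushes one towards the two-stage conditional route above rather than a one-shot McDiarmid application.
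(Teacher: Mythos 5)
Your argument is correct and the overall skeleton matches the paper's: both peel off the outlier pairs deterministically using $0\le\mathcal{T}_2\le\tau^2$ (the paper keeps the exact count $\frac{|\mathcal{O}^X|}{m}+\frac{|\mathcal{O}^Y|}{n}-\frac{|\mathcal{O}^X||\mathcal{O}^Y|}{mn}$, which you upper-bound by dropping the negative term — harmless), and both then concentrate the inlier double average $T(\hat{\mu}^{\mathcal{I}}_m,\hat{\nu}^{\mathcal{I}}_n,F,G)$ around its expectation. Where you differ is the concentration step: you apply a single McDiarmid bound jointly in the $|\mathcal{I}^X|+|\mathcal{I}^Y|$ independent inliers with bounded differences $\tau^2/|\mathcal{I}^X|$ and $\tau^2/|\mathcal{I}^Y|$, which gives the tighter deviation $\tau^2\sqrt{\log(2/\delta)/(|\mathcal{I}^X|\wedge|\mathcal{I}^Y|)}$ and hence implies the stated (weaker) inequality directly, since $\ln(2/\delta)\le\ln\big(4(|\mathcal{I}^X|\vee|\mathcal{I}^Y|)/\delta\big)$. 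The paper instead splits the inlier deviation into two pieces — a conditional term $\frac{1}{|\mathcal{I}^X|}\sum_{i}\big(\frac{1}{|\mathcal{I}^Y|}\sum_{j}t_{F,G}(x_i,y_j)-\mathbb{E}_{\nu}[t_{F,G}(x_i,y)]\big)$, handled by a Hoeffding/McDiarmid bound for each fixed $x_i$ followed by a union bound over $i\in\mathcal{I}^X$, and a marginal term $\frac{1}{|\mathcal{I}^X|}\sum_i\mathbb{E}_{\nu}[t_{F,G}(x_i,y)]-\mathbb{E}_{\mu\otimes\nu}[t_{F,G}]$, handled via Jensen plus McDiarmid with envelope $4M^2\wedge\tau^2$ — and it is precisely this per-sample union bound that produces the $\ln(|\mathcal{I}^X|\vee|\mathcal{I}^Y|)$ factor in the statement; your "two-stage conditional" sketch is essentially this route (with the roles of the two samples swapped). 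So your one-shot argument is a legitimate, in fact slightly sharper, proof of the proposition, and your remark correctly identifies that the logarithmic inflation in the stated bound is an artifact of the union-bound decomposition rather than a necessity; the only loose end is that your final paragraph leaves that union-bound bookkeeping at the level of a sketch, but nothing in the statement depends on it once the one-shot bound is in place.
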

        \textbf{Proof of Proposition \ref{conc_1}}\\
        Let us denote $t_{F,G}(x,y) \coloneqq \mathcal{T}_{2}\Big(d_{X}(x, G(y)) - d_{Y}(y,F(x))\Big)$. Now, following the $\mathcal{O} \cup \mathcal{I}$ setup
        \begin{align*}
            T(\hat{\mu}_{m}, \hat{\nu}_{n},F,G) &= \frac{1}{mn}\sum^{m}_{i=1}\sum^{n}_{j=1} \mathcal{T}_{2}\Big(d_{X}(x_{i}, G(y_{j})) - d_{Y}(y_{j},F(x_{i}))\Big) \\ & \leq  \frac{1}{mn}\sum_{i \in \mathcal{I}^{X}}\sum_{j \in \mathcal{I}^{Y}} \mathcal{T}_{2}\Big(d_{X}(x_{i}, G(y_{j})) - d_{Y}(y_{j},F(x_{i}))\Big) + \frac{\tau^{2}}{mn} \Big(|\mathcal{O}^{X}||\mathcal{O}^{Y}| + |\mathcal{O}^{X}||\mathcal{I}^{Y}| + |\mathcal{I}^{X}||\mathcal{O}^{Y}|\Big) \\ &= \frac{|\mathcal{I}^{X}||\mathcal{I}^{Y}|}{mn} T(\hat{\mu}^{\mathcal{I}}_{m}, \hat{\nu}^{\mathcal{I}}_{n},F,G) + \tau^{2} \Big(\frac{|\mathcal{O}^{X}|}{m} + \frac{|\mathcal{O}^{Y}|}{n} - \frac{|\mathcal{O}^{X}||\mathcal{O}^{Y}|}{mn}\Big).
        \end{align*}
        Also, 
        \begin{equation*}
            \frac{|\mathcal{I}^{X}||\mathcal{I}^{Y}|}{mn} T(\hat{\mu}^{\mathcal{I}}_{m}, \hat{\nu}^{\mathcal{I}}_{n},F,G) - \tau^{2} \Big(\frac{|\mathcal{O}^{X}|}{m} + \frac{|\mathcal{O}^{Y}|}{n} - \frac{|\mathcal{O}^{X}||\mathcal{O}^{Y}|}{mn}\Big) \leq T(\hat{\mu}_{m}, \hat{\nu}_{n},F,G).
        \end{equation*}
        As such, combining the two inequalities we get
        \begin{align}
            &\abs{T(\hat{\mu}_{m}, \hat{\nu}_{n},F,G) - \frac{|\mathcal{I}^{X}||\mathcal{I}^{Y}|}{mn} \mathbb{E}_{\mu \otimes \nu}[T(\hat{\mu}^{\mathcal{I}}_{m}, \hat{\nu}^{\mathcal{I}}_{n},F,G)]} \nonumber \\ &\leq \frac{|\mathcal{I}^{X}||\mathcal{I}^{Y}|}{mn} \abs{T(\hat{\mu}^{\mathcal{I}}_{m}, \hat{\nu}^{\mathcal{I}}_{n},F,G) - \mathbb{E}_{\mu \otimes \nu}[T(\hat{\mu}^{\mathcal{I}}_{m}, \hat{\nu}^{\mathcal{I}}_{n},F,G)]} + \tau^{2} \Big(\frac{|\mathcal{O}^{X}|}{m} + \frac{|\mathcal{O}^{Y}|}{n} - \frac{|\mathcal{O}^{X}||\mathcal{O}^{Y}|}{mn}\Big). \label{ineq_end}
        \end{align}
        Now, observe that
        \begin{align}
            &\abs{T(\hat{\mu}^{\mathcal{I}}_{m}, \hat{\nu}^{\mathcal{I}}_{n},F,G) - \mathbb{E}_{\mu \otimes \nu}[T(\hat{\mu}^{\mathcal{I}}_{m}, \hat{\nu}^{\mathcal{I}}_{n},F,G)]} \nonumber \\ &= \abs{\frac{1}{|\mathcal{I}^{X}||\mathcal{I}^{Y}|}\sum_{i \in \mathcal{I}^{X}}\sum_{j \in \mathcal{I}^{Y}} t_{F,G}(x_{i},y_{j}) - \mathbb{E}_{\mu \otimes \nu}[t_{F,G}(x,y)]} \nonumber \\ &\leq \abs{\frac{1}{|\mathcal{I}^{X}|}\sum_{i \in \mathcal{I}^{X}} \left(\frac{1}{|\mathcal{I}^{Y}|}\sum_{j \in \mathcal{I}^{Y}}t_{F,G}(x_{i},y_{j}) -  \mathbb{E}_{\nu}[t_{F,G}(x_{i},y)]\right)} + \abs{\frac{1}{|\mathcal{I}^{X}|}\sum_{i \in \mathcal{I}^{X}}\mathbb{E}_{\nu}[t_{F,G}(x_{i},y)] - \mathbb{E}_{\mu \otimes \nu}[t_{F,G}(x,y)]}. \label{two_ter}
        \end{align}
        Recall that $M = \textrm{diam}(\mathcal{X}) \vee \textrm{diam}(\mathcal{Y})$. The function $|\mathcal{I}^{X}|^{-1}\sum_{i \in \mathcal{I}^{X}}t_{F,G}(x_{i},y)$ satisfies the bounded difference inequality with parameter $|\mathcal{I}^{X}|^{-1}(4M^{2} \wedge \tau^{2})$. Hence, due to McDiarmid's inequality 
        \begin{align*}
            &\abs{\frac{1}{|\mathcal{I}^{X}|}\sum_{i \in \mathcal{I}^{X}}\mathbb{E}_{\nu}[t_{F,G}(x_{i},y)] - \mathbb{E}_{\mu \otimes \nu}[t_{F,G}(x,y)]} \\ &\leq \mathbb{E}_{\nu}\abs{\frac{1}{|\mathcal{I}^{X}|}\sum_{i \in \mathcal{I}^{X}}t_{F,G}(x_{i},y) - \mathbb{E}_{\mu}[t_{F,G}(x,y)]} \leq \sqrt{\frac{(4M^{2} \wedge \tau^{2})^{2} \ln(2/\delta)}{2|\mathcal{I}^{X}|}}
        \end{align*}
        holds with probability at least $1-\delta$, where the first inequality follows from Jensen's inequality. For the first term in (\ref{two_ter}), using the union bound over a similar argument we get
        \begin{align*}
            \abs{\frac{1}{|\mathcal{I}^{X}|}\sum_{i \in \mathcal{I}^{X}} \left(\frac{1}{|\mathcal{I}^{Y}|}\sum_{j \in \mathcal{I}^{Y}}t_{F,G}(x_{i},y_{j}) -  \mathbb{E}_{\nu}[t_{F,G}(x_{i},y)]\right)} \leq \sqrt{\frac{(4M^{2} \wedge \tau^{2})^{2} \ln(2|\mathcal{I}^{X}|/\delta)}{2|\mathcal{I}^{Y}|}}
        \end{align*}
        with probability at least $1-\delta$. As such, 
        \begin{equation*}
            \abs{T(\hat{\mu}^{\mathcal{I}}_{m}, \hat{\nu}^{\mathcal{I}}_{n},F,G) - \mathbb{E}_{\mu \otimes \nu}[T(\hat{\mu}^{\mathcal{I}}_{m}, \hat{\nu}^{\mathcal{I}}_{n},F,G)]} \lesssim \sqrt{\frac{\ln\Big(\frac{2(|\mathcal{I}^{X}| \vee |\mathcal{I}^{Y}|)}{\delta}\Big)}{|\mathcal{I}^{X}| \wedge |\mathcal{I}^{Y}|}}
        \end{equation*}
        holds with probability $\geq 1-2\delta$. Hence, putting this back to (\ref{ineq_end}) proves the result.

        The proof also shows that it is always possible to replace the term $\frac{|\mathcal{I}^{X}||\mathcal{I}^{Y}|}{mn} \mathbb{E}_{\mu \otimes \nu}[T(\hat{\mu}^{\mathcal{I}}_{m}, \hat{\nu}^{\mathcal{I}}_{n},F,G)]$ by $T(\mu,\nu,F,G)$, only by incurring an additional term on the upper bound of $\mathcal{O}\Big(\frac{|\mathcal{O}^{X}|}{m} + \frac{|\mathcal{O}^{Y}|}{n}\Big)$.

        \begin{proposition} \label{conc_2}
            There exists a constant $\tilde{K}>0$ depending on $\lambda$ such that for $\delta>0$
            \begin{align*}
                \bigg|W^{\lambda}_{1}(\mu, G_{\#}\hat{\nu}_{n}) + W^{\lambda}_{1}(F_{\#}\hat{\mu}_{m}, \nu) &- \frac{|\mathcal{I}^{Y}|}{n}\mathbb{E}[W^{\lambda}_{1}(\mu, G_{\#}\hat{\nu}^{\mathcal{I}}_{n})] - \frac{|\mathcal{I}^{X}|}{m}\mathbb{E}[W^{\lambda}_{1}(F_{\#}\hat{\mu}^{\mathcal{I}}_{m}, \nu)]\bigg| \\ &\leq \tilde{K} \Big(\frac{\sqrt{|\mathcal{I}^{X}|}}{m} + \frac{\sqrt{|\mathcal{I}^{Y}|}}{n}\Big)\sqrt{\ln(4/\delta)} + \lambda\Big(\frac{|\mathcal{O}^{X}|}{m}+\frac{|\mathcal{O}^{Y}|}{n}\Big)
            \end{align*}
            holds with probability at least $1-\delta$.
        \end{proposition}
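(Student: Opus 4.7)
The plan is to mirror the strategy behind Proposition \ref{conc_1}, now with the truncated $1$-Wasserstein functional in place of the Tukey loss. By symmetry, I treat the $F$-side term $W^{\lambda}_{1}(F_{\#}\hat{\mu}_{m}, \nu)$ in isolation; the $G$-side proceeds identically, and a union bound with $\delta/2$ apportioned to each side yields the stated deviation.

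The first step is a deterministic reduction from $\hat{\mu}_{m}$ to the inlier empirical $\hat{\mu}^{\mathcal{I}}_{m}$. The identity $\hat{\mu}_{m} = (1 - \abs{\mathcal{O}^{X}}/m)\, \hat{\mu}^{\mathcal{I}}_{m} + (\abs{\mathcal{O}^{X}}/m)\, \hat{\mu}^{\mathcal{O}}_{m}$ exhibits $F_{\#}\hat{\mu}_{m}$ as a Huber contamination of $F_{\#}\hat{\mu}^{\mathcal{I}}_{m}$ at level $\abs{\mathcal{O}^{X}}/m$. Plugging this into the definition (\ref{robW}) of $W^{\lambda}_{1}$, the inner trimming can absorb either copy of the outliers, yielding a two-sided bound
\[\bigl| W^{\lambda}_{1}(F_{\#}\hat{\mu}_{m}, \nu) - W^{\lambda}_{1}(F_{\#}\hat{\mu}^{\mathcal{I}}_{m}, \nu) \bigr| \lesssim \lambda \cdot \tfrac{\abs{\mathcal{O}^{X}}}{m},\]
where the residual slack is controlled via the Lipschitz dependence of $W^{\epsilon}_{1}$ on its robustness radius, an analogue of Proposition \ref{rob_rad}(ii) that is immediate from the constrained dual (\ref{constdual}) in Appendix \ref{duality}. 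Combined with $1 - \abs{\mathcal{I}^{X}}/m = \abs{\mathcal{O}^{X}}/m$ and the trivial a priori bound $W^{\lambda}_{1} \lesssim b$ under Assumption \ref{ass1}, this upgrades to
\[\Bigl| W^{\lambda}_{1}(F_{\#}\hat{\mu}_{m}, \nu) - \tfrac{\abs{\mathcal{I}^{X}}}{m} W^{\lambda}_{1}(F_{\#}\hat{\mu}^{\mathcal{I}}_{m}, \nu) \Bigr| \lesssim \lambda \cdot \tfrac{\abs{\mathcal{O}^{X}}}{m},\]
matching the outlier-dependent term in the claim.

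For the concentration of the clean estimator $W^{\lambda}_{1}(F_{\#}\hat{\mu}^{\mathcal{I}}_{m}, \nu)$ around its $\mu^{\otimes \abs{\mathcal{I}^{X}}}$-expectation, I would invoke McDiarmid's inequality. Replacing a single inlier $x_{i}$ perturbs $F_{\#}\hat{\mu}^{\mathcal{I}}_{m}$ in total variation by at most $1/\abs{\mathcal{I}^{X}}$; passing through the dual (\ref{dualWeps}), this alters $W^{\lambda}_{1}(\cdot, \nu)$ by at most $2\norm{\phi}_{\infty}/\abs{\mathcal{I}^{X}}$, where $\phi$ is the optimal $c$-concave potential. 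Appendix \ref{duality} restricts the admissible class to potentials of range $\lesssim \lambda$, so the bounded-difference parameter is of order $\lambda/\abs{\mathcal{I}^{X}}$. McDiarmid then gives with probability at least $1 - \delta/4$
\[\bigl| W^{\lambda}_{1}(F_{\#}\hat{\mu}^{\mathcal{I}}_{m}, \nu) - \mathbb{E}[ W^{\lambda}_{1}(F_{\#}\hat{\mu}^{\mathcal{I}}_{m}, \nu)] \bigr| \lesssim \lambda\, \sqrt{\log(4/\delta)/\abs{\mathcal{I}^{X}}},\]
and multiplication by $\abs{\mathcal{I}^{X}}/m$ produces the $\lambda \sqrt{\abs{\mathcal{I}^{X}}}/m$ rate. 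An analogous argument on the $\nu$-side with $(G, \hat{\nu}_{n})$ in place of $(F, \hat{\mu}_{m})$, followed by a union bound, completes the proof with $\tilde{K} \lesssim \lambda$.

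The main technical obstacle is the TV-to-$W^{\lambda}_{1}$ Lipschitz bound underlying both steps, since the penalized dual (\ref{dualWeps}) does not a priori enforce bounded potentials. I plan to resolve this via the equivalence between the penalized representation (\ref{dualWeps}) and the constrained one (\ref{constdual}) recorded in Appendix \ref{duality}: restricting to potentials whose range is capped by $\lambda$ is harmless for the supremum and supplies the required Lipschitz constant. Once this is in hand, the outlier-trimming step and the McDiarmid step compose cleanly into the claim.
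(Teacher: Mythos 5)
Your proposal is correct and follows essentially the same route as the paper: a deterministic inlier/outlier reduction contributing the $\lambda\big(\tfrac{|\mathcal{O}^{X}|}{m}+\tfrac{|\mathcal{O}^{Y}|}{n}\big)$ term, McDiarmid's inequality applied to $W^{\lambda}_{1}(F_{\#}\hat{\mu}^{\mathcal{I}}_{m},\nu)$ (and its $G$-side analogue) via the range-$\le\lambda$ constrained dual of Appendix \ref{duality} with bounded-difference parameter of order $\lambda/|\mathcal{I}^{X}|$, and a final union bound. The only slight wobble is your appeal to the a priori bound $W^{\lambda}_{1}\lesssim b$; the truncated cost (equivalently the range-constrained potentials) gives $W^{\lambda}_{1}\le\lambda$ directly, which is what yields the exact coefficient $\lambda$ on the outlier term as in the paper's argument.
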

        \textbf{Proof of Proposition \ref{conc_2}} \\
        First, let us note that $F_{\#}\hat{\mu}_{m} = \frac{1}{m}\sum^{m}_{i=1}\delta_{F(x_{i})}$ based on the transformed observations $\{F(x_{i})\}^{m}_{i=1}$. Technically, this is rather the empirical distribution $\widehat{(F_{\#}\mu)}_{m}$. Our consideration remains valid if $F$ is taken as an information preserving transform (IPT), which are in abundance (e.g. Lipschitz maps) \citep{chakrabarty2023concurrent}. Hence, similar to the proof of Proposition \ref{conc_1}, we observe
        \begin{equation*}
            \abs{W^{\lambda}_{1}(F_{\#}\hat{\mu}_{m}, \nu) - \frac{|\mathcal{I}^{X}|}{m}W^{\lambda}_{1}(F_{\#}\hat{\mu}^{\mathcal{I}}_{m}, \nu)} \leq \frac{\lambda|\mathcal{O}^{X}|}{m}.
        \end{equation*}
        This implies that
        \begin{align}
            \abs{W^{\lambda}_{1}(F_{\#}\hat{\mu}_{m}, \nu) - \frac{|\mathcal{I}^{X}|}{m}\mathbb{E}[W^{\lambda}_{1}(F_{\#}\hat{\mu}^{\mathcal{I}}_{m}, \nu)]} \leq \frac{|\mathcal{I}^{X}|}{m}\abs{W^{\lambda}_{1}(F_{\#}\hat{\mu}^{\mathcal{I}}_{m}, \nu) - \mathbb{E}[W^{\lambda}_{1}(F_{\#}\hat{\mu}^{\mathcal{I}}_{m}, \nu)]} + \frac{\lambda|\mathcal{O}^{X}|}{m}. \label{anot}
        \end{align}
        Now, using the duality of $W^{\lambda}_{1}$ (see, Appendix \ref{duality}) we may write
        \begin{align*}
            W^{\lambda}_{1}(F_{\#}\hat{\mu}^{\mathcal{I}}_{m}, \nu) = \frac{1}{|\mathcal{I}^{X}|}\sup_{{\phi \in C_{b}(\mathcal{Y}):} \atop {\textrm{Range}(\phi) \leq \lambda}}\sum_{i \in \mathcal{I}^{X}} \phi(F(x_{i})) - \mathbb{E}_{\nu}\phi^{c}.
        \end{align*}
        Due to Assumption \ref{ass1}, the composition $\phi \circ F \in C_{b'}(\mathcal{X})$ for some $b'>0$. As such, $W^{\lambda}_{1}(F_{\#}\hat{\mu}_{m}, \nu)$ satisfies the bounded differences property with upper bound $\mathcal{O}(\frac{1}{|\mathcal{I}^{X}|})$. Hence,
        \begin{equation*}
            \abs{W^{\lambda}_{1}(F_{\#}\hat{\mu}^{\mathcal{I}}_{m}, \nu) - \mathbb{E}[W^{\lambda}_{1}(F_{\#}\hat{\mu}^{\mathcal{I}}_{m}, \nu)]} \lesssim \sqrt{\frac{\ln(2/\delta)}{|\mathcal{I}^{X}|}}
        \end{equation*}
        holds with probability at least $1-\delta$. Putting the bound back in (\ref{anot}) yields,
        \begin{equation*}
            \abs{W^{\lambda}_{1}(F_{\#}\hat{\mu}_{m}, \nu) - \frac{|\mathcal{I}^{X}|}{m}\mathbb{E}[W^{\lambda}_{1}(F_{\#}\hat{\mu}^{\mathcal{I}}_{m}, \nu)]} \leq K'\sqrt{\frac{|\mathcal{I}^{X}|}{m}} \sqrt{\frac{\ln(2/\delta)}{m}} + \frac{\lambda|\mathcal{O}^{X}|}{m},
        \end{equation*}
        that hold with probability at least $1-\delta$, where $K'>0$ depends on $\lambda$. Similarly, we can show that there exists $K''>0$ such that
        \begin{equation*}
            \abs{W^{\lambda}_{1}(\mu, G_{\#}\hat{\nu}_{n}) - \frac{|\mathcal{I}^{Y}|}{n}\mathbb{E}[W^{\lambda}_{1}(\mu, G_{\#}\hat{\nu}^{\mathcal{I}}_{n})]} \leq K''\sqrt{\frac{|\mathcal{I}^{Y}|}{n}} \sqrt{\frac{\ln(2/\delta)}{n}} + \frac{\lambda|\mathcal{O}^{Y}|}{n}
        \end{equation*}
        also holds with probability $\geq 1-\delta$. Combining the last two bounds proves the result.

    \begin{remark}[Uniform Deviations]
        The concentration inequalities make it easier to comment on the uniform deviation $\sup_{(F,G) \in \mathcal{F}_{b}^{\mathcal{X} \rightarrow \mathcal{Y}} \times \mathcal{F}_{b}^{\mathcal{Y} \rightarrow \mathcal{X}}}\abs{T(\hat{\mu}_{m}, \hat{\nu}_{n},F,G) - T(\mu,\nu,F,G)}$. Let us assume both $d_{X}$ and $d_{Y}$ to be the Euclidean metrics in their respective spaces. Due to (\ref{ineq_end}), the problem boils down to finding
        \begin{equation*}
            \sup_{(F,G) \in \mathcal{F}_{b}^{\mathcal{X} \rightarrow \mathcal{Y}} \times \mathcal{F}_{b}^{\mathcal{Y} \rightarrow \mathcal{X}}} \abs{T(\hat{\mu}^{\mathcal{I}}_{m}, \hat{\nu}^{\mathcal{I}}_{n},F,G) - \mathbb{E}_{\mu \otimes \nu}[T(\hat{\mu}^{\mathcal{I}}_{m}, \hat{\nu}^{\mathcal{I}}_{n},F,G)]} \eqqcolon T^{\mathcal{I}}_{m,n}.
        \end{equation*}
        Since the underlying loss depends solely on inlying observations, using Proposition 4.6 of \citet{hur2024reversible}, we obtain for any $\varepsilon > 0$
        \begin{align*}
            T^{\mathcal{I}}_{m,n} \lesssim \sqrt{\frac{\ln\Big(\frac{2(|\mathcal{I}^{X}| \vee |\mathcal{I}^{Y}|)}{\delta}\Big)}{|\mathcal{I}^{X}| \wedge |\mathcal{I}^{Y}|}} + \varepsilon + \sqrt{\frac{\sum^{d'}_{k=1} \log N_{\infty}(\varepsilon, \mathcal{F}_{k}, |\mathcal{I}^{X}|) + \sum^{d}_{l=1} \log N_{\infty}(\varepsilon, \mathcal{G}_{l}, |\mathcal{I}^{Y}|)}{|\mathcal{I}^{X}| \wedge |\mathcal{I}^{Y}|}},
        \end{align*}
        holds with probability $\geq 1-\delta$, where $\mathcal{F}_{k}$ and $\mathcal{G}_{l}$ are respectively the collections of amenable functions $F_{k}$ and $G_{l}$ satisfying Assumption \ref{ass1}. Classes $\mathcal{F}_{k}$ and $\mathcal{G}_{l}$ whose metric entropies scale according to $\mathcal{O}(1/\varepsilon)^{a}$, for some $a>0$ are abundant. For example, Sobolev or Lipschitz-smooth functions defined on the unit interval $[0,1]^{d}$. One can similarly derive uniform deviation bounds corresponding to $L(\hat{\mu}_{m}, \hat{\nu}_{n},F,G)$ and hence, eventually $C(\hat{\mu}_{m}, \hat{\nu}_{n},F,G)$.
    \end{remark}

\subsection{Existence of latent chaining} \label{chain}
    While it is difficult to characterize a suitable $\mathcal{Z}$ that follows the chaining argument given arbitrary $\mu$ and $\nu$, we give examples that conform to our experiments. Assume that $\mu,\nu \in \mathcal{P}^{\textrm{ac}}_{2}(\mathbb{R}^{d})$ are fully supported. Moreover, $\mathcal{Z}$ is convex, endowed with an absolutely continuous $\omega$ (e.g. Lebesgue). Then, due to Brenier's polar factorization \citep{brenier1991polar}, any transport map $T: \mathcal{Z} \rightarrow \mathcal{X}$ can be decomposed as $T = (\nabla \varphi) \circ s$ a.e., where $\varphi : \mathcal{Z} \rightarrow \mathbb{R}$ is convex and $s : \mathcal{Z} \rightarrow \mathcal{Z}$ is measure-preserving, both uniquely defined a.e. In fact, $(\nabla \varphi)$ is the unique optimal transport map between $\omega$ and $\mu$ under the Euclidean cost. Hence, we can construct $\phi_{X} \coloneqq [(\nabla \varphi) \circ s']^{-1}$, where $s'$ is also bijective (see, (\ref{diag})). Similarly, define $\phi'_{Y} \coloneqq (\nabla \varrho) \circ s''$, where $(\nabla \varrho)$ is the OT map between $\omega$ and $\nu$, and $s'' : \mathcal{Z} \rightarrow \mathcal{Z}$ preserves measure. As such, $F = (\nabla \varrho) \circ s'' \circ [(\nabla \varphi) \circ s']^{-1}$. One can similarly define $G$.

    The same can be extended to $\mathcal{Z}$ (Also, $\mathcal{X}$ and $\mathcal{Y}$) being a connected, compact, $C^{3}$-smooth Riemannian Manifold without boundary (\citet{mccann2001polar}, Theorem 11). Then, any volume-preserving transport $T$ is represented as $\textrm{exp}(-\nabla \varphi) \circ s$ a.e. Based on this, the rest of the construction follows exactly.

\section{Experiments and implementation details}
We refer to the repository {\small \url{https://anonymous.4open.science/r/RCDA/}} for all codes along with execution instructions. All experiments were carried out on an RTX 3090 GPU.
\subsection{Parameter selection in TGW and HGW} \label{param_sel}
As mentioned before, we select $\tau = \tilde{m} + 3\tilde{\sigma}$, where $\tilde{m}$ and $\tilde{\sigma}$ are respectively the median and the mean deviation about median of the deviation values $J_{X,Y} = \abs{d_{X}-d_{Y}}$. Observe that, for an univariate standard folded Normal random variable $Z$
\begin{align*}
    \mathbb{P}(Z \leq \tilde{m}) = 2\Phi(\tilde{m}) -1  = \frac{1}{2},
\end{align*}
where $\Phi(\cdot)$ is the distribution function of $N(0,1)$. As such, $\tilde{m} \approx 0.69$, the third quartile of $N(0,1)$. Now, given $a>0$,
\begin{align*}
    \tilde{\sigma}=\mathbb{E}\left[\:\abs{Z-a}\right] &= \int^{\infty}_{0} \abs{z-a} f(z) dz = \sqrt{\frac{2}{\pi}} \int^{\infty}_{0} \abs{z-a} e^{-\frac{z^{2}}{2}} dz \\ &= \sqrt{\frac{2}{\pi}} \Big(\int^{a}_{0} (a-z) e^{-\frac{z^{2}}{2}} dz + \int^{\infty}_{a} (z-a) e^{-\frac{z^{2}}{2}} dz\Big) \\ &= \sqrt{\frac{2}{\pi}}(2e^{-\frac{a^{2}}{2}}-1) + 4a\Phi(a) - 3a.
\end{align*}
As such at $a=0.69$, we have $\tilde{\sigma} \approx 0.46$. The corresponding estimate for $\tau$ turns out to be $\approx 2.07$, which coincides approximately with the $96$-percentile of $Z$. In our experiments, the deviation between pairwise distances under contamination does not follow such a law. Thus, calculating only a certain percentile becomes insufficient. Given a set of observations, we calculate the statistics independently and only use the percentiles as a reference.  
\textbf{}

\begin{figure}[H]
    \centering
    \begin{subfigure}{0.425\linewidth}
        \includegraphics[width=\linewidth]{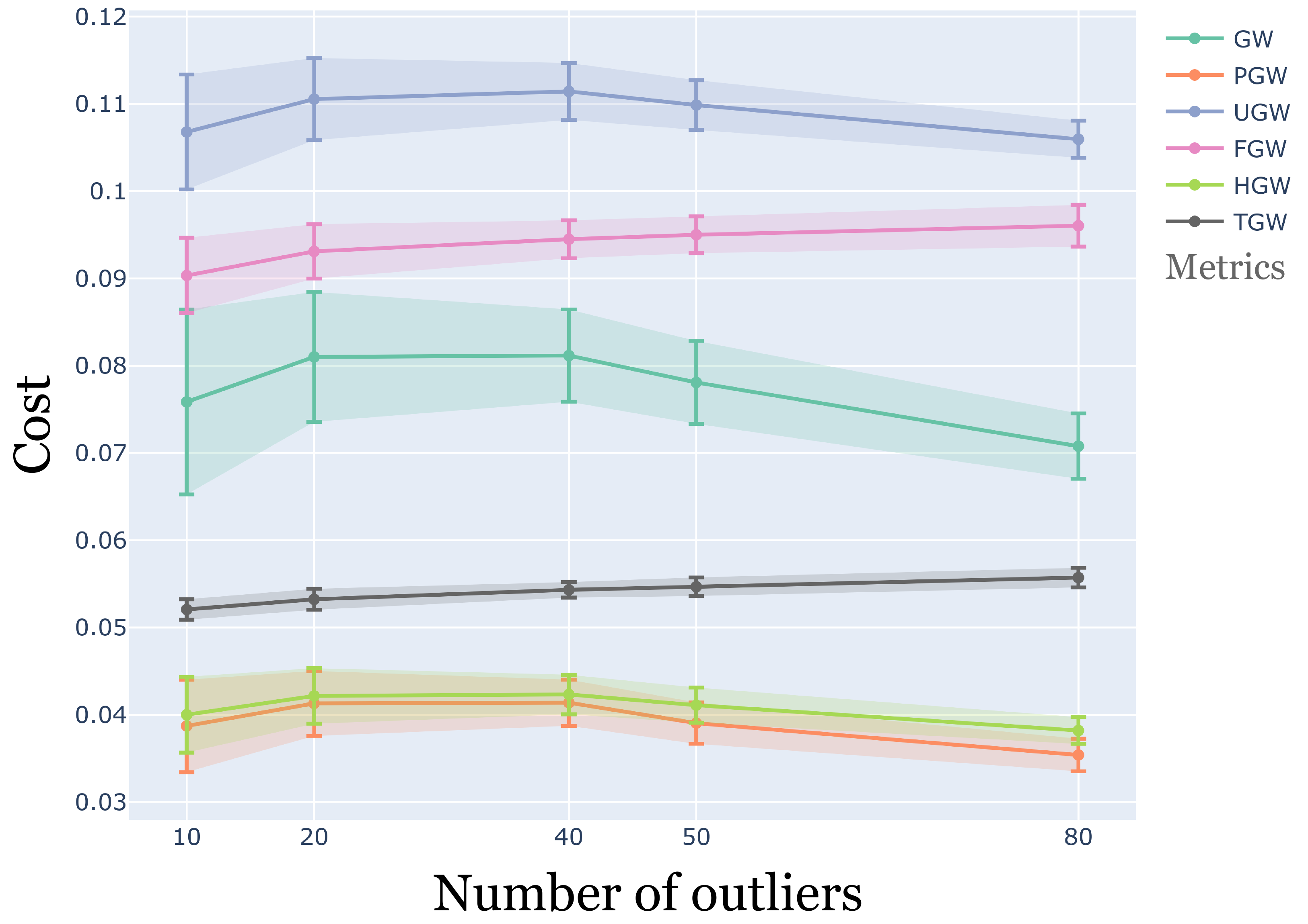}
        \caption{}
        \label{fig:gauss_plot}
    \end{subfigure}
    \begin{subfigure}{0.565\linewidth}
      \includegraphics[width=\linewidth]{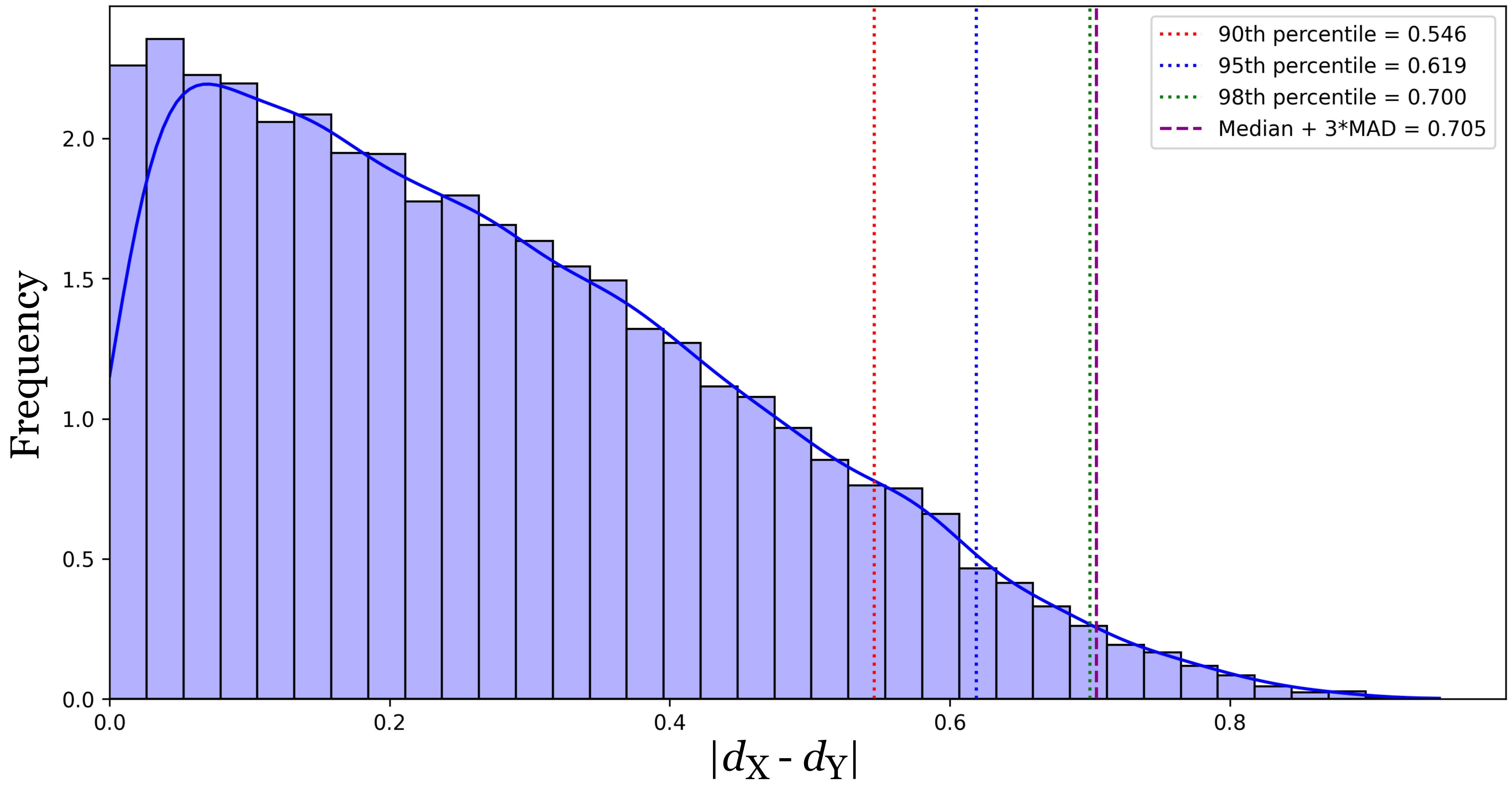}
      \caption{}
      \label{fig:gauss_dist}
    \end{subfigure}
    \caption{(a) Average losses under increasing proportion of bi-variate standard Gaussian outliers $(0.02,0.04,0.08,0.1,0.16)$ in source. (b) Empirical distribution of $J_{X,Y}$ under $80$ Gaussian outliers. Realized $95$-percentile and $\tilde{m} + 3\tilde{\sigma}$ are $0.619$ and $0.705$ respectively. TGW follows the $95$-percentile selection scheme while HGW is calculated based on $\tilde{m} + 3\tilde{\sigma}$.} 
    \label{fig:cat_heart_gauss}
\end{figure}

Performances of the competing methods alter under a contaminating distribution with a thinner tail. While Cauchy implants vastly outlying observations with higher frequency, Gaussian outliers flock to the immediate neighborhood (see, Figure \ref{fig:cat_heart_data}(b)). As a result, we observe a much more pronounced distorting of the shape instead of extremely large distortion values ($J_{X,Y}$). Moreover, as the number of outliers increases, only `moderately' high-valued distances ($d_{X}$) in the source increase (see, Figure \ref{fig:pairwise_cat}). This noising phenomenon is more difficult to eliminate using our thresholding scheme as `outlying' $d_{X}$ values are erroneously considered legitimate. As a result, the vanilla GW value, due to averaging, decreases even at elevated contamination levels. This is misleading since it does not reflect the distortion in local geometry. PGW and HGW perform well, and remarkably, TGW not only stays stable but also exhibits minute increments, indicating intensifying noise. The OT component's increase in FGW also demonstrates the same effect.  

\begin{figure}[H]
    \centering
    \begin{subfigure}{0.475\linewidth}
        \includegraphics[width=\linewidth]{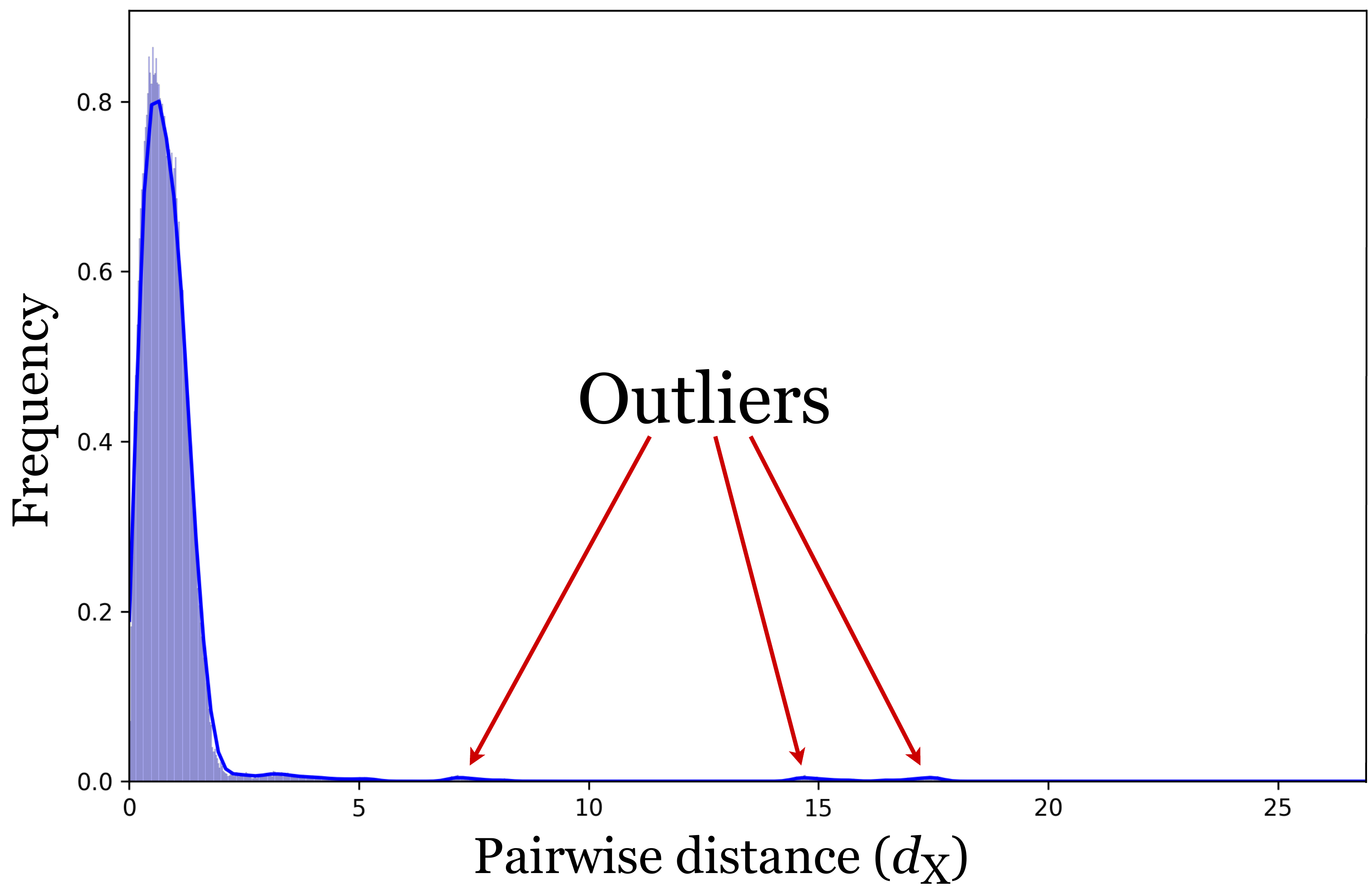}
        \caption{Cauchy noise}
        \label{fig:cauchy_pair}
    \end{subfigure}
    \hspace{2pt}
    \begin{subfigure}{0.48\linewidth}
      \includegraphics[width=\linewidth]{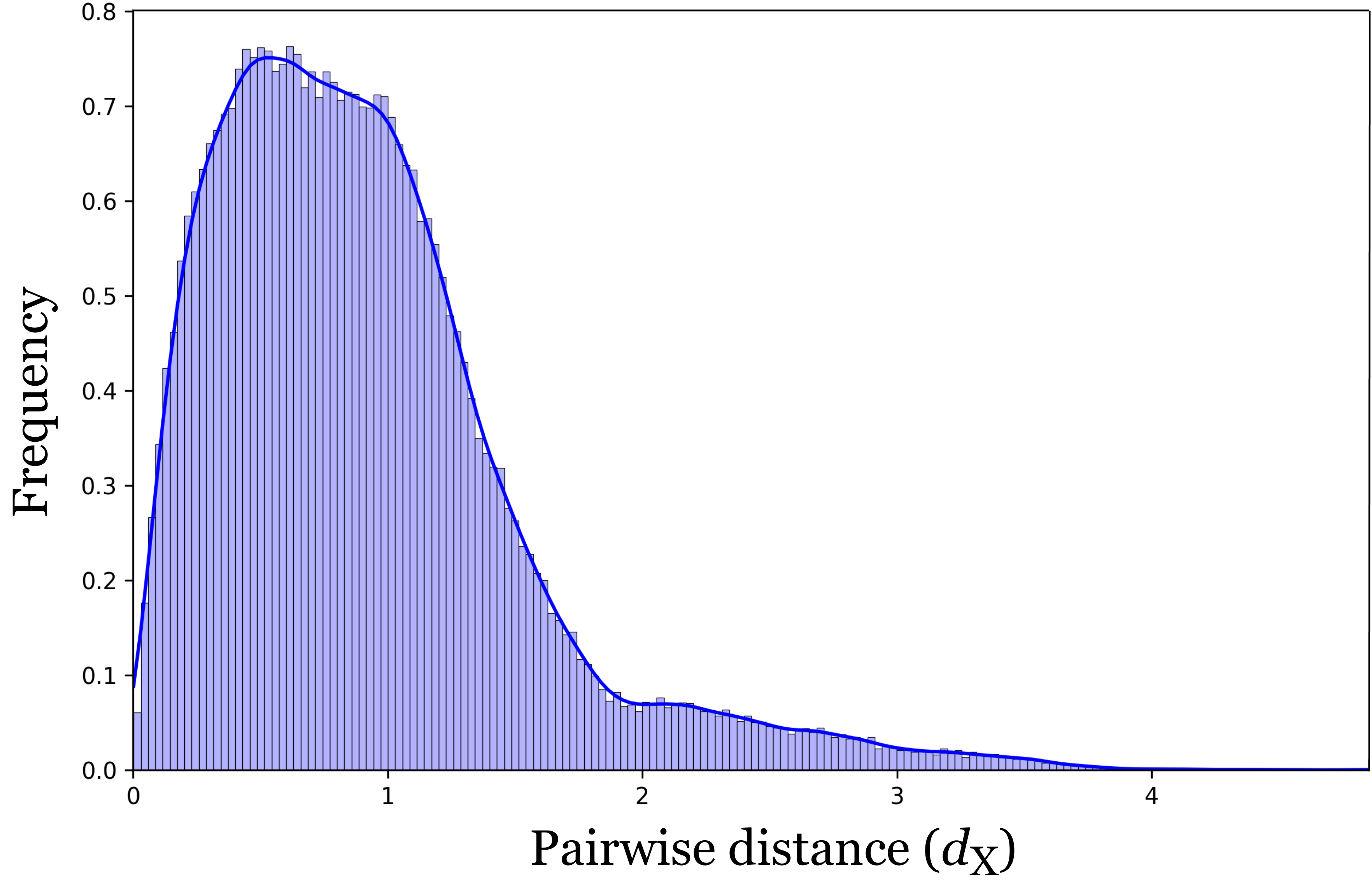}
      \caption{Gaussian noise}
      \label{fig:gauss_pair}
    \end{subfigure}
    \caption{(a) Empirical density of pairwise distances $d_{X}(x,x')$ in the source shape (cat) with 40 outliers.} 
    \label{fig:pairwise_cat}
\end{figure}

This motivates a rather finer thresholding technique we call local robustification. Instead of trimming extreme distortion values, we shift our focus to individual pairwise distances $d_{X}$ and $d_{Y}$. 

\subsection{LR translation using GcGAN and UNIT} \label{Ablation_LR}

For images, contamination regimes become more complex than point data. In case there are clear outliers ($n\epsilon$ out of $n$) from other image sources (e.g. MNIST samples in a pool of facial images \citep{nietert2023robust}), the discriminators still can distinguish between them. In contrast, if all images have noise injected in them in a predefined proportion ($\alpha$), the resultant generations get much more affected. This is mainly due to the discriminators misclassifying them. Observe that if $\alpha=1$, the noisy image from the second regime becomes an outlier from the first case. We maintain a flexible framework, striking a balance between the two. 

For the experiment under GcGAN, first, we create a copy of the original image tensor to ensure its data remains unchanged. Next, we generate standard Gaussian noise with the same shape as the tensor, scaling it by $\alpha = 0.2$ to control the noise intensity. Finally, we add this scaled noise to the copied tensor to produce a noisy version of the original.

In case of UNIT, we inject random bright pixels following a Gaussian law into the random images based on the image ratio. The wrapper supports datasets with or without labels by handling tuple or single image outputs, and seamlessly integrates into existing data loaders. \\

\textbf{Ablation study: Parameter selection}
Our experimental framework begins with the analysis of the generator and discriminator loss propagation (Figure \ref{fig:abl_RGc_loss}) under varying values of $\epsilon$. While smaller values imply weaker protection against noise, larger values tend to degrade discrimination performance (Figure \ref{fig:dis}). Coupled with the quantitative scrutiny, we introduce an additional experiment based on the qualitative outcomes as in Figure \ref{fig:abl_RGc}. Here, we observe increased meddling in background color and oversaturation as $\epsilon$ increases. On the other hand, a lower parameter value increases the likelihood of noise being manifested in the resulting images. Based on a trade-off between both examinations, we infer that $\epsilon=  0.5$ consistently demonstrates balanced performance during training, prompting us to an optimal value.
\begin{figure}[H]
    \centering
    \begin{subfigure}{0.49\linewidth}
        \includegraphics[width=\linewidth]{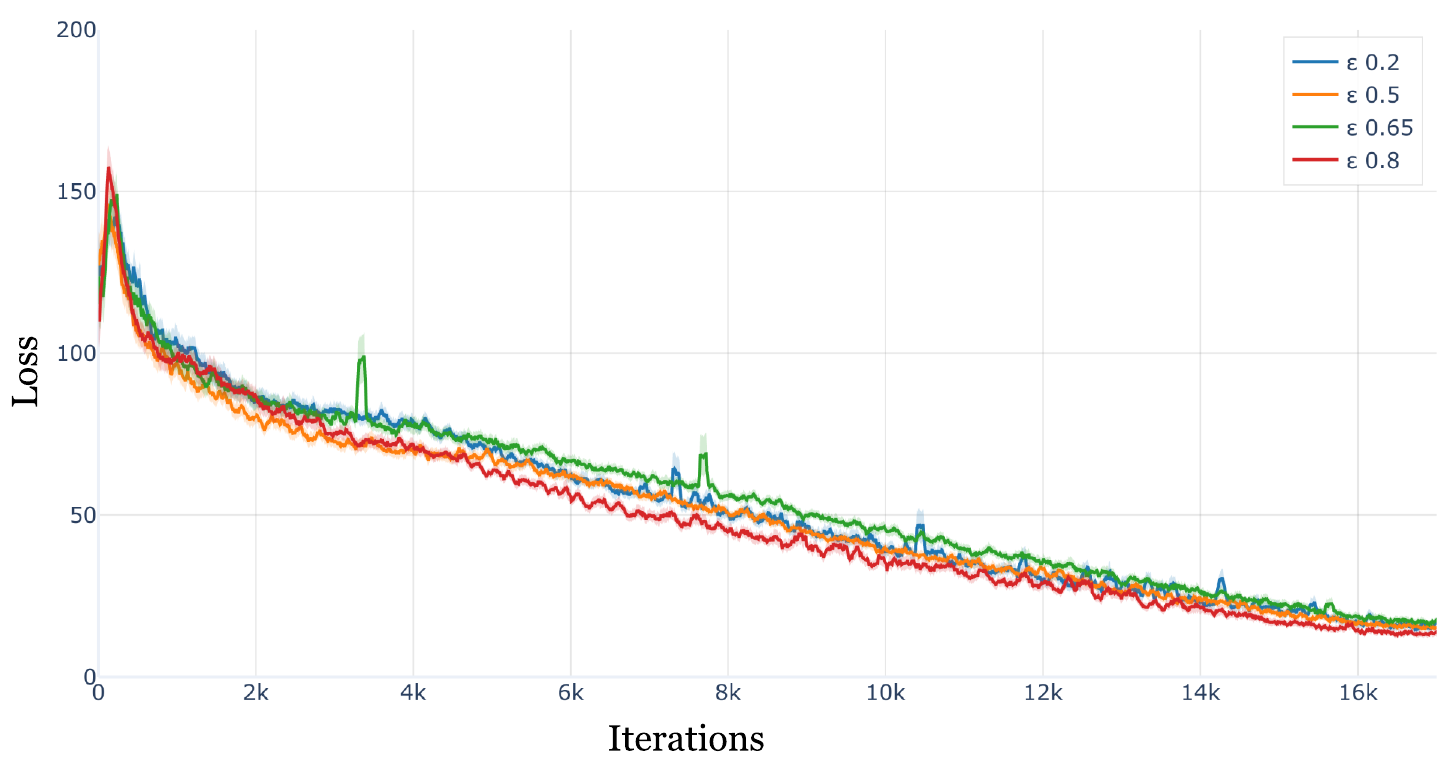}
        \caption{Convergence of loss}
        \label{fig:gen}
    \end{subfigure}
    \hspace{2pt}
    \begin{subfigure}{0.49\linewidth}
      \includegraphics[width=\linewidth]{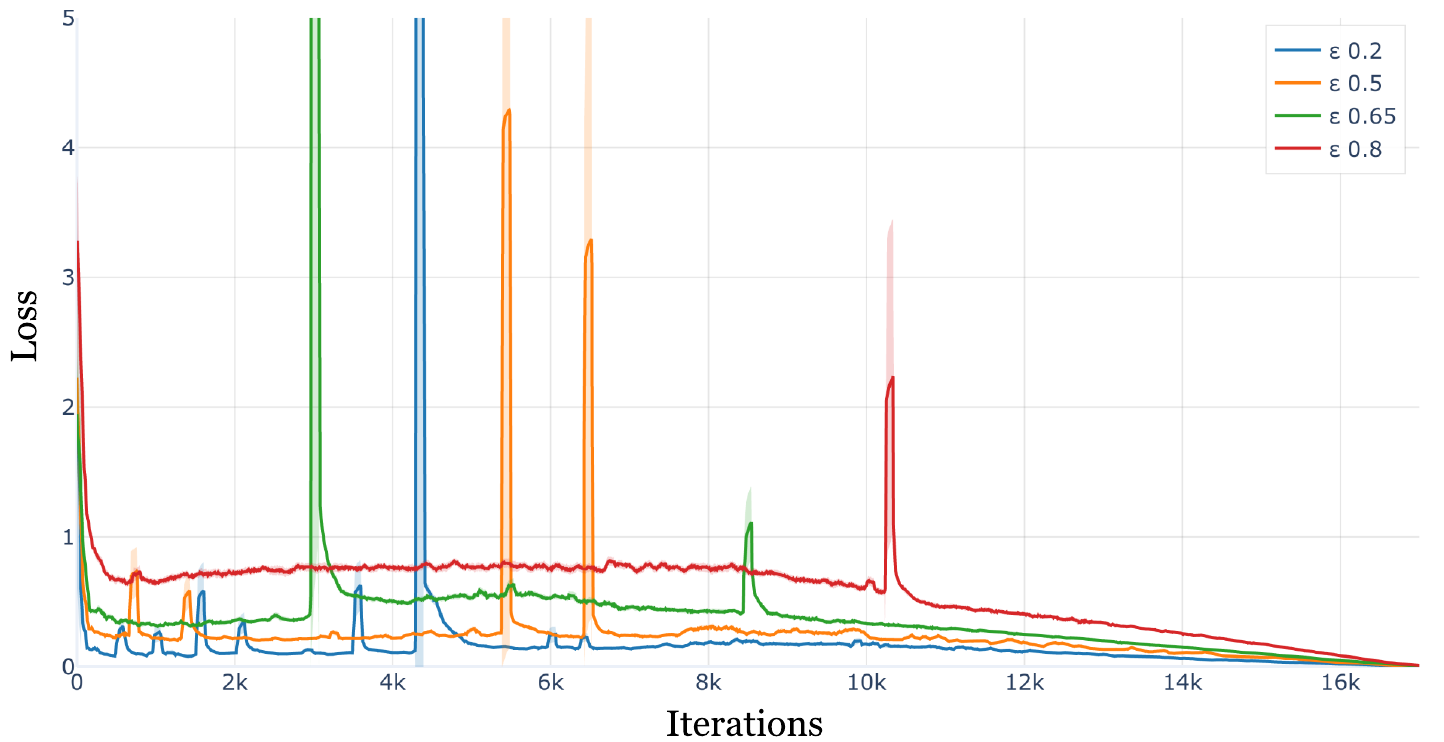}
      \caption{Discriminator performance}
      \label{fig:dis}
    \end{subfigure}
    \caption{(a) Realized robust GcGAN loss for varying $\epsilon$ under Gaussian noise ($\alpha = 0.2$). There is no perceptible difference between $\epsilon$ values in this regard. (b) The discriminators also eventually perform similarly.} 
    \label{fig:abl_RGc_loss}
\end{figure}

\begin{figure}[H]
    \centering
    \includegraphics[width=\linewidth]{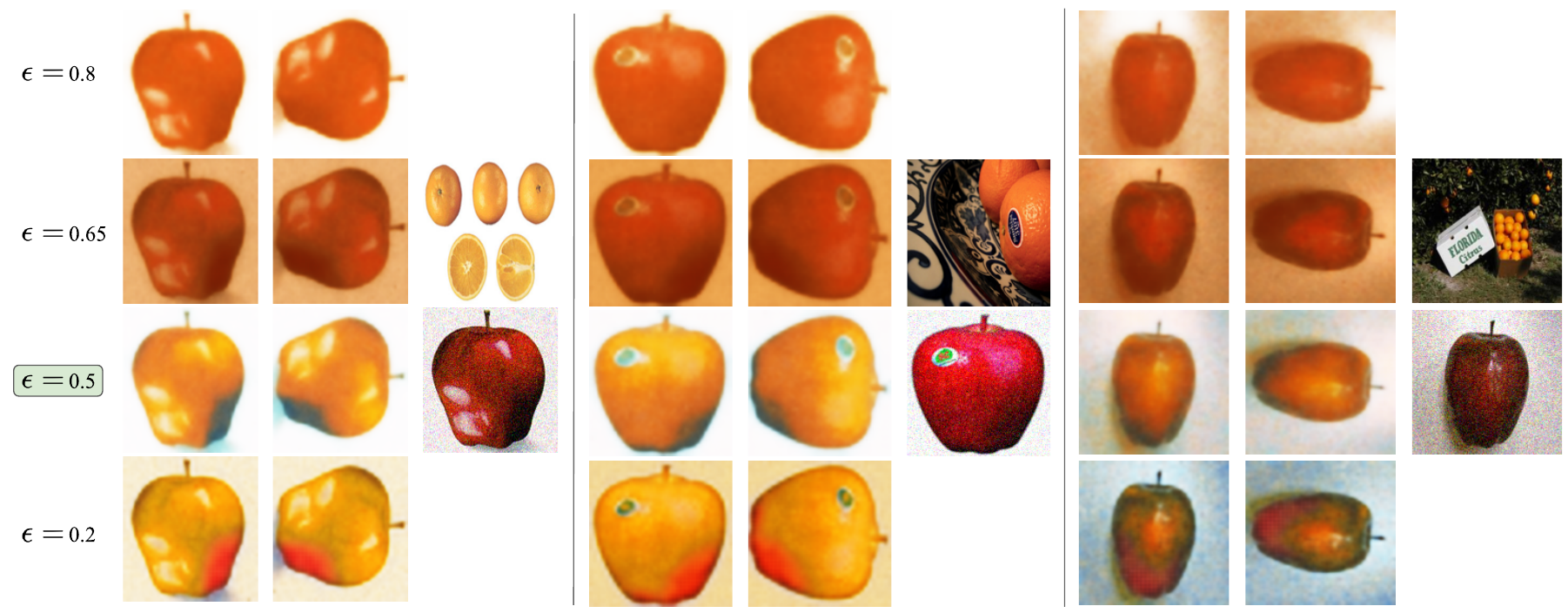}
    \caption{Style transfer performance of robust GcGAN for varying $\epsilon$. While small values ($\epsilon = 0.2$) produce inadequate denoising, high values ($\epsilon = 0.8$) distort the style and oversaturate images.}
    \label{fig:abl_RGc}
\end{figure}










\vskip 0.2in

\end{document}